\documentclass{article} 
\usepackage{iclr2021_conference,times}

\usepackage{amsmath,amsfonts,bm}









\def\eqref#1{equation~\ref{#1}}









\def\1{\bm{1}}










\DeclareMathAlphabet{\mathsfit}{\encodingdefault}{\sfdefault}{m}{sl}
\SetMathAlphabet{\mathsfit}{bold}{\encodingdefault}{\sfdefault}{bx}{n}













\usepackage{url}
\usepackage{xcolor}
\usepackage{graphicx}
\usepackage{amsmath}
\usepackage{amsfonts}
\usepackage{booktabs}
\usepackage{siunitx}
\usepackage{multirow}
\usepackage{appendix}
\usepackage{algorithm}
\usepackage{amsthm}
\usepackage{amssymb}
\usepackage{subfigure}
\usepackage{float}
\usepackage{wrapfig}
\usepackage{booktabs} 
\usepackage{nicefrac} 
\usepackage{algpseudocode}
\usepackage{textcomp}

\usepackage{todonotes}
\usepackage{footnote}
\makesavenoteenv{table}

\definecolor{citecolor}{RGB}{34, 149, 34}
\usepackage[colorlinks, citecolor=citecolor, final]{hyperref}
\newtheorem{theorem}{Theorem}[section]
\newtheorem*{*theorem}{Theorem}
\newtheorem{definition}[theorem]{Definition}

\ExplSyntaxOn
\NewDocumentCommand{\longdash}{ O{2} }
 {
  --\prg_replicate:nn { #1 - 1 } { \negthinspace -- }
 }
\ExplSyntaxOff

%

\newcommand{\Variational}{\textit{Variational Order Inference }}
\iclrfinalcopy
\title{Discovering Non-monotonic Autoregressive Orderings with Variational Inference}
\author{Xuanlin Li$^1$\thanks{Authors contributed equally.}\;, \;Brandon Trabucco$^1$\footnotemark[1]\;, \;Dong Huk Park$^1$, \;Michael Luo$^1$, \;Sheng Shen$^1$,\\
\textbf{Trevor Darrell$^1$, \;Yang Gao$^2$}\\
$^1$University of California, Berkeley, $^2$Tsinghua University\\
\small\texttt{\{xuanlinli17, btrabucco\}@berkeley.edu}
}

\begin{document}
\maketitle
\vspace{-1em}
\begin{abstract}
The predominant approach for language modeling is to process sequences from left to right, but this eliminates a source of information: the order by which the sequence was generated. 
One strategy to recover this information is to decode both the \textit{content} and \textit{ordering} of tokens. 
Existing approaches supervise content and ordering by designing problem-specific loss functions and pre-training with an ordering pre-selected. 
Other recent works use iterative search to discover problem-specific orderings for training, but suffer from high time complexity and cannot be efficiently parallelized.
We address these limitations with an unsupervised parallelizable learner that discovers high-quality generation orders purely from training data---no domain knowledge required. 
The learner contains an encoder network and decoder language model that perform variational inference with autoregressive orders (represented as permutation matrices) as latent variables. The corresponding ELBO is not differentiable, so we develop a practical algorithm for end-to-end optimization using policy gradients. We implement the encoder as a Transformer with non-causal attention that outputs permutations in one forward pass. Permutations then serve as target generation orders for training an insertion-based Transformer language model.
Empirical results in language modeling tasks demonstrate that our method is context-aware and discovers orderings that are competitive with or even better than fixed orders.
\end{abstract}

\section{Introduction}

\begin{figure}[h]
    \vspace{-.4cm}
    \centering
    \includegraphics[width=0.48\textwidth,page=1,trim={4cm 3cm 4cm 3cm},clip]{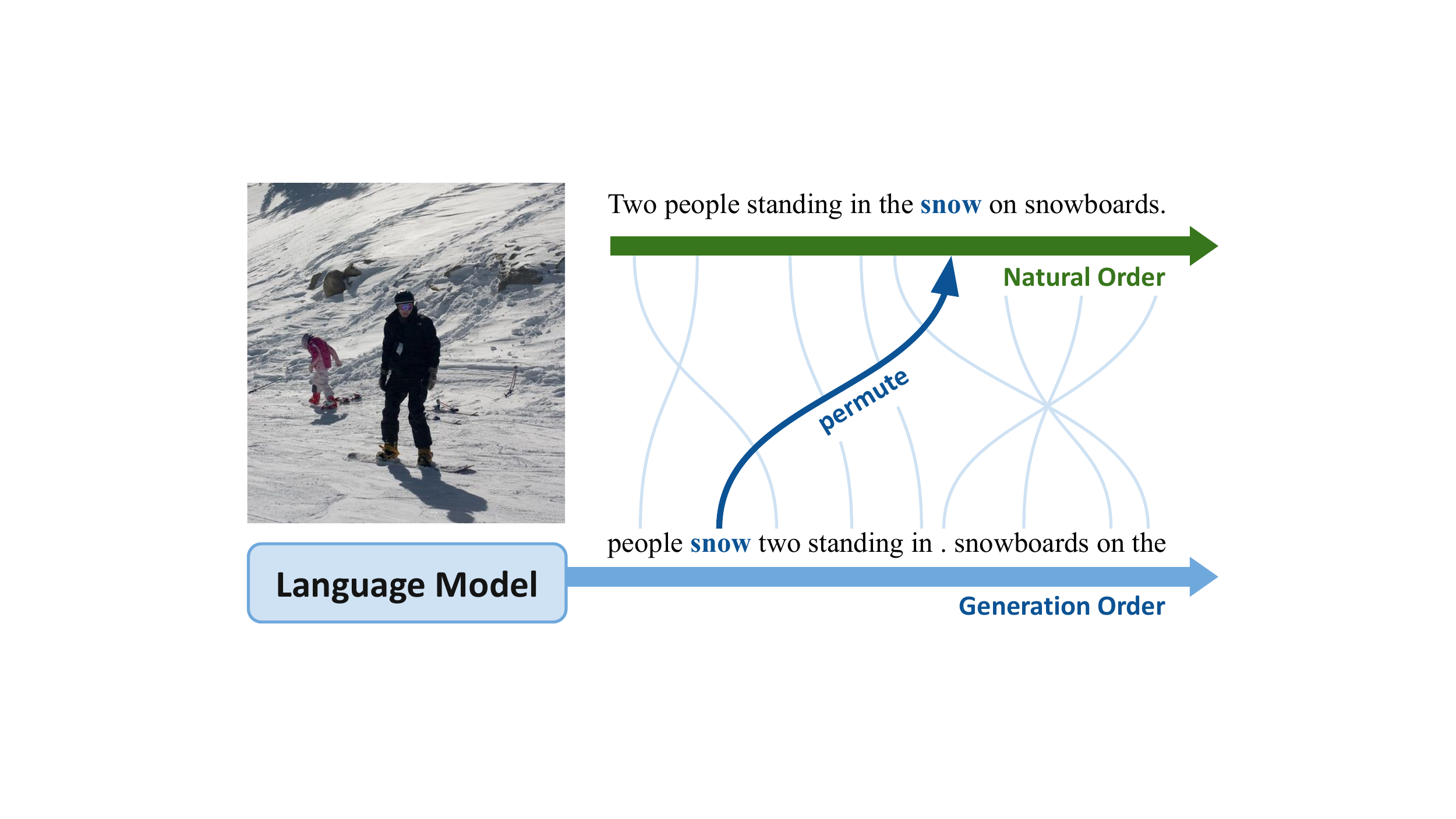}
    \includegraphics[width=0.51\textwidth,page=2,trim={1.5cm 1.5cm 1.5cm 2cm},clip]{figures/high_level_figures.pdf}    
    \vspace{-.4cm}
    \caption{Left: our language model, shown in light blue, learns to decode in non-monotonic generation orders, rather than pre-determined orders, such as left-to-right. Right: during training, we leverage an encoder in a variational inference pipeline to parameterize a latent distribution over the generation orders for the autoregressive language model. In this way, training can be done in just one forward / backward pass per batch, unlike previous approaches in non-monotonic sequence modeling that require multiple forward passes per batch to determine a generation order.}
    \vspace{-0.15cm}
    \label{fig:high_lvl_summary}
\end{figure}

Autoregressive models have a rich history. Early papers that studied autoregressive models, such as \citep{DBLP:journals/jmlr/UriaCGML16} and \citep{pmlr-v37-germain15}, showed an interest in designing algorithms that did not require a gold-standard autoregressive order to be known upfront by researchers. However, these papers were overshadowed by developments in natural language processing that demonstrated the power of the left-to-right autoregressive order \citep{DBLP:conf/emnlp/ChoMGBBSB14, DBLP:conf/nips/SutskeverVL14}. Since then, the left-to-right autoregressive order has been essential for application domains such as image captioning \citep{DBLP:conf/cvpr/VinyalsTBE15, DBLP:conf/icml/XuBKCCSZB15}, machine translation \citep{DBLP:conf/emnlp/LuongPM15, DBLP:journals/corr/BahdanauCB14} and distant fields like image synthesis \citep{DBLP:conf/nips/OordKEKVG16}. However, interest in non left-to-right autoregressive orders is resurfacing \citep{DBLP:conf/icml/WelleckBDC19, DBLP:conf/icml/SternCKU19}, and evidence \citep{DBLP:journals/corr/VinyalsBK15, gu-etal-2018-top, DBLP:conf/iclr/Alvarez-MelisJ17} suggests adaptive orders may produce more accurate autoregressive models. These positive results make designing algorithms that can leverage adaptive orders an important research domain.

\begin{figure}
    \centering
    \vspace{-0.1cm}
    \includegraphics[width=0.92\textwidth,page=3,trim={.7cm .3cm .7cm .5cm},clip]{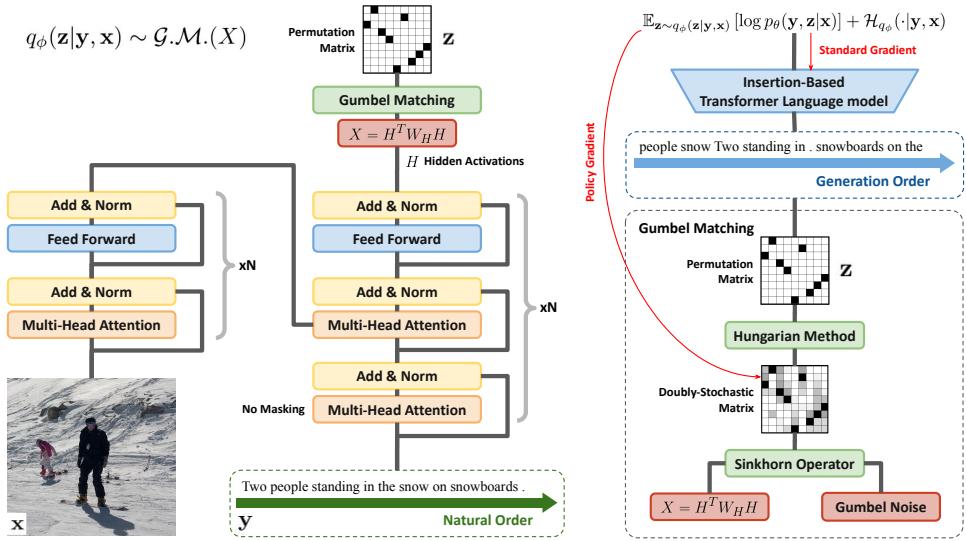}
    \vspace{-0.35cm}
    \caption{Architecture for sequence-modeling tasks. The goal is to predict the target sequence $\mathbf{y}$ given the source sequence $\mathbf{x}$, with latent generation orders $\mathbf{z}$ represented as permutation matrices. We use a Transformer without causal masking to serve as the encoder in \Variational (VOI), which samples orderings in a single forward pass. These orderings are used to train an insertion-based Transformer language model, which serves as the VOI decoder. As the objective is non-differentiable over permutation matrices, policy gradient algorithms (e.g. Reinforce \citep{NIPS1999_1713}, PPO \citep{schulman2017proximal}) are applied to update the permutation-generating encoder.}
    \vspace{-1.00cm}
    \label{fig:voi_transformer}
\end{figure}

Inferring autoregressive orderings in a data-driven manner is challenging. Modern benchmarks for machine translation \citep{Stahlberg2019NeuralMT} and other tasks \citep{oda2015ase:pseudogen1} are not labelled with gold-standard orders, and left-to-right seems to be the default. This could be explained if domain-independent methodology for identifying \textit{high-quality} orders is an open question. Certain approaches \citep{DBLP:conf/icml/SternCKU19, DBLP:conf/icml/WelleckBDC19, DBLP:journals/corr/abs-2001-05540} use hand-designed loss functions to promote a \textit{genre} of orders---such as balanced binary trees. These loss functions incorporate certain domain-assumptions: for example, they assume the balanced binary tree order will not disrupt learning. Learning disruption is an important consideration, because prior work shows that poor orders may prohibitively slow learning \citep{DBLP:conf/icml/ChenMRA18}. Future approaches to inferring autoregressive orders should withhold domain knowledge, to promote their generalization.

To our best knowledge, we propose the first domain-independent unsupervised learner that discovers high-quality autoregressive orders through fully-parallelizable end-to-end training without domain-specific tuning. We provide three main contributions that stabilize this learner. First, we propose an encoder architecture that conditions on training examples to output autoregressive orders represented as permutation matrices using techniques in combinatorial optimization. Second, we propose \Variational (VOI) that learns an approximate posterior over autoregressive orders. Finally, we develop a practical algorithm for solving the resulting non-differentiable ELBO end-to-end with policy gradients. A high-level summary of our approach is presented in Figure \ref{fig:high_lvl_summary}, and a detailed architecture diagram for sequence modeling tasks is presented in Figure \ref{fig:voi_transformer}.

Empirical results with our solution on various sequence modeling tasks suggest that with similar hyperparameters, our algorithm is capable of recovering autoregressive orders that are even better than fixed orders. Case studies suggest that our learned orders depend adaptively on content, and resemble a type of \textit{best-first} generation order, which prioritizes salient objects / phrases and deprioritizes auxillary tokens (see Fig. \ref{fig:example_orders_intro}). Our experimental framework is available at \href{https://github.com/xuanlinli17/autoregressive_inference}{this link}.

\begin{figure}
    \centering
    \includegraphics[width=0.9\linewidth, trim=3.0cm 1.75cm 1.5cm 1.5cm, clip]{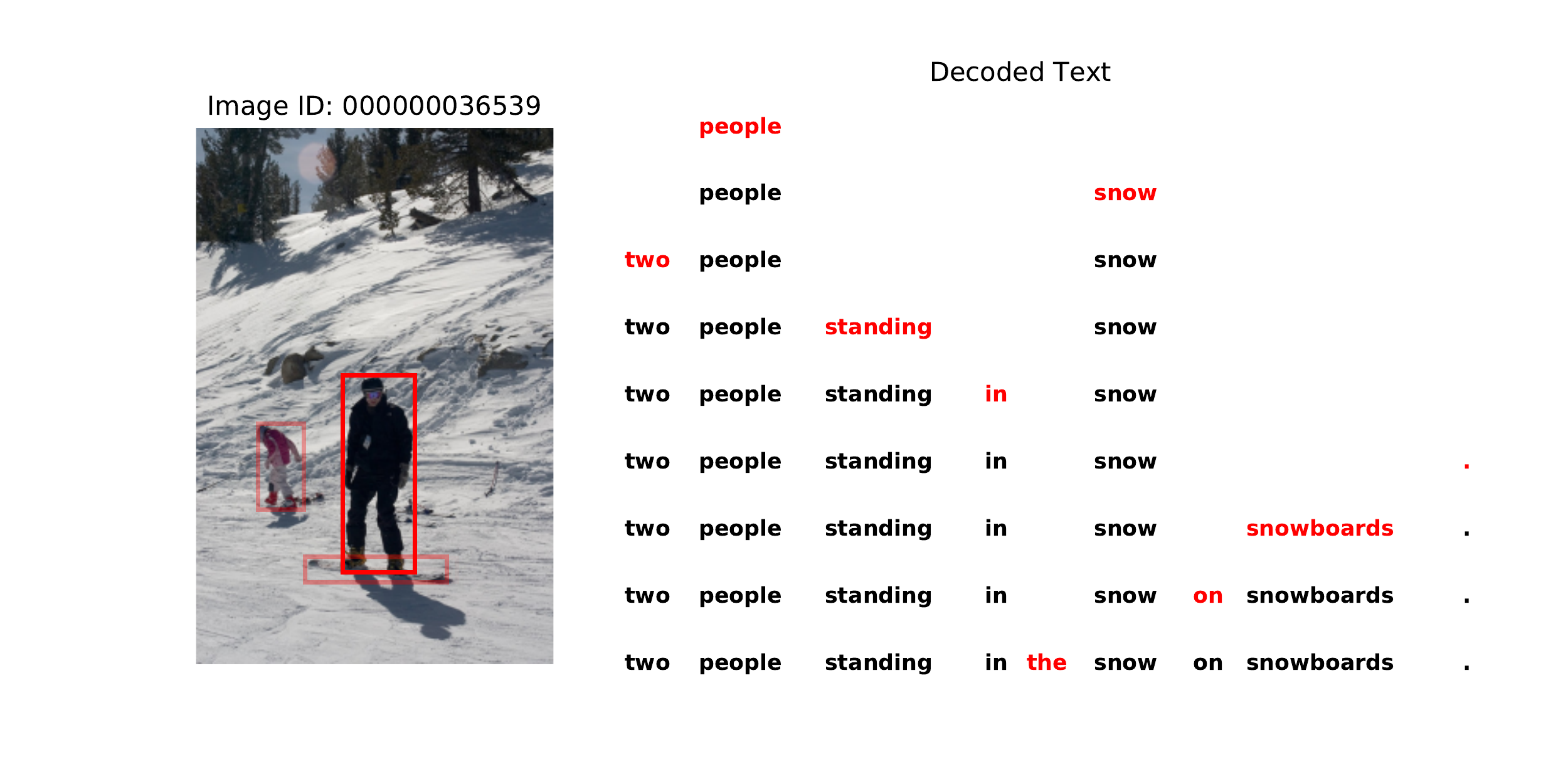}
    \includegraphics[width=\linewidth, trim=0 1.8cm 1.5cm 0, clip]{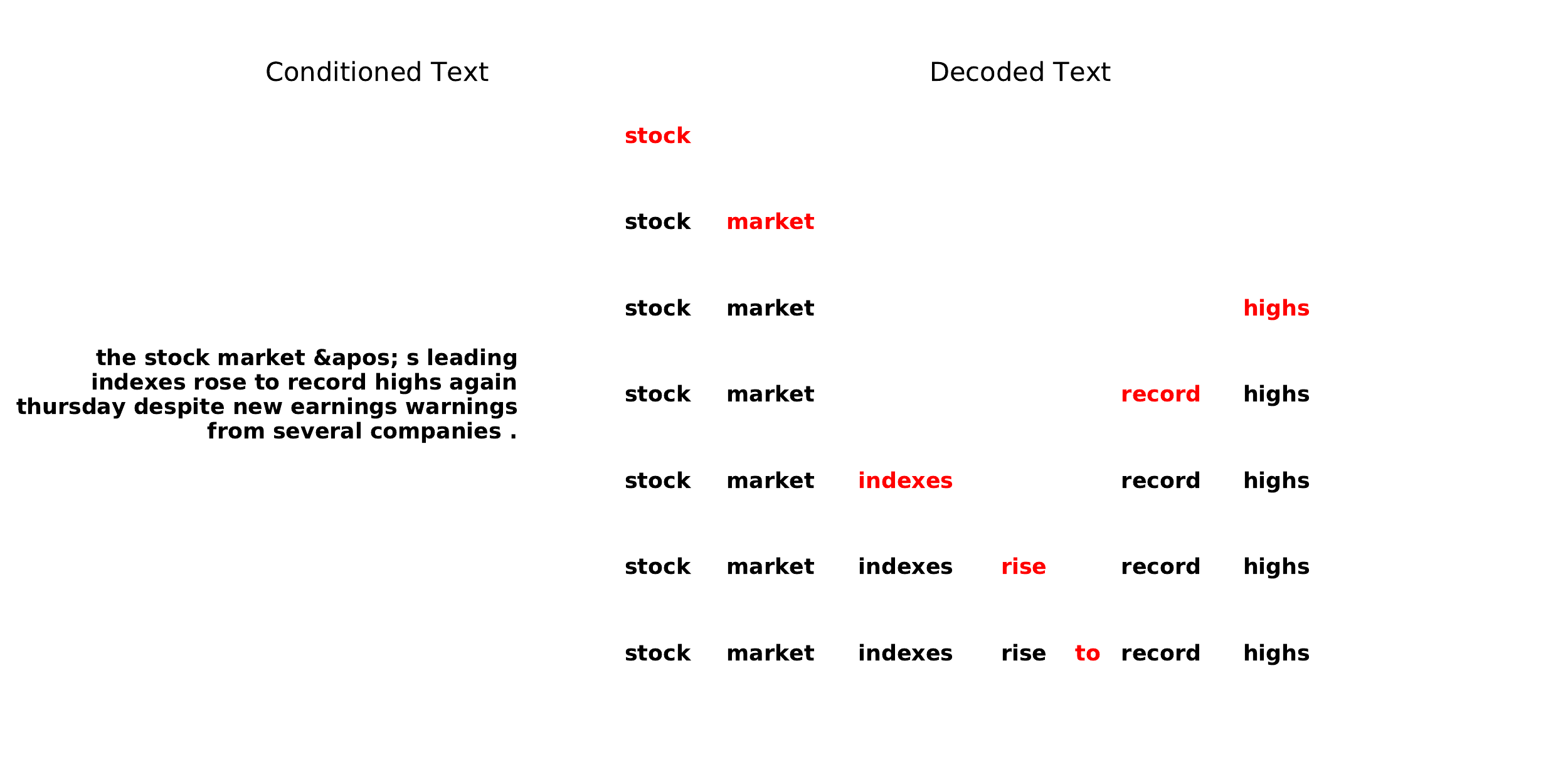}
    \includegraphics[width=\linewidth, trim=0 1.8cm 1.5cm 0, clip]{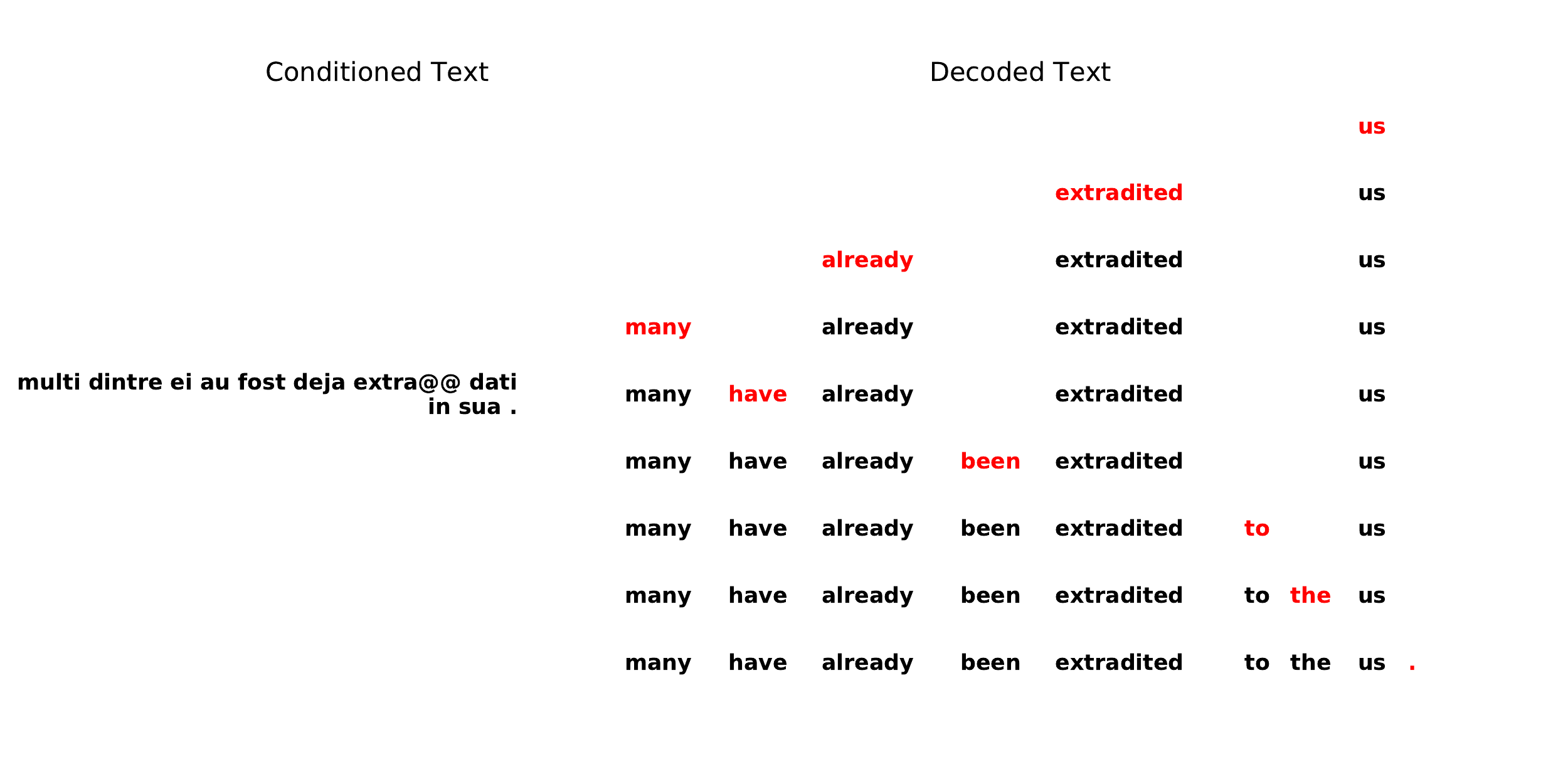}    
    \vspace{-0.2cm}
    \caption{We present example sequence generations on various conditional sequence generation tasks (image captioning, text summarization, and machine translation) from the decoder insertion-based language model (top right of Fig. \ref{fig:voi_transformer}) in \Variational. The orderings tend to prioritize descriptive phrases, such as focal objects in conditioned images (e.g. ``people'', ``snow'') and salient phrases in conditioned sentences (e.g. ``stock market'', ``U.S.'', ``extradicted''), while putting modifier tokens (e.g. ``to'', ``on'', ``the'') last, resembling a \textit{best-first} generation order found with unsupervised learning. }
    \label{fig:example_orders_intro}
    \vspace{-0.2cm}
\end{figure}


\vspace{-0.1cm}
\section{Related Works}
\textbf{Autoregressive Models} Autoregressive models decompose the generation of a high dimensional probability distribution by generating one dimension at a time, with a predefined order. Combined with high capacity neural networks, this approach to modeling complex distributions has been very successful~\citep{sutskever2011generating,mikolov2012statistical}. Recent works have achieved great improvements with autoregressive models in many applications, including language modeling~\citep{radford2018improving, radford2019language,brown2020language}, machine translation~\citep{sutskever2014sequence} and image captioning~\citep{karpathy2015deep}. Most previous works on autoregressive models regress to an ordering selected by designers, with left-to-right emerging as the primary choice. In contrast, our method is capable of learning arbitrary orderings conditioned on data and is more flexible. 

\textbf{Non-Monotonic Autoregressive Orderings} 
\citet{ford2018importance} shows that a sub-optimal ordering can severely limit the viability of a language model and propose to first generate a partially filled sentence template and then fill in missing tokens. 
Previous works have also studied bidirectional decoding \citep{sun2017bidirectional,zhou2019synchronous,mehri2018middle} and syntax trees based decoding ~\citep{yamada2001syntax,charniak2003syntax,dyer2016recurrent,aharoni2017towards,wang2018tree} in the natural language setting. 
However, all of the works mentioned above do not learn the orderings and instead opt to use heuristics to define them. \cite{chan2019kermit} performs language modeling according to a known prior, such as balanced binary tree, and does not allow arbitrary sequence generation orders.
\citet{welleck2019non} proposes to use a tree-based recursive generation method to learn arbitrary generation orders. However, their performance lags behind that of left-to-right. 
\citet{gu2019insertion} proposes Transformer-InDIGO to allow non-monotonic sequence generation by first pretraining with pre-defined orderings, such as left-to-right, then fine-tuning use Searched Adaptive Order (SAO) to find alternative orderings. They report that without pretraining, the learned orders degenerate. In addition, they perform beam search to acquire plausible orderings, which cannot be efficiently parallelized across different time-steps. \citet{DBLP:conf/nips/EmelianenkoVS19} proposes an alternative to SAO, but suffers from similar poor time complexity. In contrast, our method learns high-quality orderings directly from data under fully-parallelizable end-to-end training.

\textbf{Variational Methods} Our method optimizes the evidence lower bound, or ELBO in short. ELBO is a quantity that is widely used as an optimization proxy in the machine learning literature, where the exact quantity is hard to compute or optimize. Variational methods have achieved great success in machine learning, such as VAE~\citep{kingma2013auto} and $\beta$-VAE~\citep{DBLP:conf/iclr/HigginsMPBGBML17}.

\textbf{Combinatorial Optimization} 
Recent works have studied gradient-based optimization in the combinatorial space of permutations \citep{Mena2018sinkhorn, plackett_luce1903, stick_breaking}. These works have been applied in tasks such as number sorting, jigsaw puzzle solving, and neural signal identification in worms. To our best knowledge, we are the first to build on these techniques to automatically discover autoregressive orderings in vision and language datasets. 

\section{Preliminaries}

The goal of autoregressive sequence modelling is to model an ordered sequence of target values $\mathbf y = \left( y_{1}, y_{2} \hdots, y_{n} \right) : y_{i} \in \mathbb{R}$, possibly conditioned on an ordered sequence of source values $\mathbf x = \left( x_{1}, x_{2} \hdots, x_{m} \right) : x_{i} \in \mathbb{R}$, where $(\mathbf x, \mathbf y)$ is sampled from the dataset $\mathcal{D}$. In the context of language modeling, $x_i,y_i \in \mathbb{N}$ as the token distribution is categorical.

Inspired by \cite{DBLP:conf/nips/VinyalsFJ15} and \cite{gu2019insertion}, we formulate the generation process of $\mathbf y$ as a $2n$ step process, where at time step $2t-1$ we generate a value, and at timestep $2t$ we select a not-yet-chosen position in $\{1,2,\cdots,n\}$ to insert the value. Thus, we introduce the latent sequence variable $\mathbf z = \left( z_{1}, z_{2} \hdots, z_{n} \right) : \mathbf z \in S_{n}$, where $S_{n}$ is the set of one-dimensional permutations of $\{1,2,\cdots,n\}$, and $z_{t}$ is defined as the absolute position of the value generated at time step $2t-1$ in the naturally ordered $\mathbf y$. Then $p( \mathbf y, \mathbf z | \mathbf x)$ denotes the probability of generating $\mathbf y$ in the ordering of $\mathbf z$ given the source sequence $\mathbf x$. We can thus factorize $p( \mathbf y, \mathbf z | \mathbf x)$ using the chain rule:
\begin{gather}\label{eq:joint}
    p( \mathbf y, \mathbf z | \mathbf x) = p ( y_{z_{1}} | \mathbf x) p ( z_{1} | y_{z_{1}}, \mathbf x) \prod_{i = 2}^{n} p ( y_{z_{i}} | z_{<i}, y_{z_{<i}} , \mathbf x) p ( z_{i} | z_{<i}, y_{z_{<=i}}, \mathbf x)
\end{gather}
For example, $p(y_1, y_2, z_1=2, z_2=1| \mathbf x)=p(y_2| \mathbf x)p(z_1|y_2, \mathbf x)p(y_1|z_1, y_2, \mathbf x)p(z_2|y_1,z_1,y_2, \mathbf x)$ is defined as the probability of generating $y_2$ in the first step, then inserting $y_2$ into absolute position $2$, then generating $y_1$, and finally inserting $y_1$ into absolute position $1$.

Note that in practice, the length of $\mathbf y$ is usually varied. Therefore, we do not first create a fixed-length sequence of blanks and then replace the blanks with actual values. Instead, we dynamically insert a new value at a position relative to the previous values. One common approach to predict such relative position is Pointer Network \citep{DBLP:conf/nips/VinyalsFJ15}. In other words, at timestep $t$, we insert the value at position $r_t$ relative to the previous generated values. Here, for any $\mathbf z \in S_n$, $\mathbf r = (r_1, r_2, \dots, r_n)$ is constructed such that there is a bijection between $S_n$ and the set of all constructed $\mathbf r$. Due to such bijection, we can use $\mathbf z$ and $\mathbf r$ interchangeably. We will use $\mathbf z$ throughout the paper.
\section{Variational Order Inference (VOI)}
\label{sec:vans}
\begin{figure}[t]
    \centering
    \vspace{-0.1cm}
    \includegraphics[width=0.90\textwidth]{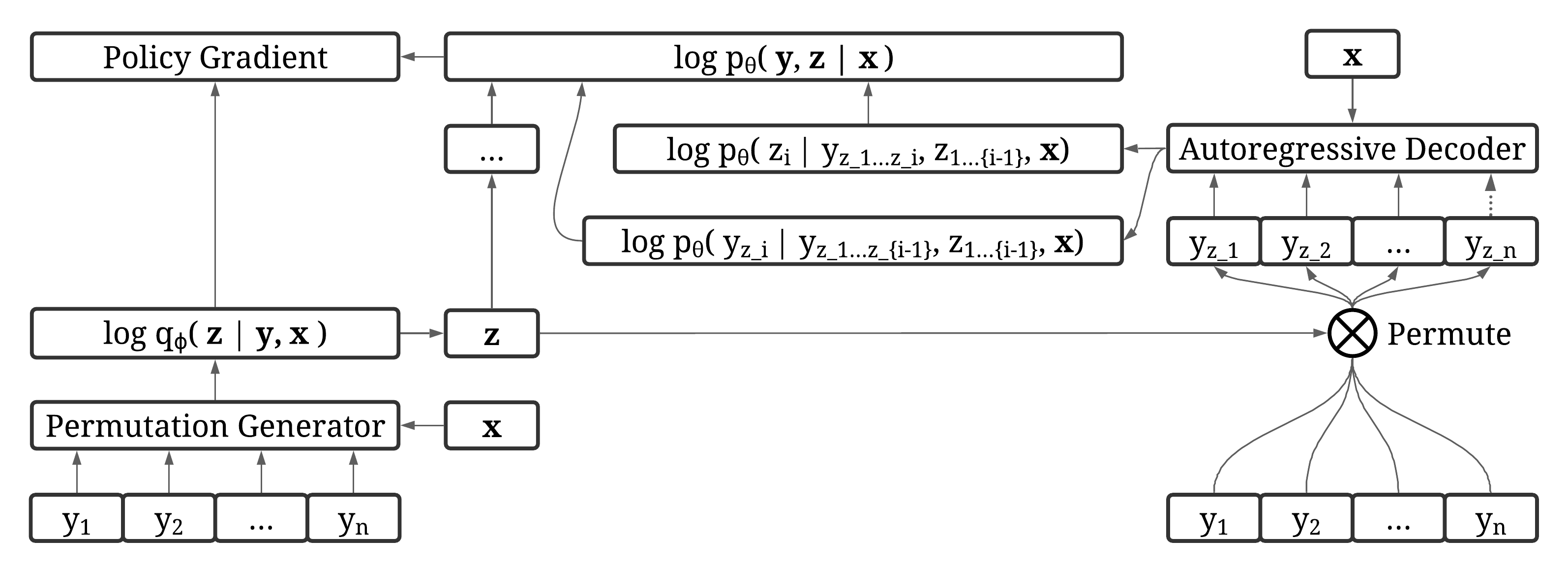}
    \vspace{-0.35cm}
    \caption{Computational diagram for the encoder (left) and decoder (right) that compose Variational Order Inference. We optimize a lower bound on the standard maximum likelihood objective.}
    \vspace{-0.3cm}
    \label{fig:voi_diagram}
\end{figure}

Starting from just the original data $\mathbf y$ in natural order, we can use variational inference to create an objective (\ref{eq:elbo}) that allows us to recover latent order $\mathbf z$, parametrized by two neural networks $\theta$ and $\phi$. The encoder network $\phi$ samples autoregressive orders given the ground truth data, which the decoder network $\theta$ uses to recover $\mathbf y$. More specifically, $\phi$ is a non-autoregressive network (permutation generator in Fig. \ref{fig:voi_diagram}) that takes in the source sequence $\mathbf x$ and the entire ground truth target sequence $\mathbf y$ and outputs latent order $\mathbf z$ in a single forward pass. $\theta$ is an autoregressive network (autoregressive decoder in Fig. \ref{fig:voi_diagram}) that takes in $\mathbf x$ and predicts both the target sequence $\mathbf y$ and the ordering $\mathbf z$ through the factorization in Equation (\ref{eq:joint}). We name this process \Variational (VOI).
\begin{gather}\begin{split}\label{eq:elbo}
    \mathbb{E}_{(\mathbf x,\mathbf y) \sim \mathcal{D}} \left[ \log p_\theta(\mathbf y| \mathbf x) \right] & =  \mathbb{E}_{(\mathbf x,\mathbf y) \sim \mathcal{D}} \left[ \log \mathbb{E}_{\mathbf{z} \sim q_\phi(\mathbf{z}|\mathbf{y}, \mathbf x)} \left[  \frac{p_\theta(\mathbf y, \mathbf z| \mathbf x)}{q_\phi(\mathbf{z}|\mathbf{y}, \mathbf x)} \right]\right]\\
    & \geq \mathbb{E}_{(\mathbf x,\mathbf y) \sim \mathcal{D}} \left[ \mathbb{E}_{\mathbf{z} \sim q_\phi(\mathbf{z}|\mathbf{y}, \mathbf x)} \left[ \log p_\theta(\mathbf y, \mathbf z | \mathbf x) \right] + \mathcal{H}_{q_\phi}(\cdot | \mathbf y, \mathbf x ) \right]
\end{split}\end{gather}
Here, $\mathcal{H}_{q_\phi}$ is the entropy term. In practice, a closed form for $\mathcal{H}_{q_\phi}$ usually cannot be obtained, so an approximation is needed. During training, we train $\phi$ and $\theta$ jointly to maximize the ELBO in (\ref{eq:elbo}). During testing, we only keep the decoder $\theta$.

To optimize the decoder network $\theta$ in (\ref{eq:elbo}), for each $\mathbf y$, we first sample $K$ latents $\mathbf z_1,\mathbf z_2,\dots,\mathbf z_K$ from $q_\phi(\cdot | \mathbf y, \mathbf x)$. We then update $\theta$ using the Monte-Carlo gradient estimate $\mathbb{E}_{\mathbf y \sim \mathcal{D}} \left[ \frac{1}{K} \sum_{i=1}^{K} \nabla_{\theta} \log p_\theta(\mathbf y, \mathbf z_i| \mathbf x) \right]$.

Optimizing the encoder network $\phi$ is tricky. Since $\mathbf z$ is a discrete latent variable, the gradient from $\log p_\theta(\mathbf y, \mathbf z)$ does not flow through $\mathbf z$. Thus, we formulate (\ref{eq:elbo}) in a reinforcement learning setting with a one-step Markov Decision Process $(\mathcal{S}, \mathcal{A}, \mathcal{R})$. Under our setting, the state space $\mathcal{S} = \mathcal{D}$; for each state $(\mathbf x, \mathbf y) \in \mathcal{D}$, the action space $\mathcal{A}_{(\mathbf x, \mathbf y)} = S_{\textrm{length}(\mathbf{y})}$ with entropy term $\mathcal{H}_{q_\phi}(\cdot|\mathbf y, \mathbf x)$; the reward function $\mathcal{R}((\mathbf x, \mathbf y), \mathbf z \in S_{\textrm{length}(\mathbf{y})}) = \log p_\theta(\mathbf y, \mathbf z | \mathbf x)$. We can then set the optimization objective $L(\phi)$ to be (\ref{eq:elbo}). In practice, we find that adding an entropy coefficient $\beta$ and gradually annealing it can speed up the convergence of decoder while still obtaining good autoregressive orders.

To compute $\nabla_\phi L(\phi)$, we derive the policy gradient with baseline formulation \citep{NIPS1999_1713}:
\begin{gather}\label{eq:L_phi}
\begin{split}
    \nabla_\phi L(\phi) = \mathbb{E}_{(\mathbf x,\mathbf y) \sim \mathcal{D}} \left[\mathbb{E}_{\mathbf{z} \sim q_\phi} \left[ \nabla_\phi \log q_\phi(\mathbf z | \mathbf y, \mathbf x) (\log p_\theta(\mathbf y, \mathbf z| \mathbf x) - b(\mathbf y, \mathbf x)) \right] + \beta \nabla_\phi \mathcal{H}_{q_\phi} \right] 
\end{split}
\end{gather}
where $b(\mathbf y, \mathbf x)$ is the baseline function independent of action $\mathbf z$. The reason we use a state-dependent baseline $b(\mathbf y, \mathbf x)$ instead of a global baseline $b$ is that the the length of $\mathbf y$ can have a wide range, causing significant reward scale difference. In particular, we set $b(\mathbf y, \mathbf x) = \mathbb{E}_{\mathbf{z} \sim q_\phi} \left[ \log p_\theta(\mathbf y, \mathbf z_i | \mathbf x) \right]$. If we sample $K\ge 2$ latents for each $\mathbf y$, then we can use its Monte-Carlo estimate $\frac{1}{K} \sum_{i=1}^K \log p_\theta(\mathbf y, \mathbf z_i | \mathbf x)$.

Since we use policy gradient to optimize $\phi$, we still need a closed form for the distribution $q_\phi(\mathbf{z}|\mathbf{y}, \mathbf{x})$. Before we proceed, we define $\mathcal{P}_{n\times n}$ as the set of $n \times n$ permutation matrices, where exactly one entry in each row and column is $1$ and all other entries are $0$; $\mathcal{B}_{n\times n}$ as the set of $n \times n$ doubly stochastic matrices, i.e. non-negative matrices whose sum of entries in each row and in each column equals $1$; $\mathbb R_{n\times n}^+$ as the set of non-negative $n \times n$ matrices. Note $\mathcal{P}_{n\times n} \subset \mathcal{B}_{n\times n} \subset \mathbb{R}_{n\times n}^+$.

To obtain $q_\phi(\mathbf{z}|\mathbf{y}, \mathbf{x})$, we first write $\mathbf z$ in two-dimensional form. For each $\mathbf z \in S_n$, let $f_{n}(\mathbf z) \in \mathcal{P}_{n\times n}$ be constructed such that $f_{n}(\mathbf z)_i = \mathrm{one\_hot}(z_i)$, where $f_{n}(\mathbf z)_i$ is the $i$-th row of $f_{n}(\mathbf z)$. Thus $f_{n}$ is a natural bijection from $S_n$ to $\mathcal{P}_{n\times n}$, and we can rewrite $q_\phi$ as a distribution over $\mathcal{P}_{n\times n}$ such that $q_\phi(f_{n}(\mathbf z) | \mathbf y, \mathbf x) = q_\phi(\mathbf z | \mathbf y, \mathbf x)$.

Next, we need to model the distribution of $q_\phi(\cdot | \mathbf y, \mathbf x)$. Inspired by \citep{Mena2018sinkhorn}, we model $q_\phi(\cdot | \mathbf y, \mathbf x)$ as a Gumbel-Matching distribution $\mathcal{G.M.}(X)$ over $\mathcal{P}_{n\times n}$, where $X=\phi(\mathbf y, \mathbf x) \in \mathbb{R}^{n\times n}$ is the output of $\phi$. Then for $P \in \mathcal{P}_{n\times n}$,
\begin{gather}
q_\phi(\mathbf z|\mathbf y, \mathbf x)=q_\phi(f_{n}^{-1}(P)|\mathbf y, \mathbf x)=q_\phi(P|\mathbf y, \mathbf x) \propto \exp{\langle X,P\rangle_F}    
\end{gather}
where $\langle X,P\rangle_F=\mathrm{trace}(X^TP)$ is the Frobenius inner product of $X$ and $P$. To obtain samples in $\mathcal{P}_{n\times n}$ from the Gumbel-Matching distribution, \citep{Mena2018sinkhorn} relaxes $\mathcal{P}_{n\times n}$ to $\mathcal{B}_{n\times n}$ by defining the Gumbel-Sinkhorn distribution $\mathcal{G.S.}(X, \tau): \tau > 0$ over $\mathcal{B}_{n\times n}$, and proves that $\mathcal{G.S.}(X, \tau)$ converges almost surely to $\mathcal{G.M.}(X)$ as $\tau \to 0^+$. Therefore, to approximately sample from $\mathcal{G.M.}(X)$, we first sample from $\mathcal{G.S.}(X, \tau)$, then apply Hungarian algorithm \citep{hungarian_alg} to obtain $P \in \mathcal{G.M.}(X)$. The entropy term $H_{q_{\phi}}$ can be approximated as $-\mathcal{D}_{KL}(\mathcal{G.S.}(X, \tau)\;||\;\mathcal{G.S.}(\mathbf{0}, \tau)) + \log n!\;$, and can be further approximated using the technique in Appendix B.3 of \cite{Mena2018sinkhorn}. Further details are presented in Appendix \ref{app:gumbel_sinkhorn_details}.

The Gumbel-Matching distribution allows us to obtain the numerator for the closed form of $q_\phi(\mathbf{z}|\mathbf{y}, \mathbf{x})=q_\phi(f_{n}^{-1}(P)|\mathbf{y}, \mathbf{x})$, which equals $\exp{\langle X,P\rangle_F}$. However, the denominator is intractable to compute and equals $\sum_{P\in \mathcal{P}_{n\times n}} \exp{\langle X,P\rangle_F}$. Upon further examination, we can express it as $\textrm{perm}(\exp(X))$, the matrix permanent of $\exp(X)$, and approximate it using $\textrm{perm}_B(\exp(X))$, its Bethe permanent. We present details about matrix permanent and Bethe permanent along with the proof that the denominator of $q_\phi(\cdot|\mathbf{y}, \mathbf{x})$ equals $\textrm{perm}(\exp(X))$ in Appendix \ref{app:perm_proof}.

\begin{algorithm}[t]
\caption{Variational Order Inference}
\begin{algorithmic}[1]
\State \textbf{Given:} encoder network $\phi$ with learning rate $\alpha_\phi$, decoder network $\theta$ with learning rate $\alpha_\theta$, entropy coefficient $\beta$, batch of training data $(\mathbf X, \mathbf Y) = \{ (\mathbf x_b, \mathbf y_b) \}_{b=1}^{N}$ sampled from dataset $\mathcal{D}$
\State Set gradient accumulators $g_\phi = 0$, $g_\theta = 0$
\For{$(\mathbf x, \mathbf y) \in (\mathbf X, \mathbf Y)$} 
\Comment{In practice, this is done through parallel tensor operations}
\State $X = \phi(\mathbf y, \mathbf x)$
\State Sample $K$ doubly stochastic matrices $B_1,B_2,\dots,B_K \in \mathcal{B}_{n\times n}$ from $\mathcal{G.S.}(X, \tau)$
\State Obtain $ P_1,P_2,\dots,P_K \in \mathcal{P}_{n\times n}$ from $ B_1,B_2,\dots, B_K$ using Hungarian Algorithm
\State 
Obtain latents $\mathbf z_1,\mathbf z_2,\dots,\mathbf z_K = f_{\textrm{len}(\mathbf y)}^{-1}(P_1), f_{\textrm{len}(\mathbf y)}^{-1}(P_2),\dots,f_{\textrm{len}(\mathbf y)}^{-1}(P_K)$
\State $g_\theta = g_\theta + \frac{1}{N\cdot K} \sum_{i=1}^{K} \nabla_{\theta} \log p_\theta(\mathbf y, \mathbf z_i | \mathbf x)$
\State Calculate $\log q_\phi(\mathbf{z}_i|\mathbf{y}, \mathbf{x}) = \langle X,P_i\rangle_F - \log(\textrm{perm}(\exp{(X)}))$ \par
        \hskip\algorithmicindent\quad\quad\quad\quad\quad\quad\quad\quad $\approx \langle X,P_i\rangle_F - \log(\textrm{perm}_B(\exp(X)))$
\State Calculate $b(\mathbf y, \mathbf x) = \frac{1}{K} \sum_{i=1}^K \log p_\theta(\mathbf y, \mathbf z_i | \mathbf x)$
\State $g_\phi = g_\phi + \frac{1}{N\cdot K} \sum_{i=1}^{K} \nabla_\phi \log q_\phi(\mathbf z_i | \mathbf y, \mathbf x) (\log p_\theta(\mathbf y, \mathbf z_i | \mathbf x) - b(\mathbf y, \mathbf x)) + \beta \cdot \nabla_\phi \mathcal{H}_{q_\phi}(\cdot | \mathbf y, \mathbf x)$
\EndFor
\State $\phi = \phi + \alpha_\phi \cdot g_\phi$ 
\State $\theta = \theta + \alpha_\theta \cdot g_\theta$ 
\end{algorithmic}
\label{alg: VOI}
\end{algorithm}

After we approximate $q_\phi$, we can now optimize $\phi$ using the policy gradient in (\ref{eq:L_phi}). We present a computational diagram of VOI in Figure \ref{fig:voi_diagram}, and a pseudocode of VOI in Algorithm \ref{alg: VOI}. Note that even though latent space $S_n$ is very large and contains $n!$ permutations, in practice, if $p_\theta(\mathbf y, \mathbf z^*| \mathbf x) \ge p_\theta(\mathbf y, \mathbf z| \mathbf x) \; \forall \mathbf z \in S_n$, then $p_\theta(\mathbf y, \mathbf z| \mathbf x)$ tends to increase as the edit distance between $\mathbf z$ and $\mathbf z^*$ decreases. Therefore,  $\phi$ does not need to search over the entire latent to obtain good permutations, making variational inference over $S_n$ feasible.

\section{Experiments}
\label{sec: experiments}

\textbf{Encoder and Decoder Architectures.} We implement \Variational on conditional sequence generation tasks, specifically language modeling tasks. We implement the encoder of VOI as a Transformer with non-causal attention that outputs permutations in one forward pass. The generated permutations then serve as target generation orders for training an insertion-based Transformer language model. A summary of our architectures for conditional sequence generation tasks is illustrated in Figure \ref{fig:voi_transformer}. We would like to note that VOI is also applicable to unconditional sequence generation domains, such as image generation, through different encoder and decoder architectures, which we leave for future work. We would also like to note that ``encoder" and ``decoder" refer to the two networks $\phi$ and $\theta$ in Algorithm \ref{alg: VOI}, respectively, instead of Transformer's encoder and decoder.

For decoder $\theta$, we use the Transformer-InDIGO \citep{gu2019insertion} architecture, which maximizes $p_\theta(\mathbf y, \mathbf z| \mathbf x)$ by alternating token generation and token insertion processes. Note that the ordering $\mathbf{z}$ used to train $\theta$ is obtained through the output of encoder $\phi$ in our approach, instead of through Searched Adaptive Order (SAO) proposed in the Transformer-InDIGO paper, which requires multiple forward passes per batch to obtain a generation order. Once $\mathbf{z}$ is already given, $p_\theta(\mathbf y, \mathbf z| \mathbf x)$ can be optimized in one single pass through teacher forcing. 

For encoder $\phi$, we adopt the Transformer \citep{attallyouneed} architecture. Note that our encoder generates latents based on the entire ground truth target sequence $\mathbf y$. Therefore, it does not need to mask out subsequent positions during attention. We also experiment with different position embedding schemes (see Section \ref{sec:ablation}) and find that Transformer-XL's \citep{transformer_xl} relative positional encoding performs the best, so we replace the sinusoid encoding in the original Transformer.

\textbf{Tasks.} We evaluate our approach on challenging sequence generation tasks: natural language to code generation (NL2Code) \citep{Ling2016LatentPN}, image captioning, text summarization, and machine translation. For NL2Code, we use Django \citep{oda2015ase:pseudogen1}. For image captioning, we use COCO 2017 \citep{lin2015coco}. For text summarization, we use English Gigaword \citep{gigaword2003, gigaword2015}. For machine translation, we use WMT16 Romanian-English (Ro-En).

\textbf{Baselines.} We compare our approach with several pre-defined fixed orders: Left-to-Right (L2R) \citep{wu-etal-2018-beyond}, Common-First (Common) \citep{Ford2018TheIO}, Rare-First (Rare) \citep{Ford2018TheIO}, and Random-Ordering (Random). Here, Common-First order is defined as generating words with ordering determined by their relative frequency from high to low; Rare-First order is defined as the reverse of Common-First order; and Random-Ordering is defined as training with a randomly sampled order for each sample at each time step. 

\textbf{Preprocessing.} For Django, we adopt the same preprocessing steps as described in \citep{gu2019insertion}, and we use all unique words as the vocabulary. For MS-COCO, we find that the baseline in \cite{gu2019insertion} is much lower than commonly used in the vision and language community. Therefore, instead of using Resnet-18, we use the pretrained Faster-RCNN checkpoint using a ResNet-50 FPN backbone provided by TorchVision to extract 512-dimensional feature vectors for each object detection. To make our model spatially-aware, we also concatenate the bounding box coordinates for every detection before feeding into our Transformers' encoder. For Gigaword and WMT, we learn 32k byte-pair encoding (BPE, \cite{bytepairencoding}) on tokenized data.
\begin{figure}[htbp]
    \centering
    \includegraphics[width=0.25\linewidth]{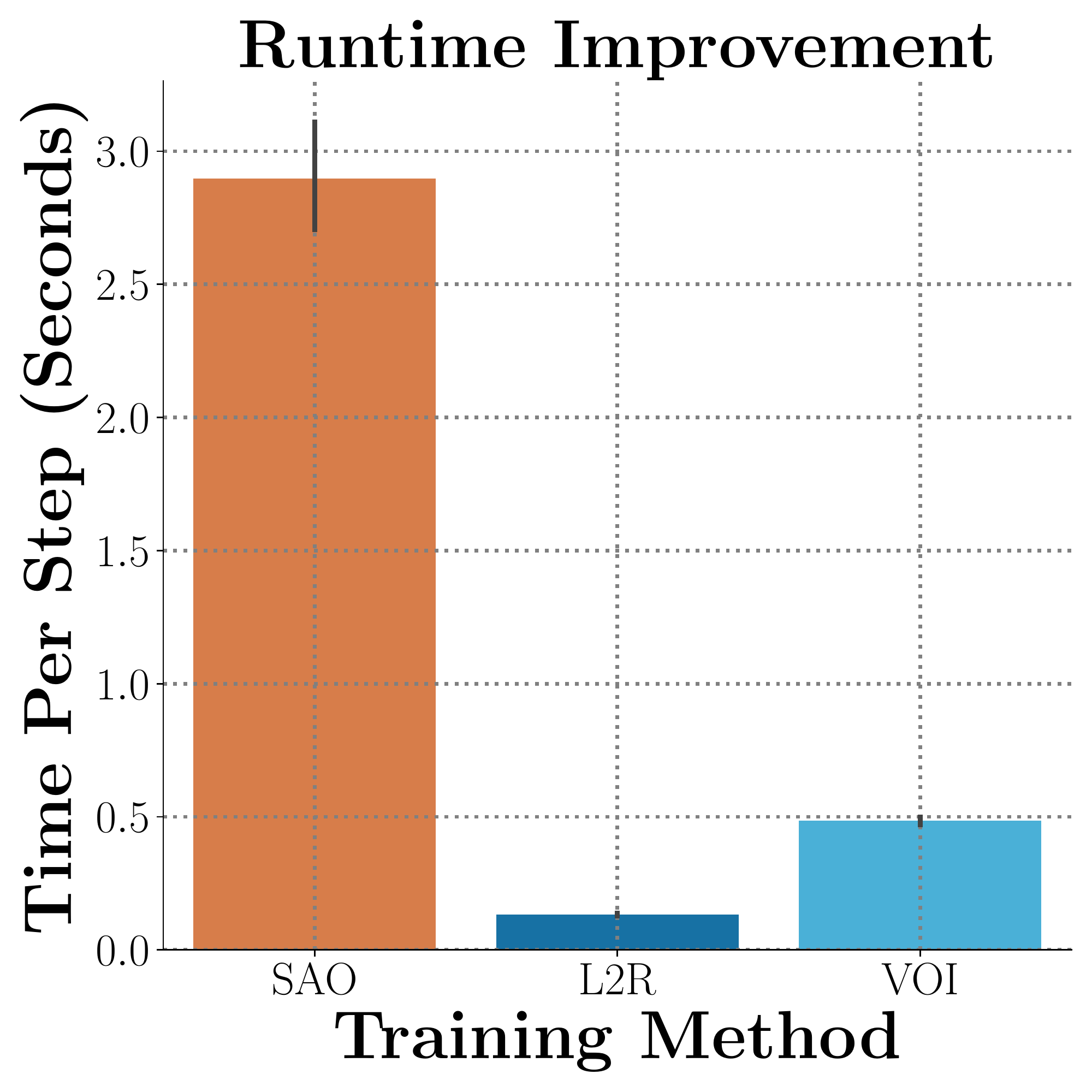}
    \hspace{1cm}
    \includegraphics[width=0.25\linewidth]{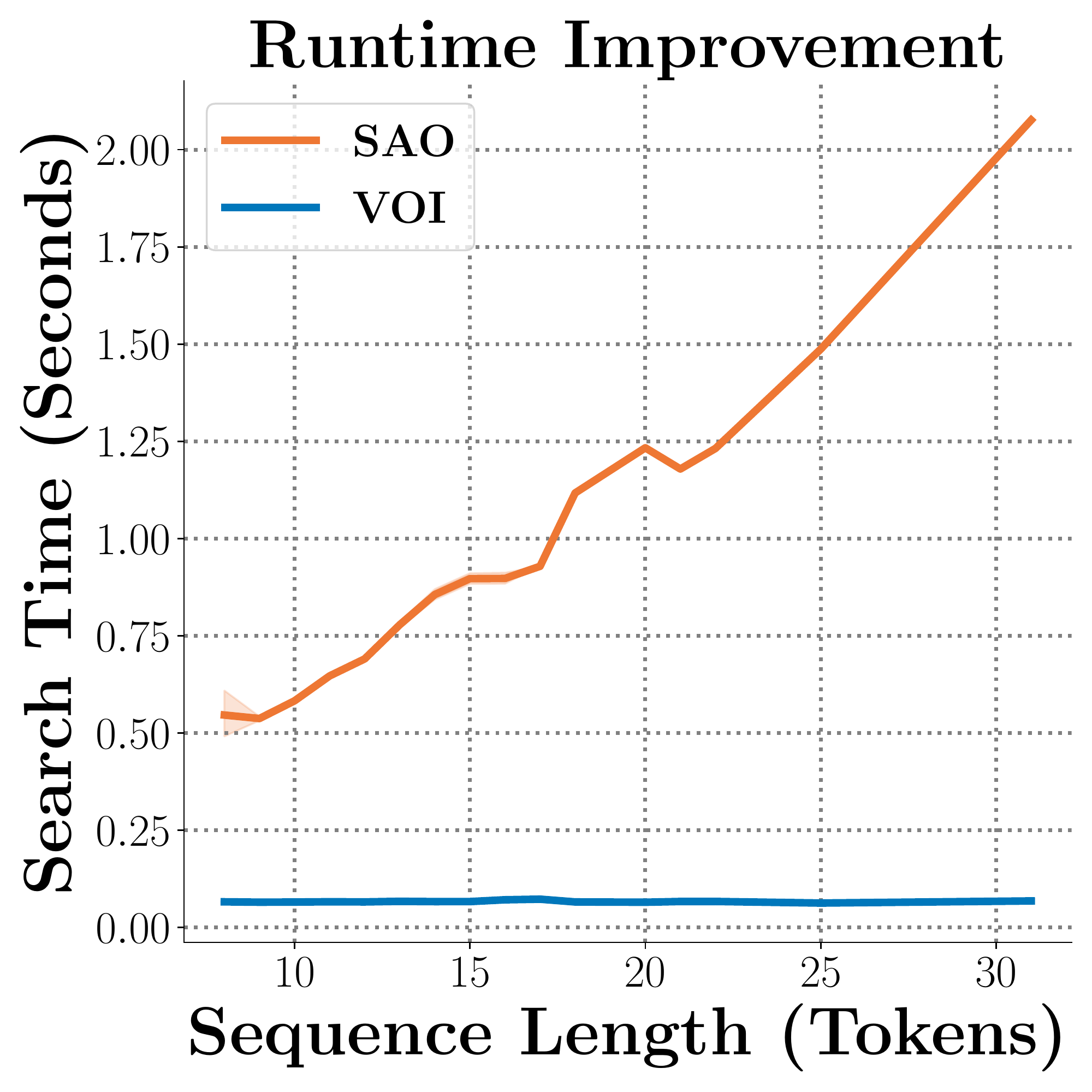}
    \hspace{1cm}
    \includegraphics[width=0.25\linewidth]{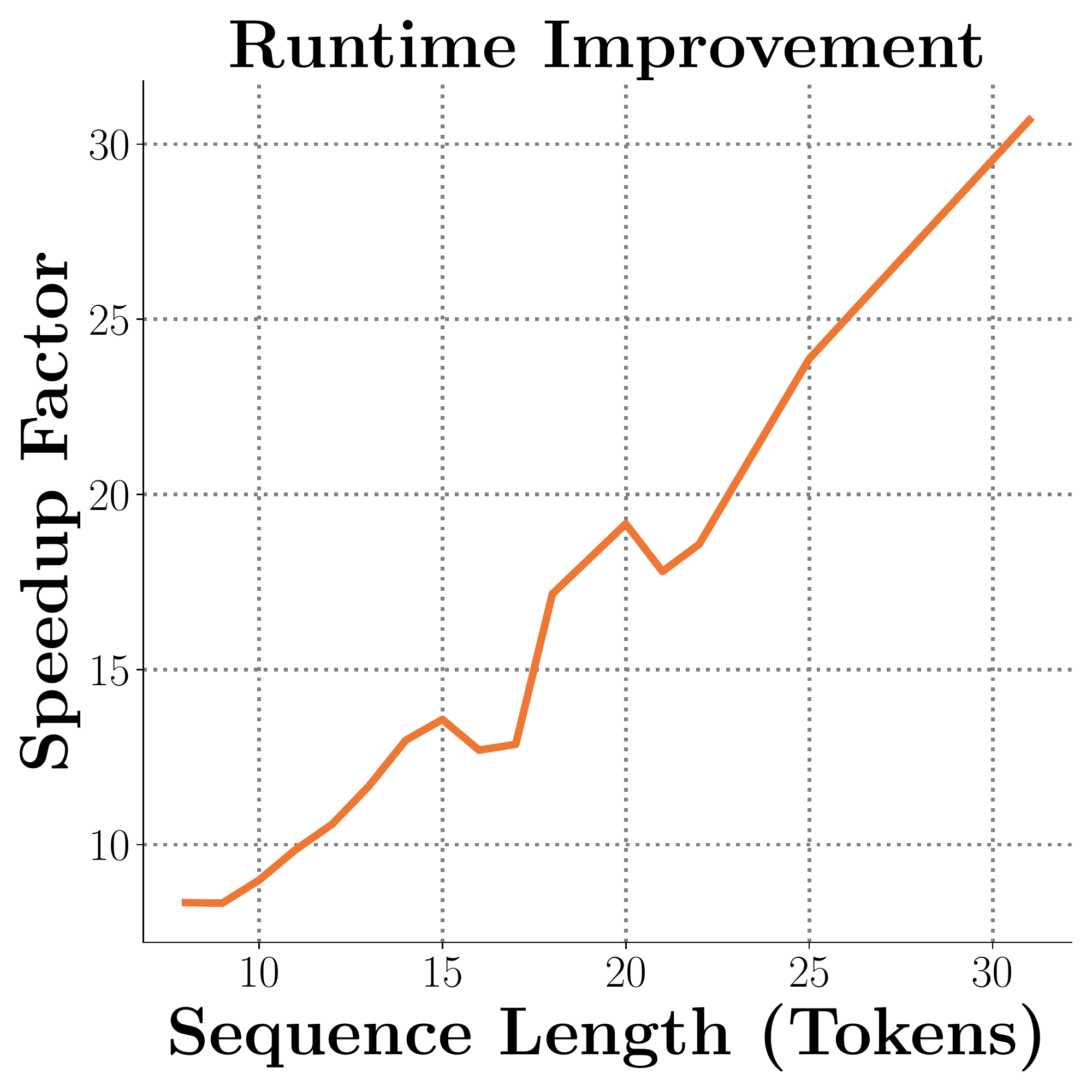}
    \vspace{-0.2cm}
    \caption{\textbf{Runtime performance improvement.} We compare the runtime performance of VOI ($K=4$) with SAO on a single Tesla P100 GPU, in terms of time per training iteration and ordering search time. VOI outputs latent orderings in a single forward pass, and we observe a significant runtime improvement over SAO that searches orderings sequentially. The speedup factor linearly increases with respect to the sequence length.}
    \label{fig:time_stats}
    \vspace{-0.2cm}
\end{figure}

\begin{table}[htbp]
\centering
\footnotesize
\setlength{\tabcolsep}{2.1pt}

\begin{tabular}{lcccccccccccc} \toprule
    
    \multirow{2}{*}{Order} & \multicolumn{4}{c}{MS-COCO} & \multicolumn{2}{c}{Django} & \multicolumn{3}{c}{Gigaword} & \multicolumn{3}{c}{WMT16 Ro-En}\\
    & {$\text{BLEU}$} & {$\text{Meteor}$} & {$\text{R-L}$} & {$\text{CIDEr}$} & {$\text{BLEU}$} & {$\text{Accuracy}$} & {$\text{R-1}$} & {$\text{R-2}$} & {$\text{R-L}$} &{BLEU$\uparrow$} &{Meteor$\uparrow$} & {TER$\downarrow$} \\ \midrule
    $\text{InDIGO - SAO}$ \footnote[1]{For InDIGO-SAO, we report the results on COCO and Django trained using our own implementation. We did not attempt SAO on Gigaword or WMT due to the large dataset sizes, which can take a very long time to train. For WMT, we report the SAO result as in the original paper, and we follow their evaluation scheme.}  & 29.3 & 24.9  & 54.5  & 92.9 & 42.6 & 32.9 & \longdash & \longdash & \longdash & 32.5 & 53.0 & \textbf{49.0} \\    \midrule
    $\text{Ours - Random}$  & 28.9 & 24.2 & 55.2 & 92.8 & 21.6 & 26.9 & 30.1 & 11.6 & 27.6 & 20.3 & 43.5 & 62.0 \\
    $\text{Ours - L2R}$  & 30.5  & 25.3 & 54.5  & 95.6 & 40.5 & 33.7 & 35.6 & 17.2 & 33.2 & 32.7 & 54.4 & 50.2 \\
    $\text{Ours - Common}$  & 28.0  & 24.8 & 55.5  & 90.3 & 37.1 & 29.8 & 33.9 & 15.0 & 31.1 & 28.2 & 50.8 & 53.1 \\
    $\text{Ours - Rare}$  & 28.1  & 24.5 & 52.9  & 91.4 & 31.1 & 27.9 & 34.1 & 15.2 & 31.3 & 26.4 & 48.5 & 55.1 \\ \midrule
    $\text{Ours - VOI}$  & \textbf{31.0}  & \textbf{25.7} & \textbf{56.0}  & \textbf{100.6} & \textbf{45.9} & \textbf{34.5} & \textbf{36.6} & \textbf{17.6} & \textbf{34.0} & \textbf{32.9} & \textbf{54.6} & 49.3 \\ \bottomrule
    \bottomrule
    \vspace{-0.3cm}
\end{tabular}


\caption{Results of MS-COCO, Django, Gigaword, and WMT with fixed orders (L2R, Random, Common, Rare) as baseline. Here, R-1, R-2, and R-L indicate ROUGE-1, ROUGE-2, and ROUGE-L, respectively. For TER, lower is better; for all other metrics, higher is better. ``\longdash" = not reported.} 


\vspace{-0.1cm}
\label{tab:results}
\end{table}

\textbf{Training.} For our decoder, we set $d_{\textrm{model}}=512$, $d_{\textrm{hidden}}=2048$, 6 layers for both Transformer's encoder and decoder, and 8 attention heads.
This is the same model configuration as Transformer-Base \citep{attallyouneed} and as described in \cite{gu2019insertion}. Our encoder also uses the same configuration. For our model trained with \Variational, we sample $K=4$ latents for each training sample for Django, COCO, and Gigaword and $K=3$ latents for WMT (due to computational resource constraints, we were unable to set a higher $K$ for WMT). An ablation on the choices of $K$ on a small dataset is presented in Section \ref{sec:ablation}. For WMT, many previous works on nonsequential orderings \citep{DBLP:conf/icml/SternCKU19} and nonautoregressive sequence generation \citep{levenhstein_transformer} have found sequence-level knowledge distillation \citep{kim-rush-2016-sequence} helpful. Therefore, we first train the L2R model on the original WMT corpus, then create a new training corpus using beam search. We find that this improves the BLEU of VOI model by about 2.0. Even though the training set changed, the orderings learned by VOI are very similar to the ones trained on the original corpus. More detailed training processes are described in Appendix \ref{app:hyperparams}. 

During training, our encoder and decoder are optimized in one single pass per batch. If we let $N$ denote the batch size, $l$ denote the length of each target sequence, and $d$ denote the size of hidden vector, then one single forward pass of our model has computation complexity $O(NKdl^2)$, while Transformer-InDIGO trained with SAO has total complexity $O(Ndl^3)$. Since $K \ll l$ in general, our algorithm has better theoretical computational complexity during training. During evaluation, we only keep the decoder to iteratively generate the next position and token, which is as efficient as any standard fixed-order autoregressive models.

We also empirically compare VOI's runtime with that of SAO and fixed-order baselines (e.g. L2R). We implement SAO as described in \cite{gu2019insertion}. We test the runtime on a single GPU in order to accurately measure the number of ops required. For training speed per iteration, we use a batch size of 8. For ordering search time, we use a batch size of 1 to avoid padding tokens in the input for accurate measure. We observe that VOI is significantly faster than SAO, which searches orderings sequentially. In practice, as we distribute VOI across more GPUs, the $K$ factor in the runtime is effectively divided by the number of GPUs used (if we ignore the parallelization overhead), so we can achieve further speedups.

\textbf{Results.} 
We compare VOI against predefined orderings along with Transformer-InDIGO trained with SAO in Table \ref{tab:results}. The metrics we used include BLEU-4 \citep{papineni-etal-2002-bleu}, Meteor \citep{denkowski:lavie:meteor-wmt:2014}, Rouge \citep{lin-2004-rouge}, CIDEr \citep{DBLP:conf/cvpr/VedantamZP15}, and TER \citep{Snover06astudy}. The "accuracy" reported for Django is defined as the percentage of perfect matches in code generation. Our results illustrate consistently better performance across fixed orderings. Most notably, CIDEr for MS-COCO, BLEU for Django, and Rouge-1 for Gigaword reveal the largest improvements in performance. 
\section{Order Analysis}

In this section, we analyze the generation orders learned by \Variational on a macro level by comparing the similarity of our learned orders with predefined orders defined in Section \ref{sec: experiments}, and on a micro level, by inspecting when the model generates certain \textit{types} of tokens.

\begin{figure}[h]
    \centering
    \includegraphics[width=0.93\linewidth]{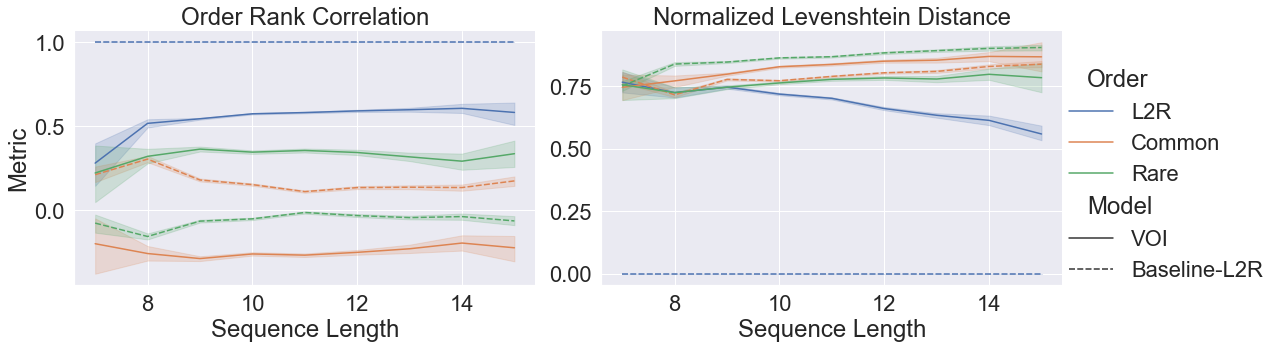}
    \vspace{-0.25cm}
    \caption{\textbf{Global statistics for learned orders.} We compare metrics as a function of the sequence length of generated captions on the COCO 2017 validation set. On the left, we compare orders learned with \Variational to a set of predefined orders (solid lines) using \textit{Order Rank Correlation}. As a reference, we provide the \textit{Order Rank Correlation} between L2R and the same set of predefined orders (dashed lines). In the right plot, with identical setup, we measure \textit{Normalized Levenshtein Distance}.
    We observe that \Variational favors left-to-right decoding above the other predefined orders---this corresponds to the blue lines. However, with a max \textit{Order Rank Correlation} of 0.6, it appears left-to-right is not a perfect explanation. The comparably high \textit{Order Rank Correlation} of 0.3 with rare-tokens-first order suggests a complex strategy.}
    \vspace{-0.3cm}
    \label{fig:global_stats}
\end{figure}
\subsection{Understanding The Model Globally}

We find that prior work \citep{gu2019insertion, welleck2019non, gu2018nonautoregressive} tends to study autoregressive orders by evaluating performance on validation sets, and by visualizing the model's generation steps. We provide similar visualizations in Appendix \ref{app:visualization_app}. However, this does not merit a quantitative understanding of the \textit{strategy} that was learned. We address this limitation by introducing methodology to quantitatively study decoding strategies learned by non-monotonic autoregressive models. We introduce \textit{Normalized Levenshtein Distance} and \textit{Order Rank Correlation}, to measure similarity between decoding strategies. Given two generation orders $\mathbf w, \mathbf z \in S_n$ of the same sequence $\mathbf y$, where $n$ is the length of $\mathbf y$, we define the \textit{Normalized Levenshtein Distance}.
\begin{gather}
    \mathcal{D}_{NLD} \left( \mathbf w, \mathbf z \right) = \text{lev} \left(\mathbf w, \mathbf z \right) / n \\
    \text{lev} \left(\mathbf w, \mathbf z \right) =
  1 + \min \left\{ \text{lev} \left(\mathbf w_{1:}, \mathbf z \right),
  \text{lev} \left(\mathbf w, \mathbf z_{1:} \right), \text{lev} \left(\mathbf w_{1:}, \mathbf z_{1:} \right) \right\}
\end{gather}
The function $\text{lev} \left(\mathbf w, \mathbf z \right)$ is the Levenshtein distance, and $z_{1:}$ removes the first element of $z$. This metric has the property that a distance of $0$ implies that two orders $\mathbf w$ and $\mathbf z$ are the same, while a distance of $1$ implies that the same tokens appear in distant locations in $\mathbf w$ and $\mathbf z$. Our second metric \textit{Order Rank Correlation}, is the Spearman's rank correlation coefficient between $\mathbf w$ and $\mathbf z$.
\begin{gather}
    \mathcal{D}_{ORC} \left( \mathbf w, \mathbf z \right) = 1 - 6 \cdot {\textstyle\sum}_{i = 0}^{n} \left( \mathbf w_{i} - \mathbf z_{i} \right) / \left( n^{3} - n \right)
\end{gather}
A correlation of $1$ implies that $\mathbf w$ and $\mathbf z$ are the same; a correlation of $-1$ implies that $\mathbf w$ and $\mathbf z$ are reversed; and a correlation of $0$ implies that $\mathbf w$ and $\mathbf z$ are not correlated. In Figure \ref{fig:global_stats}, we apply these metrics to analyze our models learnt through \Variational.

\begin{figure}[!htbp]
    \centering
    \includegraphics[width=0.90\linewidth]{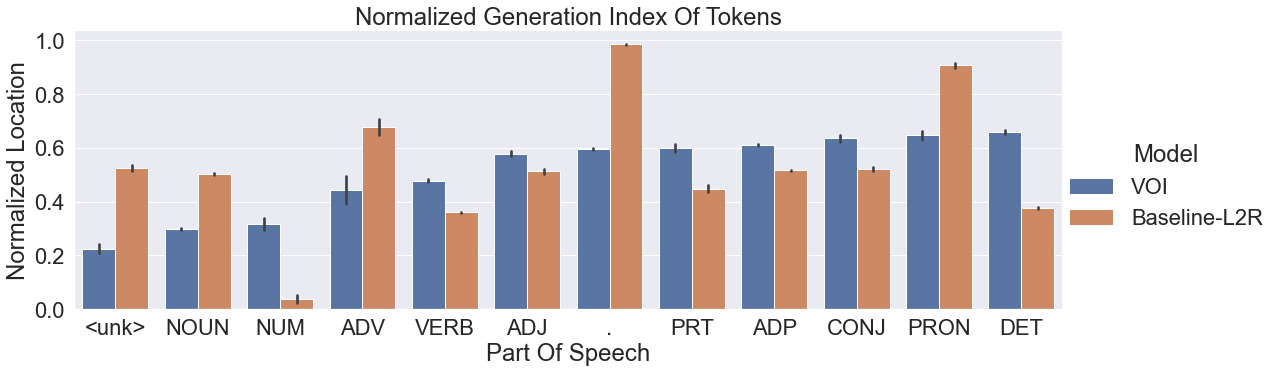}
    \vspace{-0.2cm}
    \caption{\textbf{Local statistics for learned orders.} In this figure, we evaluate the normalized generation indices for parts of speech in predicted captions on the COCO 2017 validation set. The normalized generation index is defined as the absolute generation index of a particular token, divided by the final length of predicted sequence. Parts of speech (details in Appendix \ref{app:partofspeech}) are sorted in ascending order of average normalized location. We observe that \textit{modifier} tokens, such as ``the'', tend to be decoded last, while \textit{descriptive} tokens, such as nouns and verbs, tend to be decoded first. }
    \vspace{-0.5cm}
    \label{fig:local_stats_gen}
\end{figure}
\paragraph{Discussion.} The experiment in Figure~\ref{fig:global_stats} confirms our model's behavior is not well explained by predefined orders. Interestingly, as the generated sequences increase in length, the \textit{Normalized Levenshtein Distance} decreases, reaching a final value of 0.57, indicating that approximately half of the tokens are already arranged according to a left-to-right generation order. However, the \textit{Order Rank Correlation} barely increases, so we can infer that while individual tokens are close to their left-to-right generation index, their relative ordering is not preserved. Our hypothesis is that certain phrases are generated from left-to-right, but their arrangement follows a \textit{best-first} strategy.

\subsection{Understanding The Model Locally}

To complement the study of our model at a global level, we perform a similar study on the micro token level. Our hope is that a per-token metric can help us understand if and when our \Variational is adaptively choosing between left-to-right and rare-first order. We also hope to evaluate our hypothesis that \Variational is following a \textit{best-first} strategy.

\paragraph{Discussion.} The experiment in Figure~\ref{fig:local_stats_gen} demonstrates that \Variational prefers decoding \textit{descriptive} tokens first---such as nouns, numerals, adverbs, verbs, and adjectives. In addition, the unknown part of speech is typically decoded first, and we find this typically corresponds to special tokens such as proper names. Our model appears to capture the \textit{salient} content first, which is illustrated by nouns ranking second in the generation order statistics. For image captioning, nouns typically correspond to focal objects, which suggests our model has an object-detection phase. Evidence of this phase supports our previous hypothesis that a \textit{best-first} strategy is learned. 

\subsection{Understanding The Model Via Perturbations}

\begin{wrapfigure}[10]{r}{0.55\textwidth}
  \vspace{-0.45cm}
  \centering
  \includegraphics[width=0.55\textwidth, trim=2.5cm 0.0cm 2.5cm 1.0cm, clip]{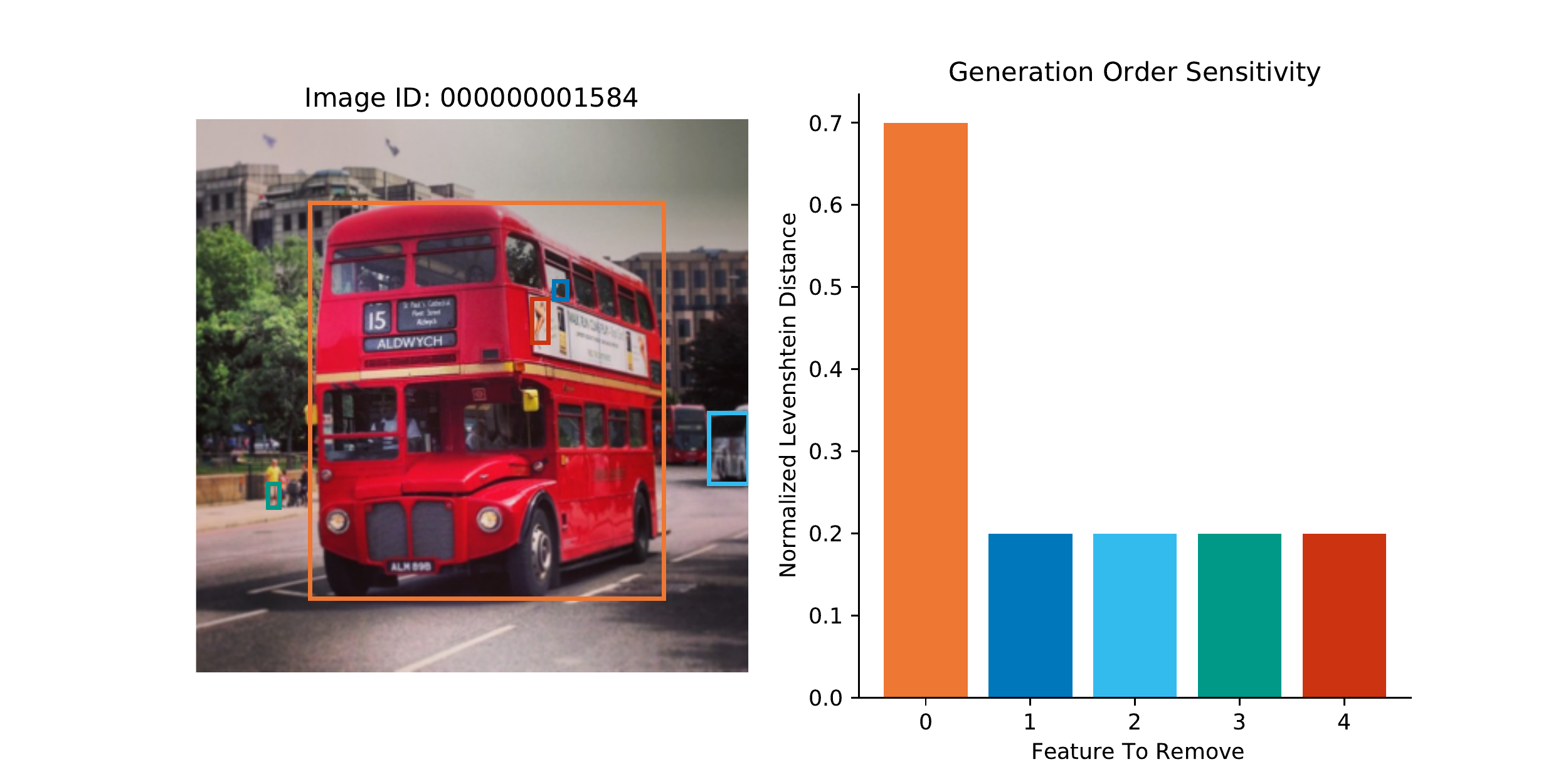}
  \label{fig:perturb_stats}
\end{wrapfigure}

In this section, we study the question: to what extent is the generation order learned by \Variational dependent on the content of the conditioning variable $\mathbf x$? This question is important because simply knowing that our model has learned a \textit{best-first} does not illuminate whether that strategy depends only on the target tokens $\mathbf y$ being generated, or if it also depends on the content of $\mathbf x$. An adaptive generation order should depend on both. 

\paragraph{Discussion.} In this experiment, we first obtain a sequence $\mathbf y$ generated by our VOI given the source image $\mathbf x$. We then freeze $\mathbf y$, which allows the model to infer a new generation order for $\mathbf y$ when different features of $\mathbf x$ are removed. The right figure shows that for a particular case, removing a single region-feature (feature number 0, which corresponds to the bus) from $\mathbf x$ changes the model-predicted generation order by as much as 0.7 \textit{Normalized Levenshtein Distance}.  These results confirm that our model appears to learn an \textit{adaptive} strategy, which depends on both the tokens $\mathbf y$ being generated and the content of the conditioning variable $\mathbf x$, which is an image in this experiment. 
\section{Ablation Studies}
\label{sec:ablation}

In Section \ref{sec: experiments}, we introduced the specific encoder and decoder architectures we use for conditional sequence generation tasks. In this section, we present ablation studies to support the architecture design of our encoder and modeling $q_\phi$ with Gumbel-Matching distribution.

\begin{wraptable}{r}{7.5cm}
\centering
\small
\setlength{\tabcolsep}{2.1pt}
\renewcommand{\arraystretch}{1.07}
\vspace{-0.78cm}
\caption{Normalized Levenshtein Distance between the ordering learnt by the encoder and the ground truth ordering, under different positional encodings (enc) and modeling distributions of $q_\phi$ (distrib).}
\vspace{0.2cm}
\begin{tabular}{c|c|c}
\hline
Enc \textbackslash~ Distrib                & Gumbel-Matching   & Plackett-Luce \\ \hline
Sinusoid               & 0.40  &  0.62 \\
Sinusoid + Pos Attn           & 0.42  & 0.58  \\
Relative                & 0.38  & 0.53  \\
XL-Relative            & \textbf{0.25}  & 0.57  \\ \hline
\end{tabular}
\label{tab:lev_encoder_emb}
\vspace{-0.35cm}
\end{wraptable}

We consider 4 different positional encoding schemes for the encoder Transformer $\phi$: the sinusoid encoding in the original Transformer \citep{attallyouneed}, the sinusoid encoding with positional attention module \citep{gu2018nonautoregressive}, the relative positional encoding in \cite{self_attention_relative_pos}, and the relative positional encoding proposed in Transformer-XL \citep{transformer_xl}. Besides modeling $q_\phi(\cdot | \mathbf x, \mathbf y)$ as Gumbel-Matching distribution and using Bethe permanent to approximate its denominator, we also consider modeling using Plackett-Luce distribution \citep{plackett, luce59} and sample using techniques recently proposed in \cite{plackett_luce1903}. Plackett-Luce distribution has tractable density, so we can compute the exact $q_\phi$ efficiently without using approximation techniques.

To analyze the encoder's ability to learn autoregressive orderings, we first train a decoder with Common-First order on one batch of MS-COCO until it perfectly generates each sentence. We then fix the decoder and initialize an encoder. We train the encoder for 15k gradient steps using the procedure in Algorithm \ref{alg: VOI} to recover the ground truth Common-First order, and we report the final Normalized Levenshtein Distance against the ground truth in Table \ref{tab:lev_encoder_emb}. We observe that modeling $q_\phi$ with Gumbel-Matching distribution significantly outperforms modeling with Plackett-Luce, despite the former requiring denominator approximation. We also observe that under Gumbel-Matching modeling distribution, the relative position encoding in Transformer-XL significantly outperforms other encoding schemes. Thus we combine these two techniques in our architecture design.

\begin{wraptable}{r}{6.2cm}
\centering
\small
\setlength{\tabcolsep}{2.1pt}
\renewcommand{\arraystretch}{1.07}
\vspace{-0.76cm}
\caption{Normalized Levenshtein Distance between the encoder ordering and the ground truth with respect to the choice of $K$.}
\vspace{0.2cm}
\begin{tabular}{c|ccccc}
\hline
$K$ & 2 & 3 & 4 & 10 & 20 \\
\hline
$\mathcal{D}_{NLD}$ & 0.31 & 0.28 & 0.25 & 0.21 & 0.21 \\ \hline
\end{tabular}
\label{tab:ablation_K}
\vspace{-0.25cm}
\end{wraptable}

In addition, we analyze how choices of $K$, the number of latents per training sample, affects model performance. We use the same setting as above and apply Transformer-XL relative position encoding, and we report the results in Table \ref{tab:ablation_K}. We observe that the encoder more accurately fits to the ground truth order as $K$ increases, until a value of around 10. Since a very large $K$ can slow the model down while only bringing marginal improvements, we find a good choice of $K$ to be between 4 and 10.

\section{Conclusion}

We propose, to our best knowledge, the first unsupervised learner that learns high-quality autoregressive orders through fully-parallelizable end-to-end training without domain-specific tuning. We propose a procedure named \Variational that uses the Variational Lower Bound with the space of autoregressive orderings as latent. Building on techniques in combinatorical optimization, we develop a practical policy gradient algorithm to optimize the encoder of the variational objective, and we propose an encoder architecture that conditions on training examples to output autoregressive orders. Empirical results demonstrate that our model is capable of discovering autoregressive orders that are competitive with or even better than fixed and predefined orders. In addition, the global and local analysis of the orderings learned through \Variational suggest that they resemble a type of \textit{best-first} generation order, characterized by prioritizing the generation of \textit{descriptive} tokens and deprioritizing the generation of \textit{modifier} tokens.
\newpage
\bibliography{iclr2021_conference}
\bibliographystyle{iclr2021_conference}
\newpage
\section*{\LARGE{Appendix}}
\appendix
\section{Gumbel-Matching Distribution and its Sampling}
\label{app:gumbel_sinkhorn_details}

In Section \ref{sec:vans}, We model the distribution of $q_\phi(\cdot | \mathbf y, \mathbf x)$ as a Gumbel-Matching distribution $\mathcal{G.M.}(X)$ over $\mathcal{P}_{n\times n}$, where $X=\phi(\mathbf y, \mathbf x) \in \mathbb{R}^{n\times n}$ is the latent output. 

To obtain samples in $\mathcal{P}_{n\times n}$ from the Gumbel-Matching distribution, \cite{Mena2018sinkhorn} relaxes $\mathcal{P}_{n\times n}$ to $\mathcal{B}_{n\times n}$ by defining the Gumbel-Sinkhorn distribution $\mathcal{G.S.}(X, \tau): \tau > 0$ over $\mathcal{B}_{n\times n}$. Here we reproduce the following definitions and theorems with similar notations from \cite{sinkhorn1964} and \cite{Mena2018sinkhorn}:

\begin{definition} 
Let $X \in \mathbb R_{n\times n}$ and $A \in \mathbb R_{n\times n}^+$. The \textbf{Sinkhorn Operator} $S$ is defined as 
\begin{gather}\label{eq:sinkhorn_op}
    \mathcal{T}_r(A) = A \oslash{(A\mathbf{1}_n\mathbf{1}_n^T)} \\
    \mathcal{T}_c(A) = A \oslash{(\mathbf{1}_n\mathbf{1}_n^T A)} \\    
    \mathcal{T}(A) = \mathcal{T}_r(\mathcal{T}_c(A)) \\
    S(X) = \lim_{n \to \infty} \mathcal{T}^n(\exp(X))
\end{gather}
\end{definition}

Here, $\oslash$ is the element-wise division between two matrices, and $\mathcal{T}_r$ and $\mathcal{T}_c$ are row and column normalizations of a non-negative matrix, respectively. Therefore, iteratively applying $\mathcal{T}$ is equivalent to iteratively normalizing a non-negative matrix by column and row.

\begin{theorem} \textnormal{\citep{sinkhorn1964}} The range of $S$ is $\mathcal{B}_{n\times n}$.
\end{theorem}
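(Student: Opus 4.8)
The plan is to establish the set equality by proving the two inclusions separately, after the single simplifying observation that $S$ is only ever applied to $\exp(X)$, which is \textbf{entrywise strictly positive}. Thus it suffices to analyze the alternating normalization $\mathcal{T}=\mathcal{T}_r\circ\mathcal{T}_c$ on the cone of strictly positive matrices, and to show (i) that for every strictly positive $A$ the iterates $\mathcal{T}^k(A)$ converge to a doubly stochastic matrix, and (ii) that every such target can be hit.

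For the inclusion $\mathrm{range}(S)\subseteq\mathcal{B}_{n\times n}$, the first thing I would do is control the iterates. Because each application of $\mathcal{T}_r$ (resp. $\mathcal{T}_c$) makes all row (resp. column) sums equal to $1$ while preserving strict positivity, the sequence $\{\mathcal{T}^k(A)\}$ stays inside the compact set of nonnegative matrices with unit row sums (a product of probability simplices); the content is to show the limit also has unit column sums. My preferred route is the Birkhoff--Hopf contraction argument: a strictly positive matrix acts as a strict contraction in Hilbert's projective metric, with contraction ratio $\tanh(\Delta/4)<1$ where $\Delta$ is the finite projective diameter, whereas the diagonal rescalings implicit in $\mathcal{T}_r,\mathcal{T}_c$ are projective isometries. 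Hence $\mathcal{T}$ is a strict contraction in a complete metric on the positive projective cone, so by Banach's fixed point theorem it has a unique fixed direction and the iterates converge geometrically. One then checks that the fixed point has unit row sums (being in the image of $\mathcal{T}_r$) and, in the limit where $\mathcal{T}_c$ acts trivially, unit column sums, so it lies in $\mathcal{B}_{n\times n}$. An alternative, closer to Sinkhorn's original argument, is to exhibit a scalar potential (e.g. a product of the running normalization constants, or the permanent) that is monotone and bounded along the iteration, extract a convergent subsequence by compactness, argue every limit point is a doubly-stochastic fixed point, and upgrade subsequential to full convergence via uniqueness.

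For the reverse inclusion I would use the fixed-point characterization directly: if $B$ is a \textbf{strictly positive} doubly stochastic matrix, then taking $X=\log B$ entrywise gives $\exp(X)=B$, and since $B$ already has unit row and column sums we have $\mathcal{T}(B)=B$, so $\mathcal{T}^k(\exp(X))=B$ for all $k$ and hence $S(\log B)=B$. Thus every strictly positive doubly stochastic matrix is in the range.

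The main obstacle---and the point I would flag explicitly---is that these two inclusions do not quite give literal equality with $\mathcal{B}_{n\times n}$. The fixed point produced in step (i) is a positive diagonal rescaling $D_1 A D_2$ of $A$, hence strictly positive, so $S(X)$ can never have a zero entry; consequently boundary points of $\mathcal{B}_{n\times n}$ (those with a vanishing entry, e.g. the permutation matrices in $\mathcal{P}_{n\times n}$) are never attained. The honest statement is therefore $\mathrm{range}(S)=\mathrm{relint}(\mathcal{B}_{n\times n})$, the strictly positive doubly stochastic matrices, whose closure is all of $\mathcal{B}_{n\times n}$; this density is exactly what is needed downstream, since the permutation-valued targets in $\mathcal{P}_{n\times n}$ are only recovered in the degenerate limit $\tau\to0^+$ of $\mathcal{G.S.}(X,\tau)$. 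I would present the theorem with this interpretation and devote most of the effort to making the convergence-and-uniqueness argument of step (i) airtight, since that is where all the real work sits.
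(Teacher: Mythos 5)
The paper offers no proof of this statement at all---it is reproduced verbatim as a cited result of \citet{sinkhorn1964}---so your proposal has to be judged as a reconstruction of Sinkhorn's theorem, and as such it is essentially sound. The route you take (Hilbert's projective metric, the Birkhoff--Hopf contraction ratio $\tanh(\Delta/4)$, then Banach's fixed point theorem) is the Franklin--Lorenz style proof of Sinkhorn convergence rather than Sinkhorn's original argument via monotone bounded quantities, which you also mention as an alternative; both are valid, and the contraction route buys uniqueness and geometric convergence for free. One step to tighten: Birkhoff's theorem gives strict contraction of the \emph{linear} maps $u \mapsto Au$ and $v \mapsto A^{\top}v$ on the positive cone, so the clean argument tracks the diagonal scaling vectors, rather than asserting that $\mathcal{T}$ itself is a contraction on matrices in some complete metric; as written that sentence is slightly garbled, though fixable. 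Most importantly, your flag is correct and worth emphasizing: the statement as printed is too strong. Since the limit equals $D_1 \exp(X) D_2$ with strictly positive diagonal factors, every matrix in the range of $S$ is entrywise positive, so the range is exactly the set of strictly positive doubly stochastic matrices, i.e.\ $\mathrm{relint}(\mathcal{B}_{n\times n})$, not all of $\mathcal{B}_{n\times n}$; in particular no permutation matrix in $\mathcal{P}_{n\times n}$ is ever attained, which is precisely why the paper must pass to the $\tau \to 0^+$ limit of $\mathcal{G.S.}(X,\tau)$ and then invoke the Hungarian algorithm to reach $\mathcal{P}_{n\times n}$. The imprecision is harmless for everything downstream (density in $\mathcal{B}_{n\times n}$ is all that is used), but your corrected formulation is the accurate one.
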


\begin{theorem} \textnormal{\citep{Mena2018sinkhorn}}
Let $X \in \mathbb R_{n\times n}, \tau > 0$. The \textbf{Gumbel-Sinkhorn distribution} $\mathcal{G.S.}(X, \tau)$ is defined as follows:
\begin{gather}\label{eq:sinkhorn_op}
    \mathcal{G.S.}(X, \tau) = S(\frac{X + \epsilon}{\tau})
\end{gather}
where $\epsilon$ is a matrix of i.i.d. standard Gumbel noise. Moreover, $\mathcal{G.S.}(X, \tau)$ converges almost surely to $\mathcal{G.M.}(X)$ as $\tau \to 0^+$. 
\end{theorem}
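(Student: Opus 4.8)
The plan is to reduce the almost-sure convergence statement to a deterministic fact about the Sinkhorn operator and then dispatch the randomness separately. Recall that sampling from $\mathcal{G.M.}(X)$ amounts to drawing a matrix $\epsilon$ of i.i.d.\ Gumbel noise and returning the solution of the linear assignment problem $\argmax_{P \in \mathcal{P}_{n\times n}} \langle X+\epsilon, P\rangle_F$, while sampling from $\mathcal{G.S.}(X,\tau)$ returns $S((X+\epsilon)/\tau)$ for the \emph{same} $\epsilon$. Hence it suffices to prove the following deterministic claim: for every fixed $M \in \mathbb R_{n\times n}$ whose assignment problem $\argmax_{P \in \mathcal{P}_{n\times n}} \langle M, P\rangle_F$ has a unique maximizer $P^\star$, we have $S(M/\tau) \to P^\star$ as $\tau \to 0^+$; and then to verify that $M = X+\epsilon$ has a unique maximizer for almost every $\epsilon$.

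The measure-theoretic half is the easy direction. The set of matrices $M$ for which $\argmax_{P \in \mathcal{P}_{n\times n}} \langle M, P\rangle_F$ fails to be a singleton is contained in the finite union of hyperplanes $\{\langle M, P - P'\rangle_F = 0\}$ taken over distinct permutation pairs $P \neq P'$, and therefore has Lebesgue measure zero. Since $\epsilon$ is absolutely continuous, $X + \epsilon$ avoids this null set with probability one, so uniqueness of $P^\star$ holds almost surely and the reduction above is legitimate.

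The core of the argument is the deterministic limit. The key tool is the variational characterization of the Sinkhorn operator: $S(M/\tau)$ is the unique maximizer over the Birkhoff polytope $\mathcal{B}_{n\times n}$ of the entropy-regularized linear objective
\begin{equation*}
F_\tau(P) = \langle M, P\rangle_F + \tau\, h(P), \qquad h(P) = -\sum_{i,j} P_{ij}\bigl(\log P_{ij} - 1\bigr),
\end{equation*}
which follows from the KKT conditions of this strictly concave program, whose optimum has the form $P_{ij} = u_i\,\exp(M_{ij}/\tau)\,v_j$ — exactly the row/column scaling produced by iterating $\mathcal{T}$. First I would establish this characterization (including that $\mathcal{T}^n(\exp(M/\tau))$ converges, so that $S(M/\tau)$ is well defined). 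Then, since $\mathcal{B}_{n\times n}$ is compact and $h$ is bounded on it, $F_\tau \to \langle M, \cdot\rangle_F$ uniformly as $\tau \to 0^+$, and a standard argument shows that every limit point of $\{S(M/\tau)\}$ maximizes $\langle M, P\rangle_F$ over $\mathcal{B}_{n\times n}$. Because this linear program attains its maximum at a vertex of the Birkhoff polytope and those vertices are precisely the permutation matrices (Birkhoff--von Neumann), and because we assumed the maximizer is the unique $P^\star$, the whole family converges to $P^\star$.

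I expect the main obstacle to be making the variational characterization and the well-definedness of $S$ fully rigorous — in particular, proving that the Sinkhorn iteration converges to the stated entropy-regularized optimum rather than merely to \emph{some} doubly stochastic matrix. This is where the strict positivity of $\exp(M/\tau)$ and the strict concavity of $h$ do the work; once that is in hand, the $\tau \to 0^+$ limit is a routine consequence of uniform convergence of concave objectives on a compact set, and combining it with the almost-sure uniqueness from the second paragraph yields the theorem.
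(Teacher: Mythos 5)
Your proposal is correct, and it is essentially the argument behind the result as it appears in the literature: the paper itself does not prove this theorem at all (it imports both the definition and the convergence claim from Mena et al., 2018), and the proof in that source proceeds exactly as yours does --- the entropy-regularized variational characterization of the Sinkhorn operator via KKT conditions and Sinkhorn's 1964 scaling theorem, the $\tau \to 0^{+}$ limit collapsing onto the unique vertex of the Birkhoff polytope solving the linear assignment problem, and almost-sure uniqueness of that solution because ties lie on a finite union of hyperplanes that absolutely continuous Gumbel noise avoids. The only caveats are the ones you already flag (well-definedness of $S$ on strictly positive matrices, and that uniqueness among vertices forces the maximizing face to be a single point), both of which are routine.
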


To approximately sample from $\mathcal{G.M.}(X)$, we first sample from $\mathcal{G.S.}(X, \tau)$. Even though theoretically, $\mathcal{T}$ needs to be applied infinite number of times to obtain a matrix in $\mathcal{B}_{n\times n}$, \cite{Mena2018sinkhorn} reports that $20$ iterations of $\mathcal{T}$ are enough in practice. We find that in our experiments, $20$ iterations are not enough to obtain a matrix in $\mathcal{B}_{n\times n}$, but $100 - 200$ iterations are enough. After we obtain the matrix in $\mathcal{B}_{n\times n}$, we apply Hungarian algorithm \citep{hungarian_alg} to obtain $P \in \mathcal{G.M.}(X)$.

\section{Matrix Permanent and its Approximation with Bethe Permanent}
\label{app:perm_proof}
In this section, we present details about matrix permanent and bethe permanent, which we use as an approximation to the denominator of $q_\phi(\cdot | \mathbf y, \mathbf x)$.
\begin{definition}
Let $A \in \mathbb{R}_{n\times n}$. The \textbf{permanent} of $A$ is defined as follows:
\begin{gather}\label{eq:permanent}
    \textrm{perm}(A) = \sum_{\sigma \in S_n} \prod_{i=1}^n A_{i, \sigma_i}
\end{gather}
\end{definition}

\begin{theorem}\label{theorem: perm}
The denominator of $q_\phi(\cdot|\mathbf{y}, \mathbf{x})$ equals $\textrm{perm}(\exp(X))$.
\end{theorem}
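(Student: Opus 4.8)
The plan is to compute the denominator directly from the definition of the Gumbel-Matching distribution and recognize the resulting sum as a permanent. Recall that the distribution is specified only up to a normalizing constant: for $P \in \mathcal{P}_{n\times n}$ we have $q_\phi(P|\mathbf y, \mathbf x) \propto \exp{\langle X, P\rangle_F}$, so the normalizing denominator is exactly $Z = \sum_{P \in \mathcal{P}_{n\times n}} \exp{\langle X, P\rangle_F}$. The entire task is to show $Z = \textrm{perm}(\exp(X))$.

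First I would unpack the Frobenius inner product. Since $\langle X, P\rangle_F = \mathrm{trace}(X^T P) = \sum_{i,j} X_{ij} P_{ij}$, and since any $P \in \mathcal{P}_{n\times n}$ is in bijection with a permutation $\sigma \in S_n$ via $P_{ij} = 1$ iff $j = \sigma(i)$ (equivalently, $P = f_n(\sigma)$ under the bijection $f_n$ introduced in the preliminaries), the inner product collapses to a sum over the matched entries: $\langle X, P\rangle_F = \sum_{i=1}^n X_{i,\sigma(i)}$. This is the key observation, and it is the step where the combinatorial structure of permutation matrices does all the work.

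Next I would exponentiate and convert the sum into a product. Using $\exp(\sum_i X_{i,\sigma(i)}) = \prod_{i=1}^n \exp(X_{i,\sigma(i)}) = \prod_{i=1}^n (\exp(X))_{i,\sigma(i)}$, where $\exp(X)$ denotes the entrywise exponential, I rewrite the denominator as
\begin{gather}
Z = \sum_{\sigma \in S_n} \prod_{i=1}^n (\exp(X))_{i,\sigma(i)}.
\end{gather}
By Definition (\ref{eq:permanent}), this is precisely $\textrm{perm}(\exp(X))$, completing the proof.

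This argument is essentially a sequence of routine manipulations, so I do not anticipate a genuine obstacle; the only point requiring care is notational hygiene. I would make sure $\exp(X)$ is consistently interpreted as the entrywise exponential rather than the matrix exponential, and that the bijection between $\mathcal{P}_{n\times n}$ and $S_n$ is invoked cleanly so that summing over permutation matrices and summing over permutations are identified without ambiguity. With those conventions pinned down, the chain of equalities is immediate.
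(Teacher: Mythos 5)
Your proof is correct and follows essentially the same route as the paper: expand the Frobenius inner product via the bijection between $\mathcal{P}_{n\times n}$ and $S_n$, turn the exponential of the sum into a product of entrywise exponentials, and recognize the result as $\textrm{perm}(\exp(X))$ by definition. Your only addition is spelling out the notational care (entrywise versus matrix exponential, and the bijection), which the paper leaves implicit.
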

\begin{proof}
\begin{align*}
    \sum_{P\in \mathcal{P}_{n\times n}} \exp{\langle X,P\rangle_F}
    &= \sum_{\sigma \in S_n} \exp({\sum_{i=1}^n X_{i,\sigma(i)}}) \\
    &= \sum_{\sigma \in S_n} \prod_{i=1}^n (\exp(X))_{i,\sigma(i)} \\
    &= \textrm{perm}(\exp(X))
\end{align*}
\end{proof}

\begin{definition}\label{def:bethe}
\textnormal{\citep{bethe2011, bethe2018}}
Let $A \in \mathbb{R}_{n\times n}^+$. The \textbf{bethe permanent} of $A$ is defined as follows:
\begin{gather}\label{eq:permanent}
    \textrm{perm}_B (A) = \exp{(\max_{\gamma \in \mathcal{B}_{n\times n}}\sum_{i, j} (\gamma_{i,j}\log{A_{i,j}} - \gamma_{i,j} \log{\gamma_{i,j}} + (1-\gamma_{i,j}) \log{(1-\gamma_{i,j})}))}
\end{gather}
\end{definition}

\begin{theorem} \textnormal{\citep{bethe2018}}
Let $A \in \mathbb{R}_{n\times n}^+$. Then, $\sqrt{2}^{-n}\textrm{perm}(A) \le \textrm{perm}_B (A) \le \textrm{perm}(A)$.
\end{theorem}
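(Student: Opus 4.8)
The plan is to read the two inequalities through the standard interpretation of $\textrm{perm}(A)$ as the partition function of a graphical model and $\textrm{perm}_B(A)$ as its Bethe approximation, and then to bound the approximation error from both sides. Concretely, associate to $A$ the distribution on perfect matchings of the complete bipartite graph $K_{n,n}$ that assigns weight $\prod_i A_{i,\sigma(i)}$ to the matching encoded by $\sigma \in S_n$; by the definition of the permanent its normalizing constant is exactly $\textrm{perm}(A)$. A doubly stochastic matrix $\gamma \in \mathcal{B}_{n\times n}$ is precisely a consistent collection of edge-appearance pseudo-marginals for this model, and the functional maximized in the definition of $\textrm{perm}_B(A)$ is the negative Bethe free energy: the term $\sum_{i,j}\gamma_{i,j}\log A_{i,j}$ is the mean log-weight (the energy), while $-\sum_{i,j}\gamma_{i,j}\log\gamma_{i,j} + \sum_{i,j}(1-\gamma_{i,j})\log(1-\gamma_{i,j})$ is the Bethe entropy. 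Thus $\log \textrm{perm}_B(A)$ is a variational value, and the theorem asserts that it brackets $\log\textrm{perm}(A)$ from within an additive $\tfrac{n}{2}\log 2$ below.

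For the upper bound $\textrm{perm}_B(A) \le \textrm{perm}(A)$ I would use the graph-cover characterization of the Bethe permanent. The key lemma is that the Bethe value arises as a geometric-mean limit $\textrm{perm}_B(A) = \lim_{M\to\infty}\big(\mathbb{E}_{\boldsymbol{P}}\,\textrm{perm}(A^{\uparrow \boldsymbol{P}})\big)^{1/M}$, where $A^{\uparrow \boldsymbol{P}}$ ranges over uniformly random degree-$M$ lifts of the bipartite weight structure of $A$. Because every perfect matching of a degree-$M$ cover projects fiberwise onto matchings of the base graph, one obtains the deterministic domination $\textrm{perm}(A^{\uparrow \boldsymbol{P}}) \le \textrm{perm}(A)^M$ for every lift; substituting this into the limit and taking $M$-th roots yields $\textrm{perm}_B(A)\le\textrm{perm}(A)$. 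Alternatively the same bound follows from a capacity inequality applied to the real stable generating polynomial $p_A(x) = \prod_{i}\big(\sum_j A_{i,j}x_j\big)$, whose $x_1\cdots x_n$ coefficient is $\textrm{perm}(A)$; either route settles this direction cleanly, and both rely only on the fact that the Bethe free energy for the permanent model is convex in $\gamma$, so the maximizer is unique.

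The lower bound $\sqrt{2}^{-n}\textrm{perm}(A)\le\textrm{perm}_B(A)$ is the substantive part, and I expect it to be the main obstacle. The strategy is to relate the (unique) Bethe maximizer to the capacity of $p_A$ at its capacity-achieving scaling point, and then to compare this capacity against the true permanent. The cleanest known route passes through the theory of real stable polynomials: writing $\textrm{perm}(A)$ as a coefficient extracted from $p_A$ and eliminating the variables one at a time at the optimal scaling, the loss incurred at each step is controlled by a one-variable log-concavity estimate whose extremal case is a two-term polynomial and contributes a factor of exactly $\sqrt{2}$. Multiplying the $n$ per-coordinate factors produces the constant $\sqrt{2}^{\,n}$, and exhibiting an instance that attains the extremal configuration at every coordinate shows the constant is sharp. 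The entire difficulty is in establishing this \emph{tight} per-coordinate $\sqrt{2}$ estimate and propagating it across all $n$ eliminations without accumulating slack; a cruder entropy or stability argument only yields a weaker constant such as $e$ or $2$ per dimension, so the work is precisely in matching the optimal $\sqrt{2}$.
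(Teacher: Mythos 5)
The first thing to note is that the paper does not prove this statement at all: it is imported verbatim, with a citation, from prior work on the Bethe approximation of the permanent, and is used purely as a black box to justify replacing the intractable normalizer $\sum_{P \in \mathcal{P}_{n\times n}} \exp \langle X, P \rangle_F = \textrm{perm}(\exp(X))$ by the tractable $\textrm{perm}_B(\exp(X))$. So there is no internal proof to compare against; a blind proof of this theorem amounts to reproving a research-level result (the upper bound is Gurvits-type, the tight $\sqrt{2}^{-n}$ lower bound is the Anari--Rezaei analysis).

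Measured against that bar, your proposal is an accurate map of how the literature proves the theorem, but it is not a proof. The setup (matchings of $K_{n,n}$, the maximized functional as negative Bethe free energy over the doubly stochastic polytope) is correct, and the two routes you name for $\textrm{perm}_B(A) \le \textrm{perm}(A)$ --- graph covers, or capacity of the real stable polynomial $\prod_i \bigl( \sum_j A_{i,j} x_j \bigr)$ --- are the standard ones. But even there you gloss over real content: the claimed pointwise domination $\textrm{perm}(A^{\uparrow \boldsymbol{P}}) \le \textrm{perm}(A)^M$ is not automatic, since a cover matching projects to an $M$-regular bipartite multigraph that decomposes into $M$ base matchings, and one must control how many cover matchings sit over a given decomposition --- that bookkeeping \emph{is} the proof; and your parenthetical that both routes ``rely only on'' convexity of the Bethe free energy is wrong (the cover argument makes no use of convexity; convexity makes the variational problem well-posed, it does not yield the inequality). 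More seriously, for the lower bound $\sqrt{2}^{-n}\,\textrm{perm}(A) \le \textrm{perm}_B(A)$ --- which is the actual content of the cited theorem --- your text reduces to ``establish a tight per-coordinate $\sqrt{2}$ estimate and propagate it across all $n$ eliminations without slack,'' together with the admission that this is ``the entire difficulty.'' That is a restatement of the theorem, not a step toward proving it: nothing in the proposal says what the one-variable log-concavity estimate is, why its extremal configuration is a two-term polynomial, or why the per-coordinate losses multiply rather than accumulate extra slack. As it stands, the proposal correctly locates the theorem in the literature but leaves the substantive inequality unproved.
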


The $\gamma$ in Definition \ref{def:bethe} can be calculated using the message passing algorithm in Lemma 29 of \cite{bethe2011}. An efficient implementation has recently been introduced in Appendix C of \cite{mena2020sinkhornvariational}. Therefore, we can use $\textrm{perm}_B (\exp{(X)})$ to approximate the denominator of $q_\phi(\cdot|\mathbf{y}, \mathbf{x})$, and we can then use policy gradient to compute $\nabla_\phi L(\phi)$ in Equation (\ref{eq:L_phi}).





    

\section{Detailed Training Process and Hyperparameter Settings}
\label{app:hyperparams}

For all experiments, we apply dropout = 0.1 \citep{dropout} and label smoothing = 0.1. We apply Adam Optimizer \citep{adamopt} with $\beta_1=0.99, \beta_2=0.999$ for MS-COCO, and $\beta_1=0.99, \beta_2=0.98$ for all other tasks. For baseline experiments, we use a batch size of 64 for Django and MS-COCO, and 128 for Gigaword and WMT. We apply 8000 warmup steps and decrease the learning rate linearly from 1e-4 to zero afterwards. We train the baseline until the performance plateaus.
 
For our VOI model, we train on Django for a total of 300 epochs (150k gradient steps), MS-COCO for 20 epochs (350k gradient steps), Gigaword for 16 epochs (1M gradient steps), and WMT16 Ro-En for 110 epochs (1.2M gradient steps). We use a batch size of 32 for MS-COCO and Django, 50 for Gigaword, and 54 for WMT. We sample $K=4$ latents per training sample for the first three datasets, and $K=3$ for WMT. Due to constraints in computational resource (we did not have access to industry-level infrastructures to utilize a large amount of GPU memory on multiple machines), we were unable to scale WMT to larger batch size and larger $K$ (as the WMT model with batch size of 54 and $K=3$ would already take up 90G of GPU memory). We also did not experiment with a large batch size (e.g. 128+) for other datasets. We leave the discovery of better training schemes and exhaustive hyperparameter search for future work.

We set the initial decoder autoregressive language model learning rate to be 5e-5 and the encoder permutation generator learning rate to be 5e-6. To facilitate encoder learning, we hope to propagate information from the decoder to the encoder. Therefore, we train the encoder permutation generator and the decoder autoregressive language model with a shared Transformer encoder and a shared token embedding for about the first 15-20\% of steps (i.e. 4 epochs for COCO, 50 epochs for Django, 3 epochs for Gigaword, and 20 epochs for WMT). We then separate the Transformer encoders and the token embeddings, and we add a token embedding cosine alignment loss between the encoder permutation generator and the decoder autoregressive language model with a coefficient of 100.0 for the rest of the training steps. When the Transformer encoder is shared, we set the entropy coefficient $\beta=0.5$ for non-WMT tasks and $0.3$ for WMT. 

After we separate the Transformer encoders, for MS-COCO, we anneal $\beta$ with a log-linear schedule from 0.5 to 0.1 for 10 epochs. We then anneal $\beta$ from 0.1 to 0.03 for 5 epochs and decrease the learning rate to (2e-5, 2e-6) for the decoder and the encoder respectively, as the encoder starts to sample very similar orderings for a single training data. We observe that training either VOI or the fixed ordering models for too long leads to overfitting. Finetuning VOI with the encoder fixed does not help and causes the performance to slightly drop. 

For Django, we set the learning rates to be (3e-5, 3e-6). We log-anneal $\beta$ from 0.5 to 0.002 for 200 epochs. We then fix the encoder and finetune the decoder autoregressive model for 50 epochs, with learning rate linearly decreasing to zero and a batch size of 64.

For Gigaword, we anneal $\beta$ log-linearly from 0.5 to 0.03 in 8 epochs (500k gradient steps). We then fix the encoder and fine-tune the decoder with a batch size of 128 for 5 epochs with learning rate linearly decreasing from 7e-5 to 0. We observe that, compared to COCO and Django, this finetuning step significantly improves VOI's performance and raises the ROUGE score by around 1.5 to 2.0. 

For WMT, we anneal $\beta$ log-linearly from 0.3 to 0.01 in 40 epochs. We then anneal $\beta$ to 5e-4 in 30 epochs with learning rate decrased from (5e-5, 5e-6) to (2e-5, 2e-6) as the encoder starts sampling similar permutations. We then fix the encoder and finetune the decoder with a batch size of 128 for 20 epochs with learning rate linearly decreasing from 2e-5 to 0. We observe that this finetuning step also significantly benefits VOI's performance and improves the BLEU score by around 1.5 points.

\section{Parts Of Speech Mappings}
\label{app:partofspeech}
The parts of speech used in our Order Analysis section correspond to the NLTK Universal Tagset. In the below table, we provide mappings for the tag identifiers used in our main paper. More information about the specific NLTK tags can be found at the following url: \url{http://www.nltk.org/book/ch05.html}.

\begin{table}[!htbp]
    \centering
    \begin{tabular}{l|ll}
    \textbf{Tag}	& \textbf{Meaning}	& \textbf{English Examples} \\
    \hline
    ADJ	& adjective	& new, good, high, special, big, local \\
    ADP	& adposition	& on, of, at, with, by, into, under \\
    ADV	& adverb	& really, already, still, early, now \\
    CONJ	& conjunction	& and, or, but, if, while, although \\
    DET	& determiner, article	& the, a, some, most, every, no, which \\
    NOUN	& noun	& year, home, costs, time, Africa \\
    NUM	& numeral	& twenty-four, fourth, 1991, 14:24 \\
    PRT	& particle	& at, on, out, over per, that, up, with \\
    PRON	& pronoun	& he, their, her, its, my, I, us \\
    VERB	& verb	& is, say, told, given, playing, would \\
    .	& punctuation marks	& . , ; ! \\
    X	& other	& ersatz, esprit, dunno, gr8, univeristy \\
    \end{tabular}
    \caption{NLTK Universal Tagset.}
    \label{tab:tagset}
\end{table}

\clearpage
\section{Visualizations of Sequence Generation}

\subsection{COCO}

We visualize the generation order inferred by \Variational for COCO. Sequences are generated using beam search over both tokens and their insertion positions, using a beam size of 3. Bounding boxes that correspond to region-features calculated using bottom-up attention are superimposed on the image, with an opacity value proportional to the magnitude of their softmax attention value in the final cross-attention layer in the language model.

\begin{figure}[h]
    \centering
    \includegraphics[width=0.8\linewidth, trim=3.0cm 1.75cm 1.5cm 1.5cm, clip]{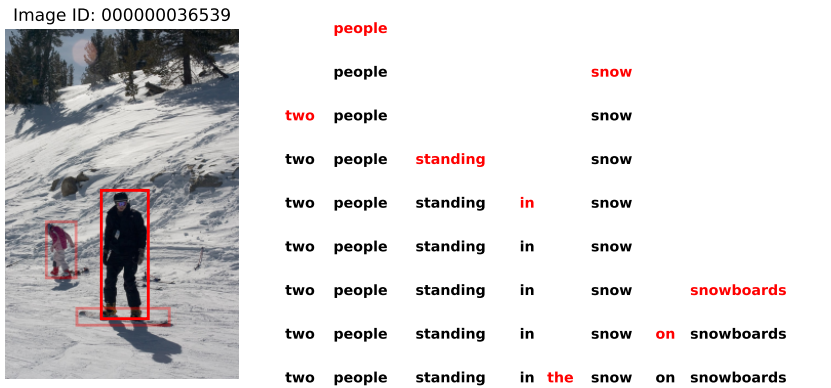}
    \caption{Generation order inferred by \textbf{Variational Order Inference}. Without supervision over its generation order, nor a domain-specific initialization, nor a prior to aid learning, the model learns an adaptive strategy that prioritizes object names---in this case, \textit{people} and \textit{snow}.}
    \label{fig:000000036539_demo}
\end{figure}

\begin{figure}[h]
    \centering
    \includegraphics[width=\linewidth]{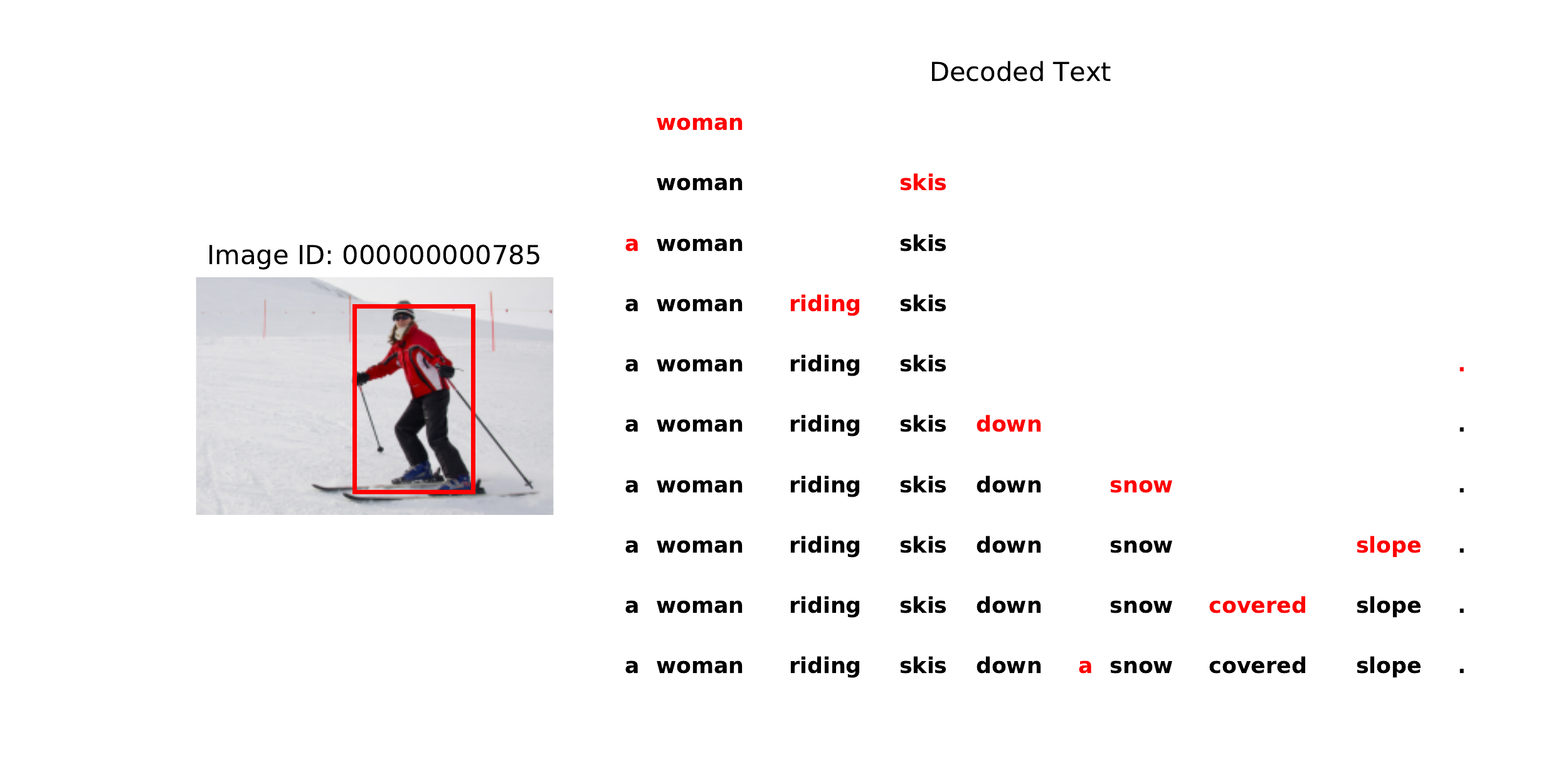}
    \caption{Generation order inferred by \textbf{Ours-VOI} for an image from the COCO 2017 validation set with the image identifier $\mathbf{000000000785}$. }
    \label{fig:000000000785_gen_order_ours}
\end{figure}

\begin{figure}
    \centering
    \includegraphics[width=\linewidth]{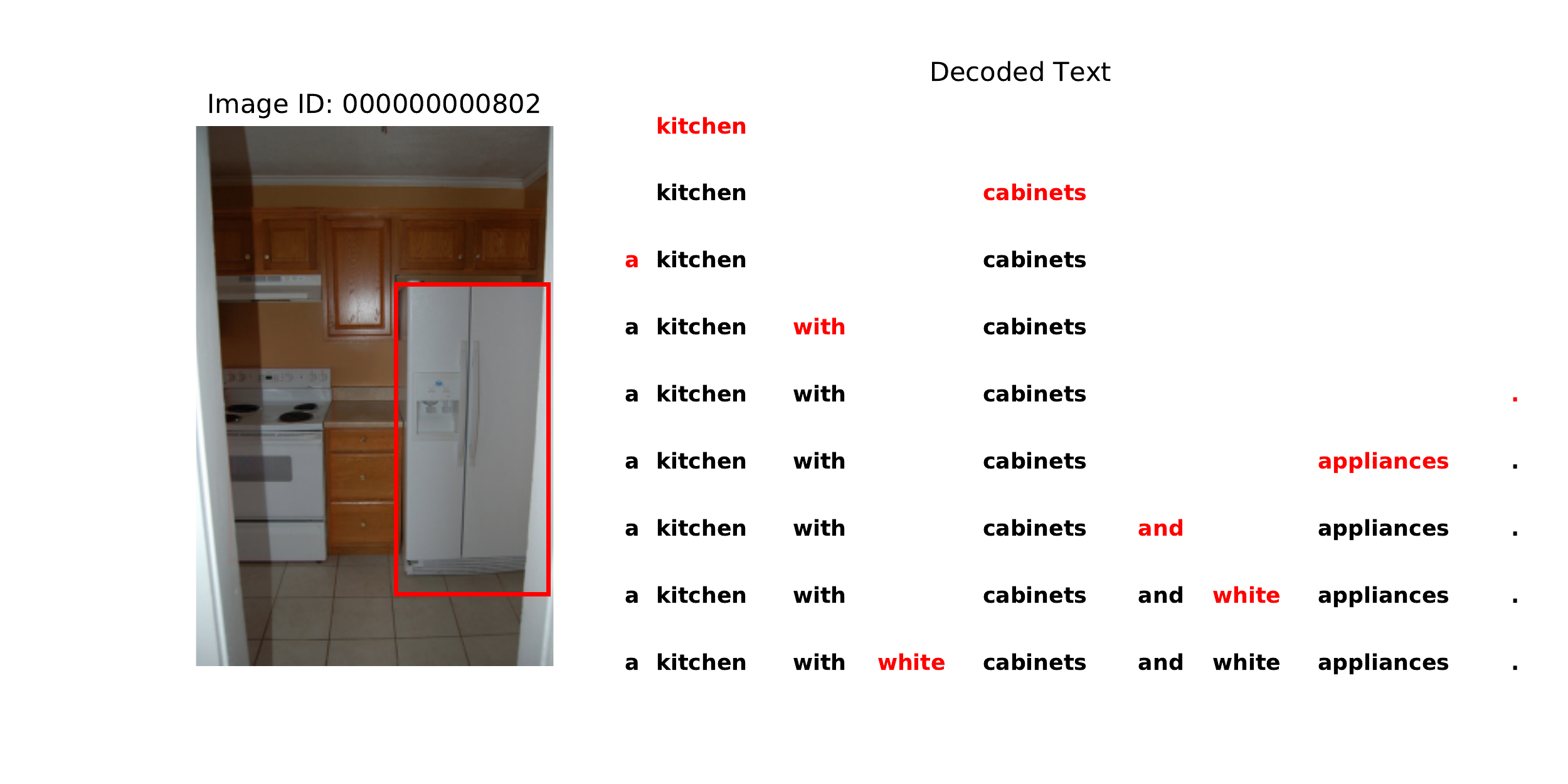}
    \caption{Generation order inferred by \textbf{Ours-VOI} for an image from the COCO 2017 validation set with the image identifier $\mathbf{000000000802}$.}
    \label{fig:000000000802_gen_order_ours}
\end{figure}

\begin{figure}
    \centering
    \includegraphics[width=\linewidth]{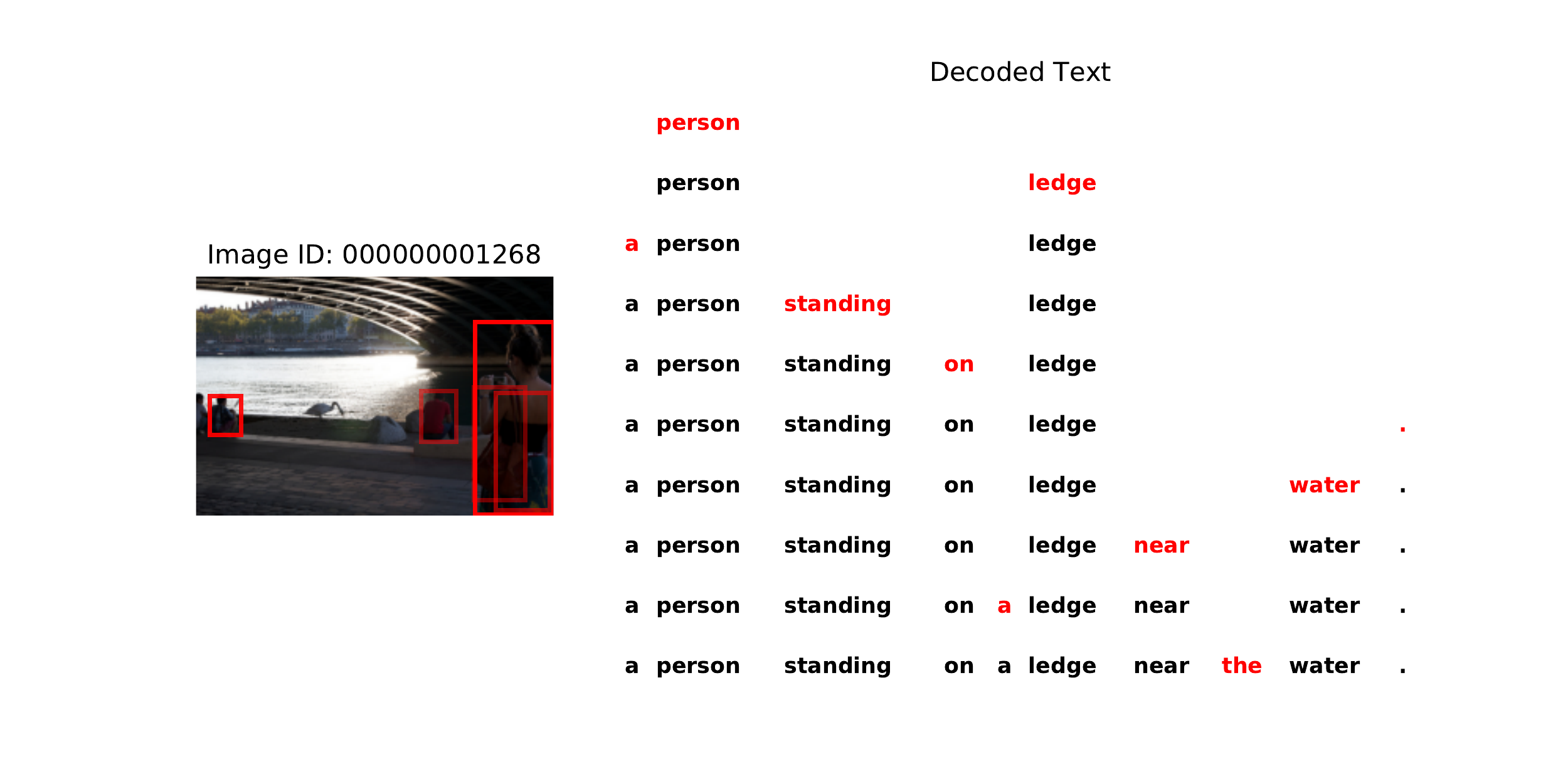}
    \caption{Generation order inferred by \textbf{Ours-VOI} for an image from the COCO 2017 validation set with the image identifier $\mathbf{000000001268}$.}
    \label{fig:000000001268_gen_order_ours}
\end{figure}

\begin{figure}
    \centering
    \includegraphics[width=\linewidth]{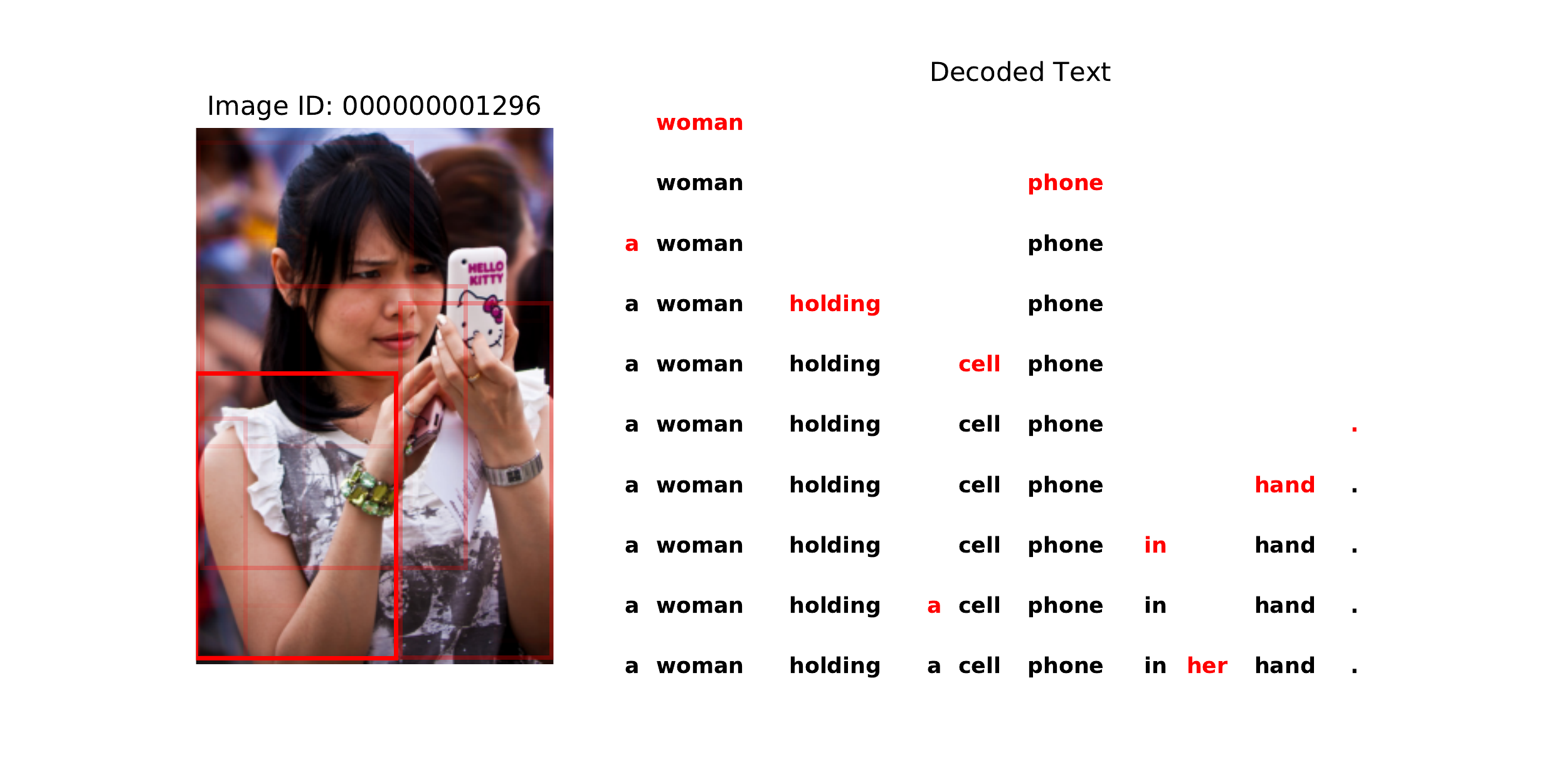}
    \caption{Generation order inferred by \textbf{Ours-VOI} for an image from the COCO 2017 validation set with the image identifier $\mathbf{000000001296}$.}
    \label{fig:000000001296_gen_order_ours}
\end{figure}

\begin{figure}
    \centering
    \includegraphics[width=\linewidth]{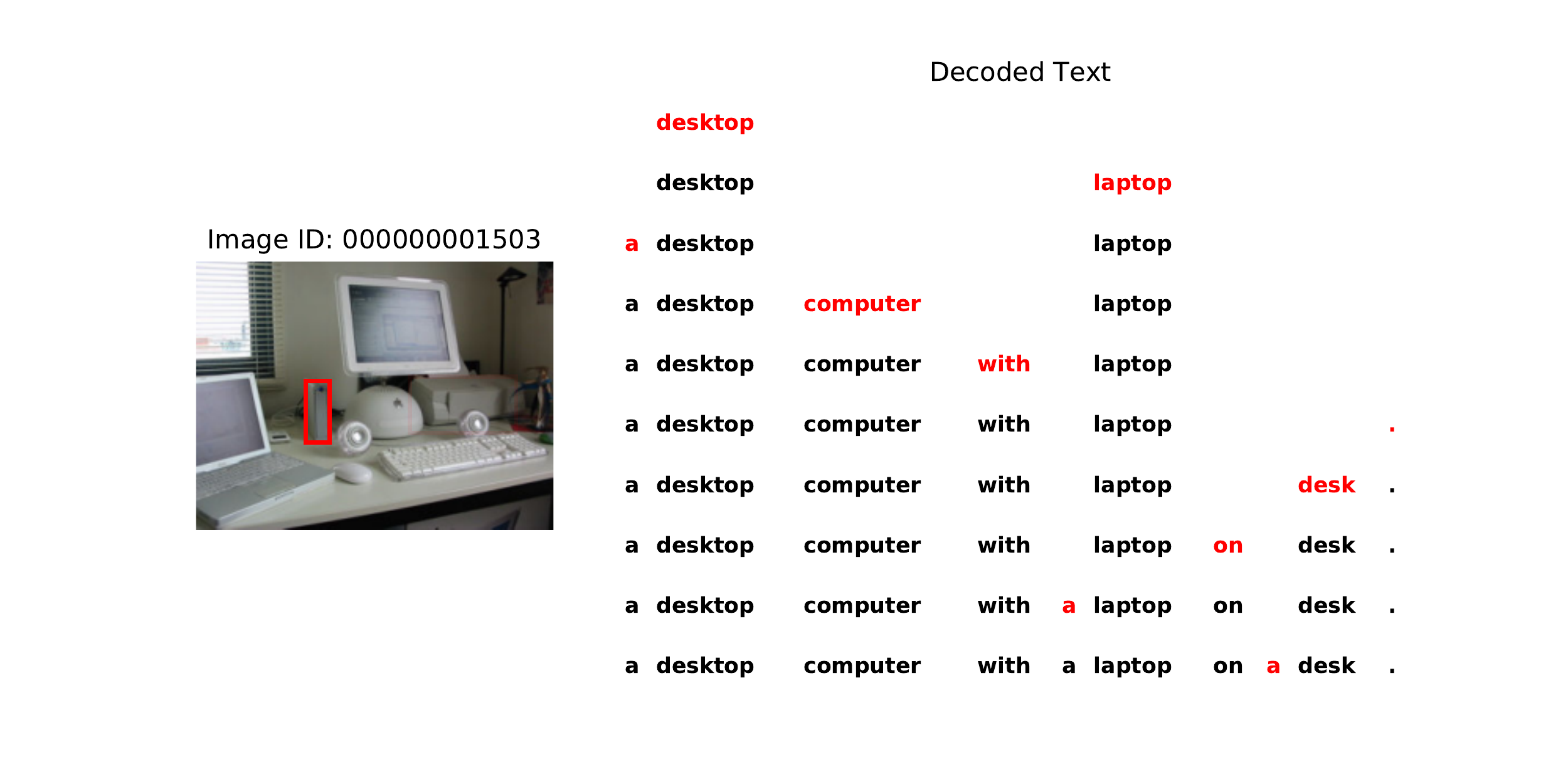}
    \caption{Generation order inferred by \textbf{Ours-VOI} for an image from the COCO 2017 validation set with the image identifier $\mathbf{000000001503}$.}
    \label{fig:000000001503_gen_order_ours}
\end{figure}

\begin{figure}
    \centering
    \includegraphics[width=\linewidth]{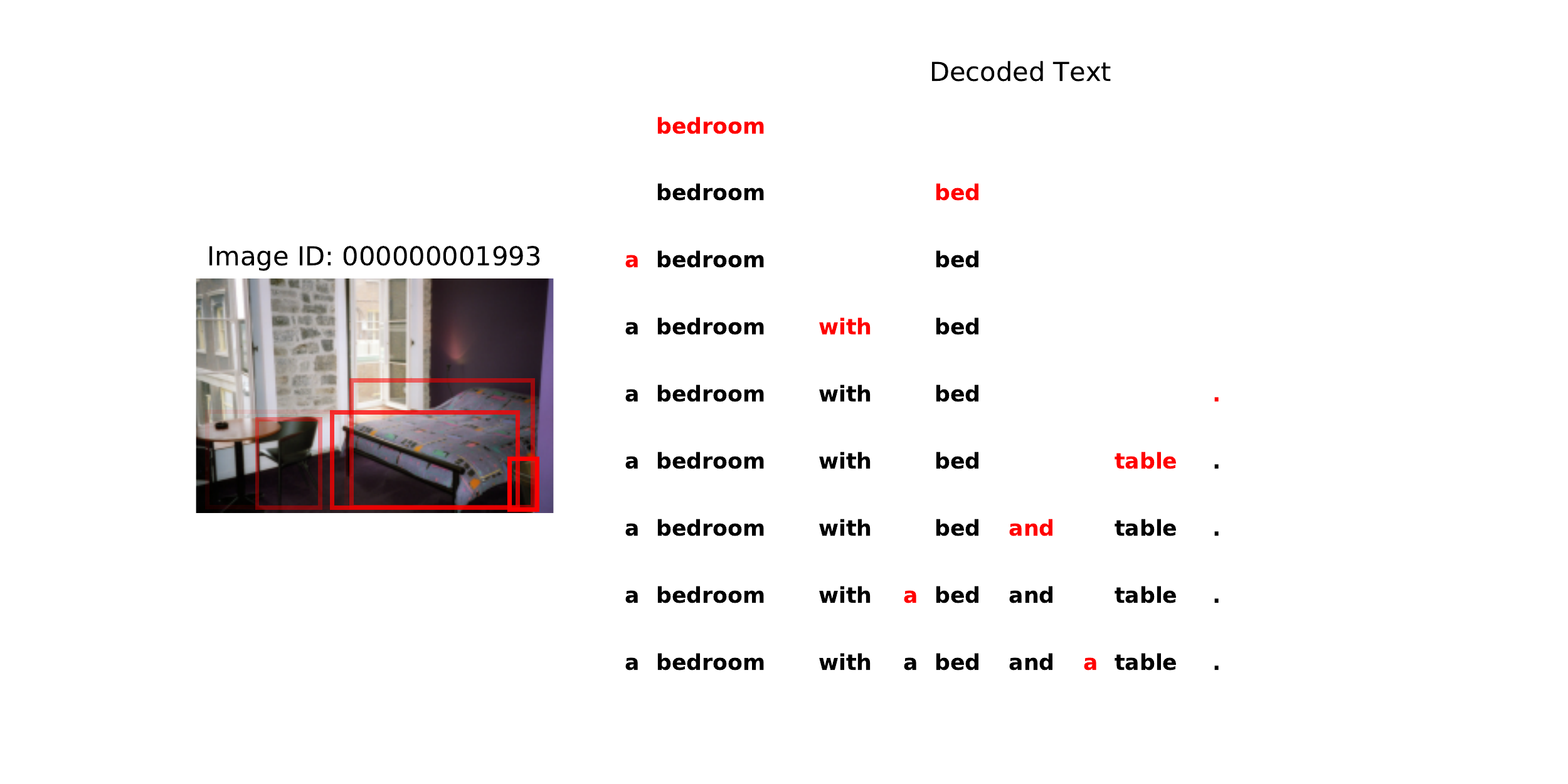}
    \caption{Generation order inferred by \textbf{Ours-VOI} for an image from the COCO 2017 validation set with the image identifier $\mathbf{000000001993}$. }
    \label{fig:000000001993_gen_order_ours}
\end{figure}

\begin{figure}
    \centering
    \includegraphics[width=\linewidth]{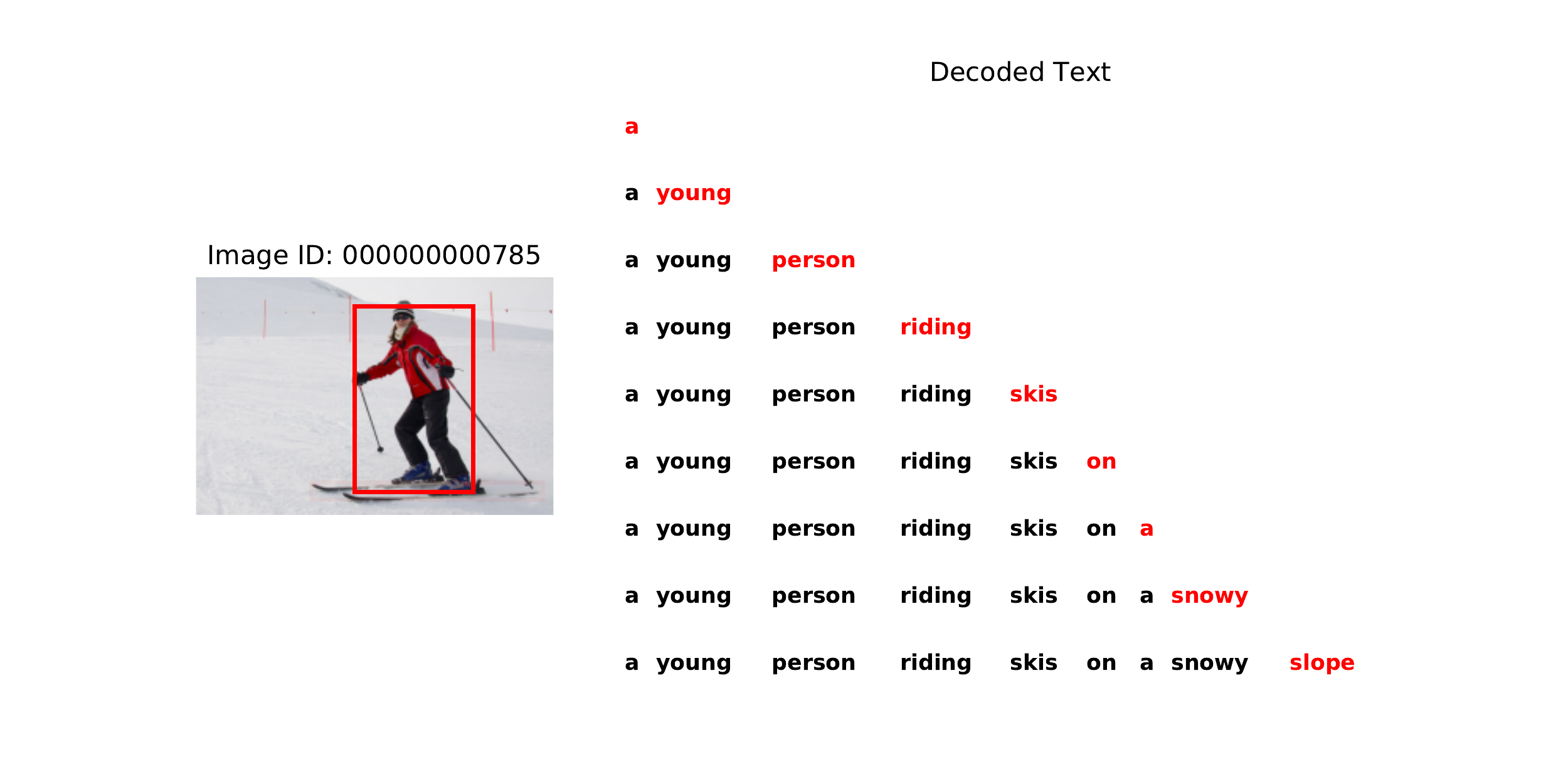}
    \caption{Generation order inferred by \textbf{Ours-L2R} for an image from the COCO 2017 validation set with the image identifier $\mathbf{000000000785}$. }
    \label{fig:000000000785_gen_order_l2r}
\end{figure}

\begin{figure}
    \centering
    \includegraphics[width=\linewidth]{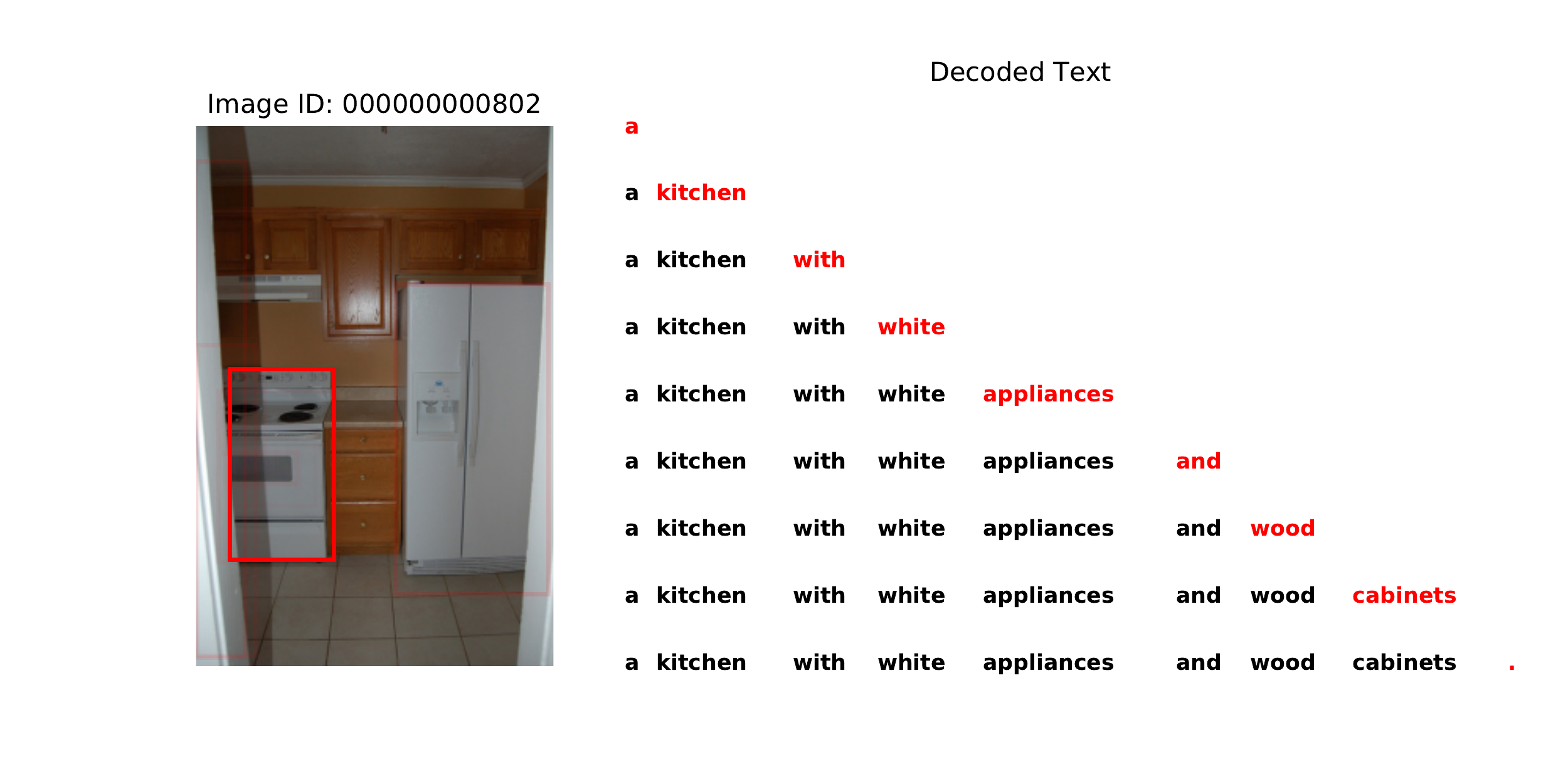}
    \caption{Generation order inferred by \textbf{Ours-L2R} for an image from the COCO 2017 validation set with the image identifier $\mathbf{000000000802}$.}
    \label{fig:000000000802_gen_order_l2r}
\end{figure}

\begin{figure}
    \centering
    \includegraphics[width=\linewidth]{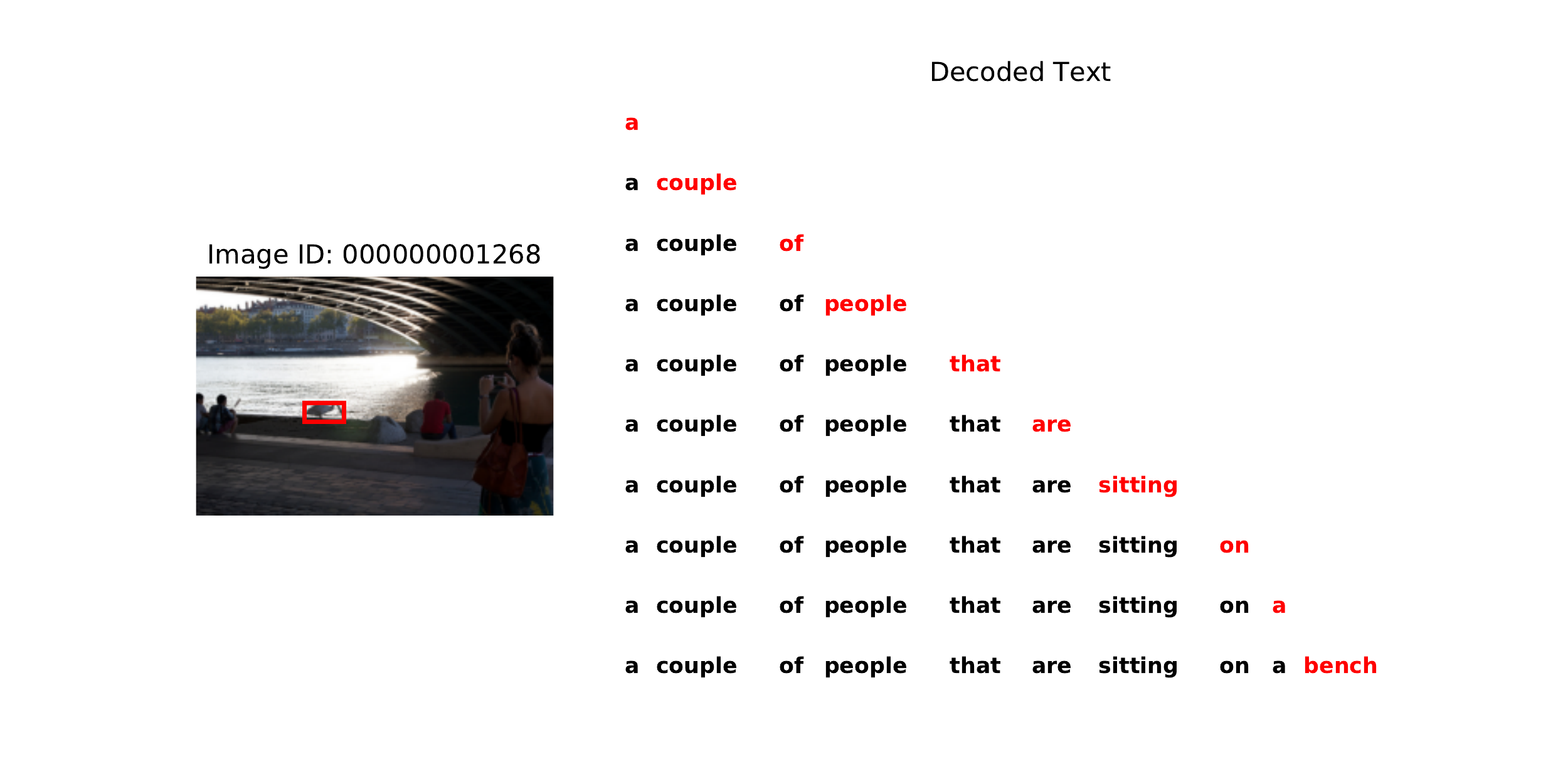}
    \caption{Generation order inferred by \textbf{Ours-L2R} for an image from the COCO 2017 validation set with the image identifier $\mathbf{000000001268}$.}
    \label{fig:000000001268_gen_order_l2r}
\end{figure}

\begin{figure}
    \centering
    \includegraphics[width=\linewidth]{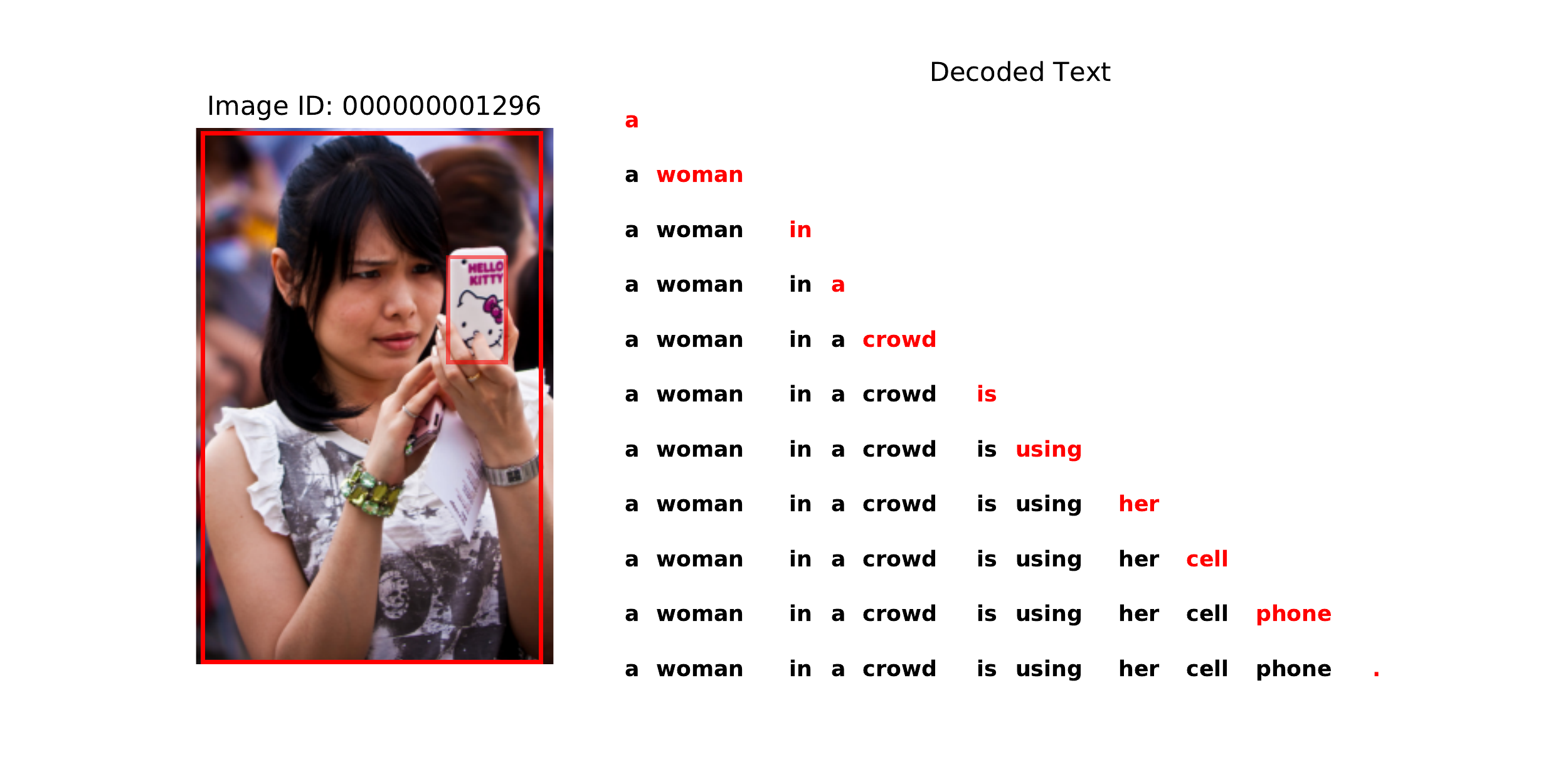}
    \caption{Generation order inferred by \textbf{Ours-L2R} for an image from the COCO 2017 validation set with the image identifier $\mathbf{000000001296}$.}
    \label{fig:000000001296_gen_order_l2r}
\end{figure}

\begin{figure}
    \centering
    \includegraphics[width=\linewidth]{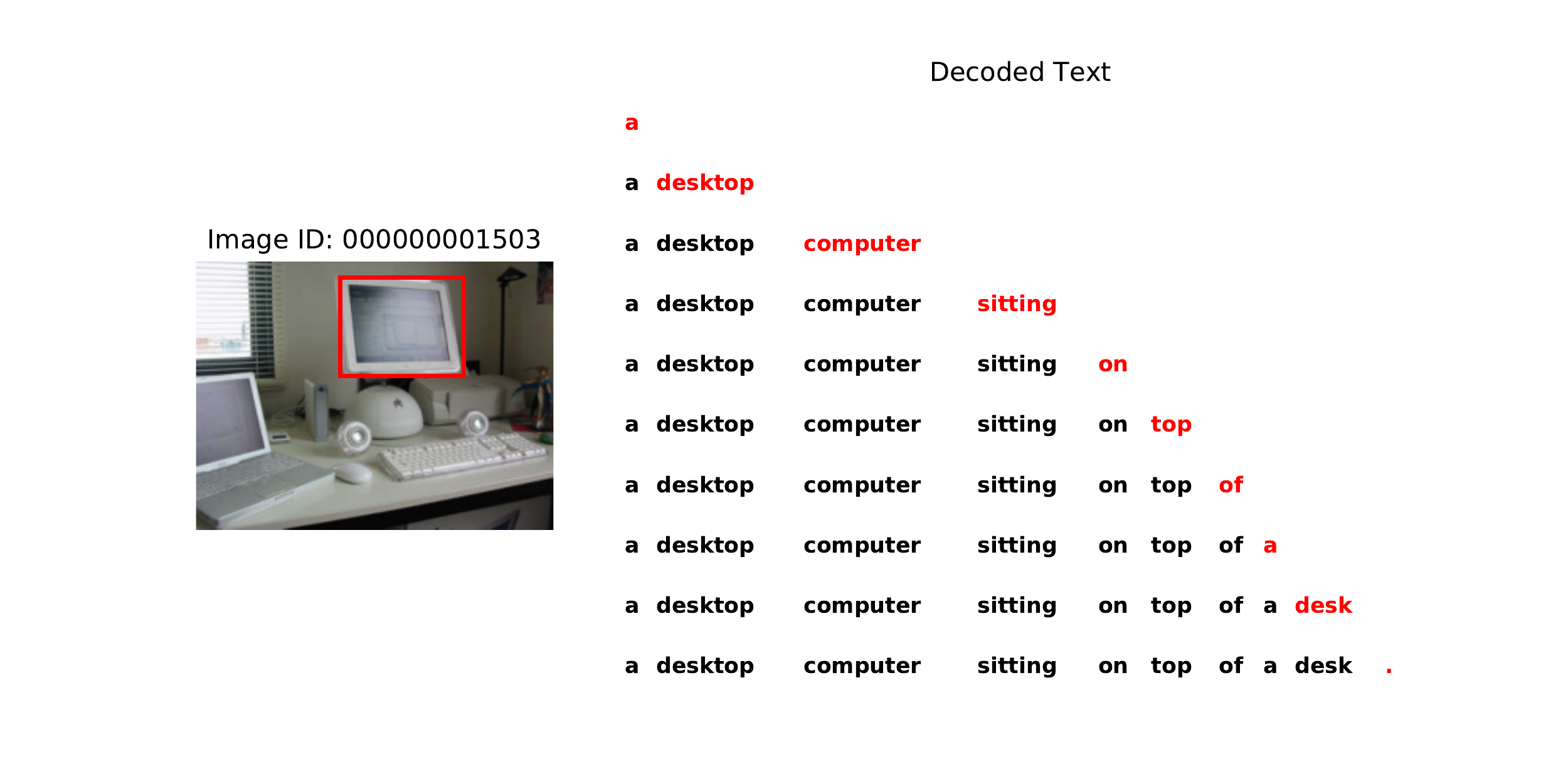}
    \caption{Generation order inferred by \textbf{Ours-L2R} for an image from the COCO 2017 validation set with the image identifier $\mathbf{000000001503}$.}
    \label{fig:000000001503_gen_order_l2r}
\end{figure}

\begin{figure}
    \centering
    \includegraphics[width=\linewidth]{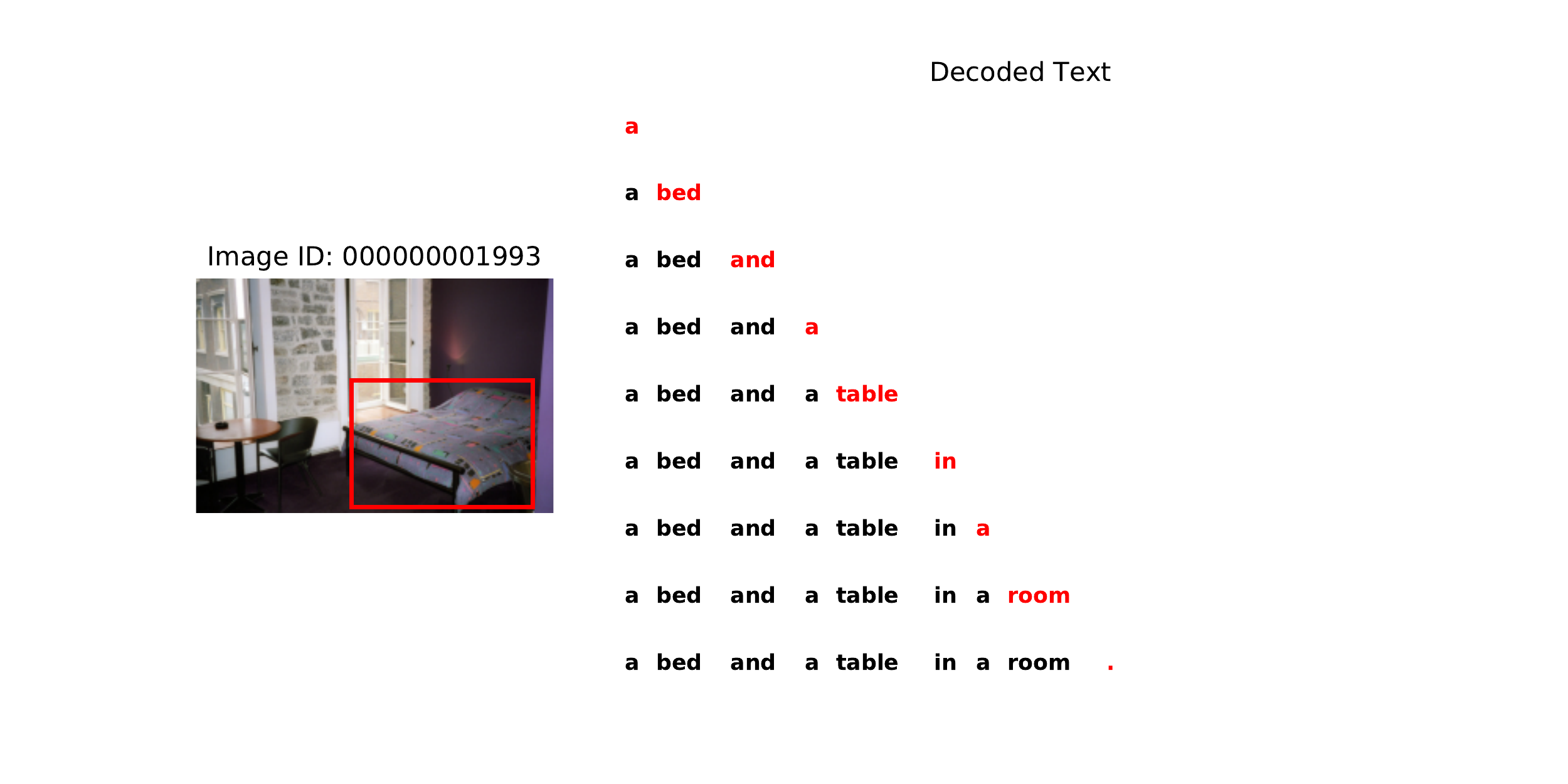}
    \caption{Generation order inferred by \textbf{Ours-L2R} for an image from the COCO 2017 validation set with the image identifier $\mathbf{000000001993}$. }
    \label{fig:000000001993_gen_order_l2r}
\end{figure}

\begin{figure}
    \centering
    \includegraphics[width=\linewidth]{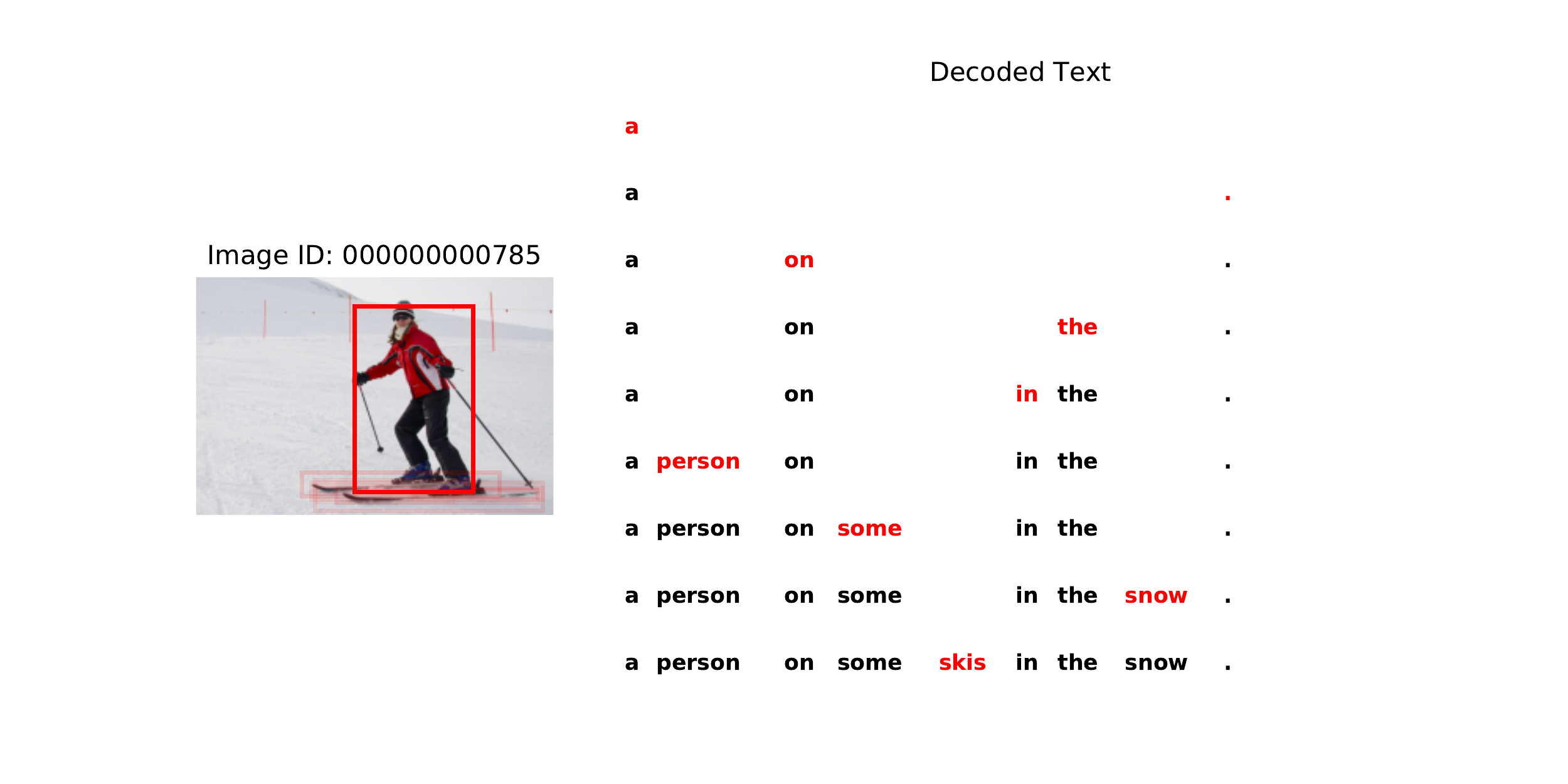}
    \caption{Generation order inferred by \textbf{Ours-Common} for an image from the COCO 2017 validation set with the image identifier $\mathbf{000000000785}$. }
    \label{fig:000000000785_gen_order_common}
\end{figure}

\begin{figure}
    \centering
    \includegraphics[width=\linewidth]{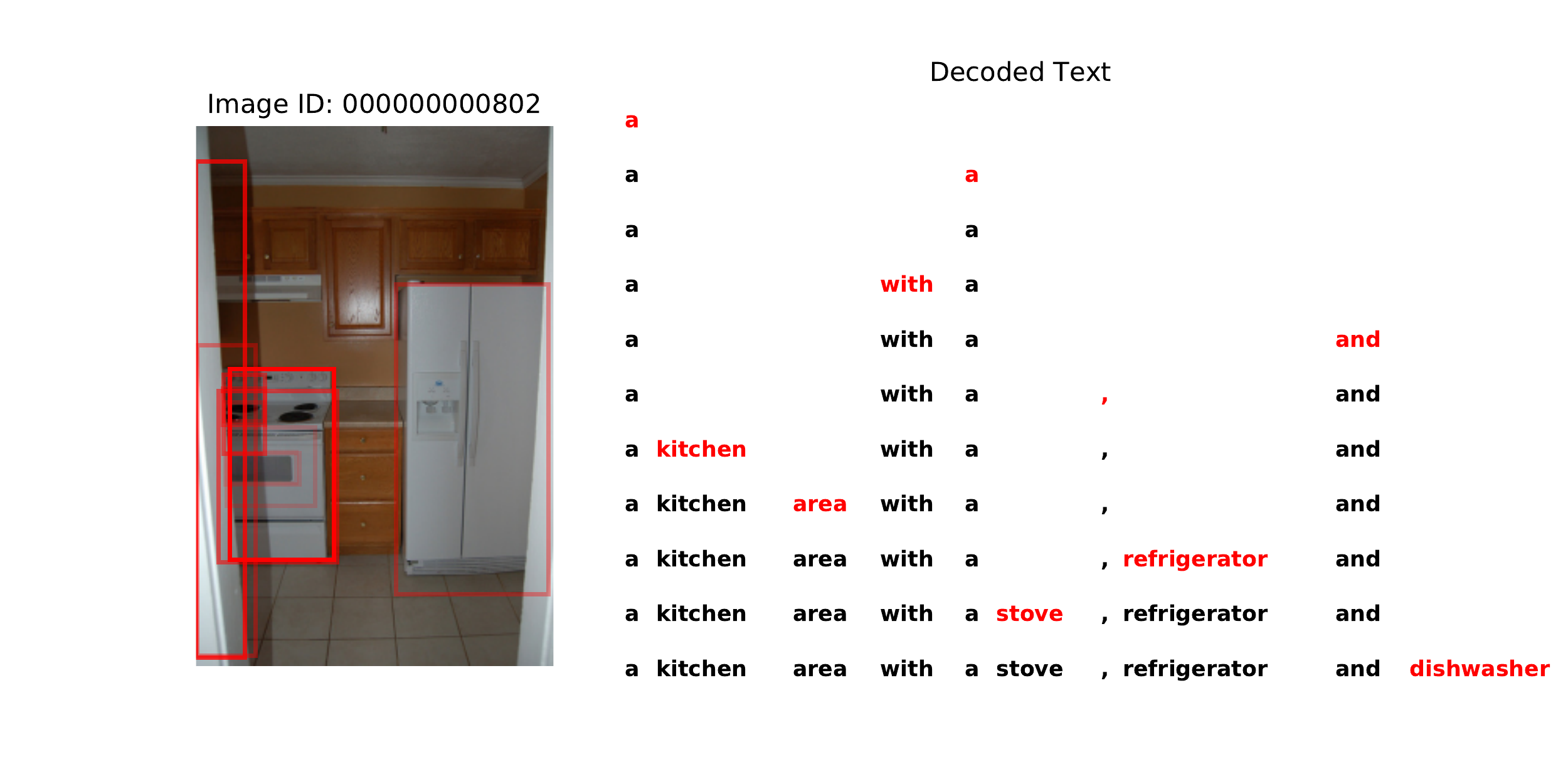}
    \caption{Generation order inferred by \textbf{Ours-Common} for an image from the COCO 2017 validation set with the image identifier $\mathbf{000000000802}$.}
    \label{fig:000000000802_gen_order_common}
\end{figure}

\begin{figure}
    \centering
    \includegraphics[width=\linewidth]{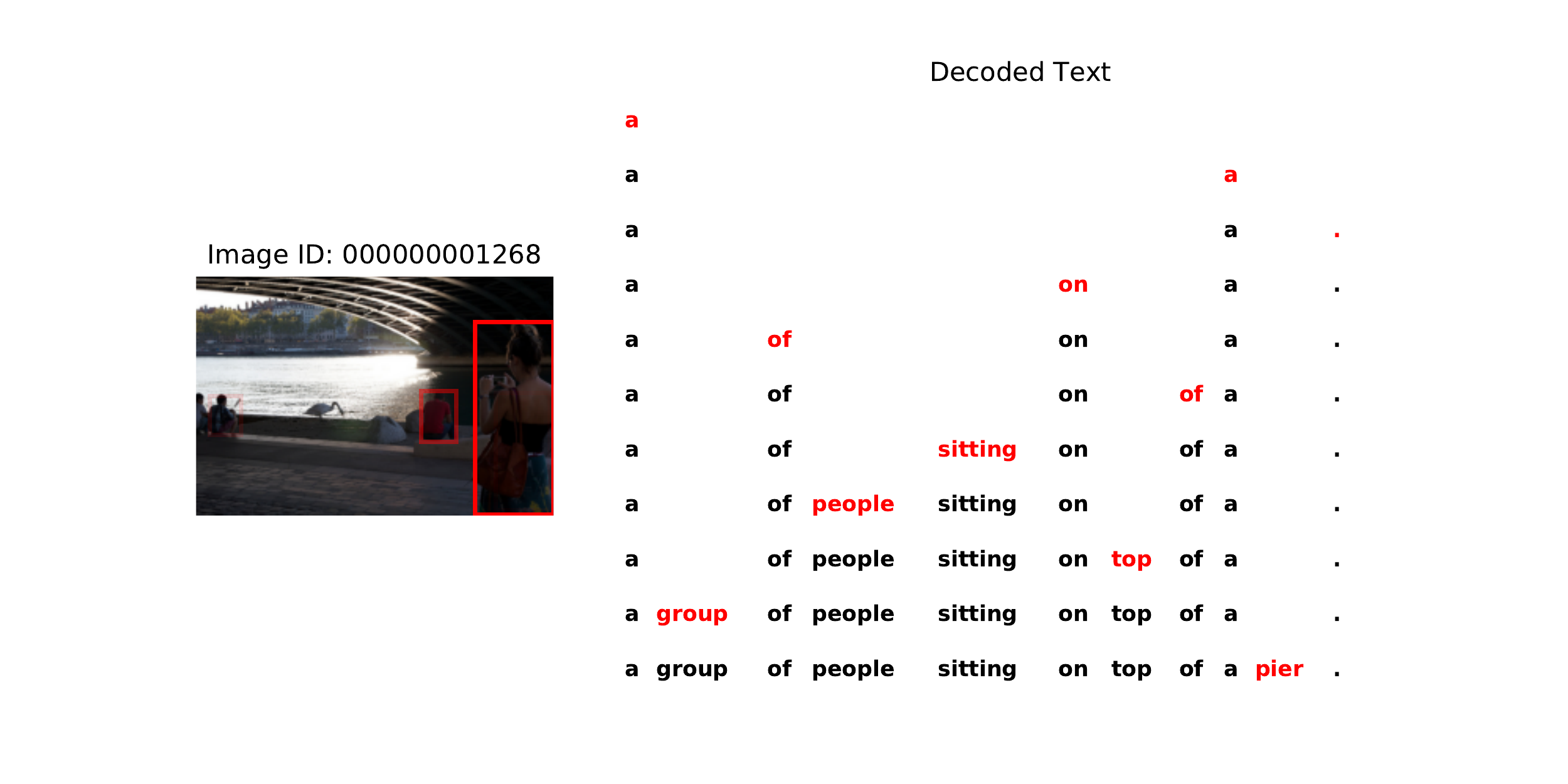}
    \caption{Generation order inferred by \textbf{Ours-Common} for an image from the COCO 2017 validation set with the image identifier $\mathbf{000000001268}$.}
    \label{fig:000000001268_gen_order_common}
\end{figure}

\begin{figure}
    \centering
    \includegraphics[width=\linewidth]{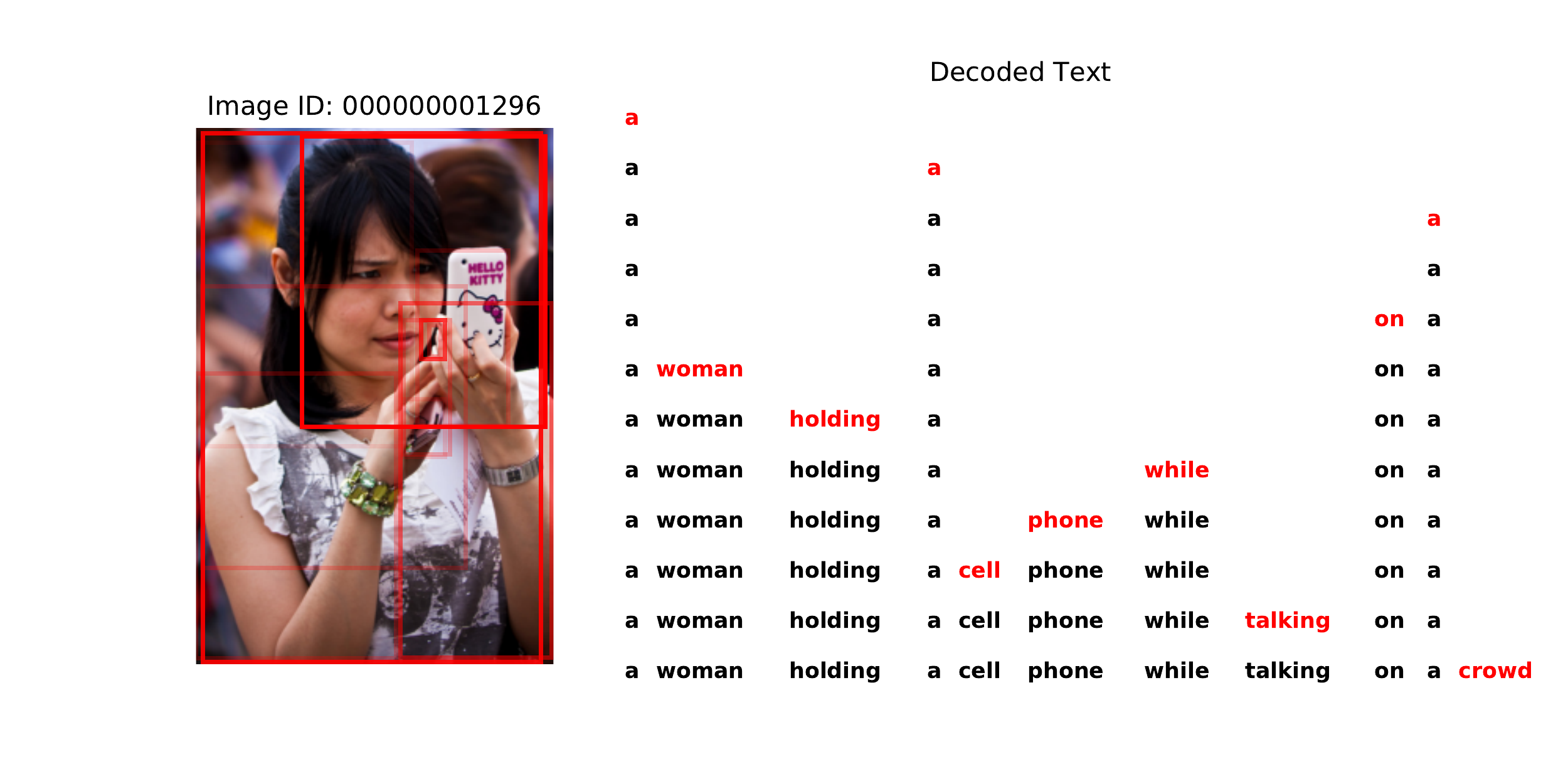}
    \caption{Generation order inferred by \textbf{Ours-Common} for an image from the COCO 2017 validation set with the image identifier $\mathbf{000000001296}$.}
    \label{fig:000000001296_gen_order_common}
\end{figure}

\begin{figure}
    \centering
    \includegraphics[width=\linewidth]{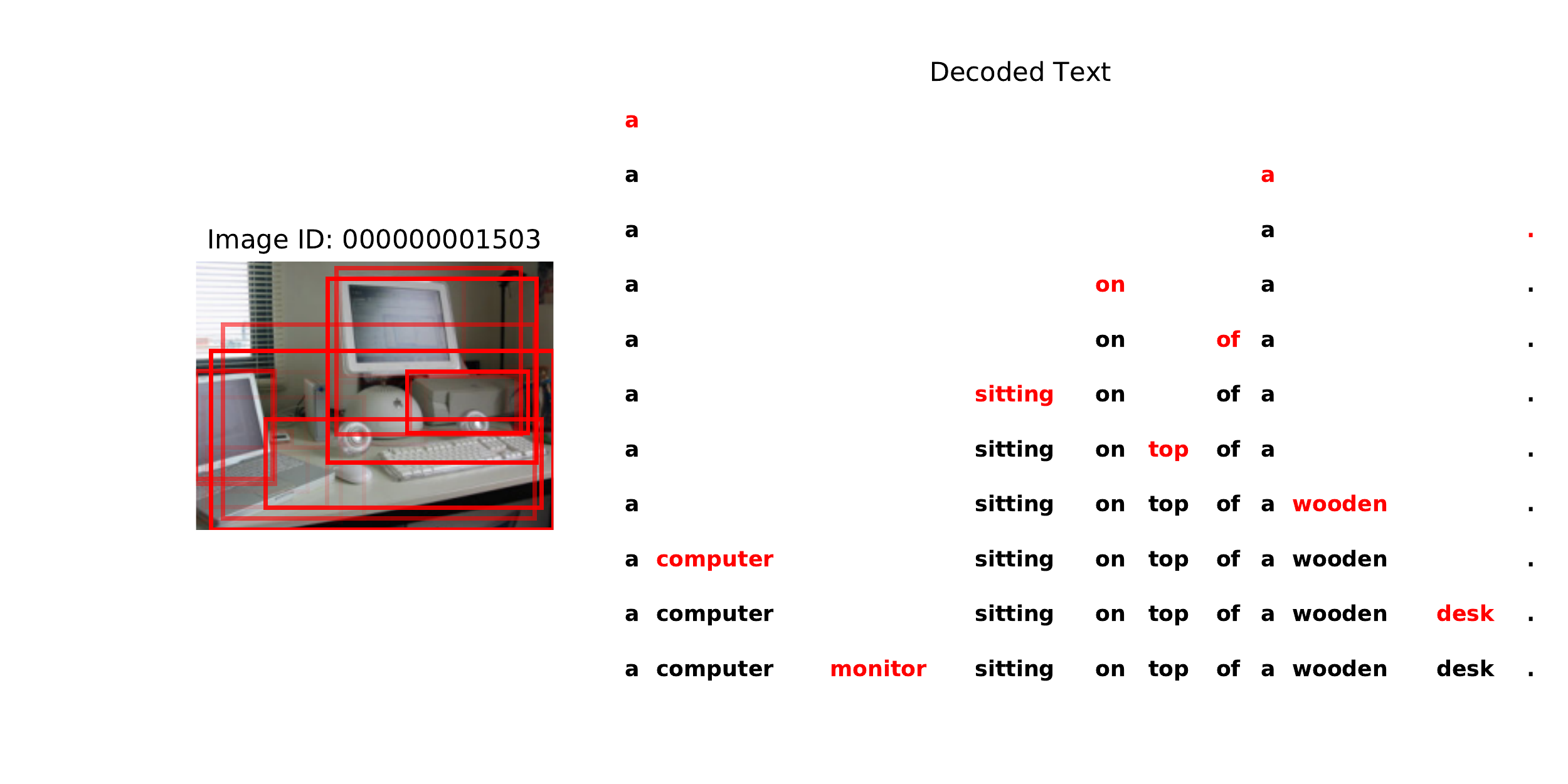}
    \caption{Generation order inferred by \textbf{Ours-Common} for an image from the COCO 2017 validation set with the image identifier $\mathbf{000000001503}$.}
    \label{fig:000000001503_gen_order_common}
\end{figure}

\begin{figure}
    \centering
    \includegraphics[width=\linewidth]{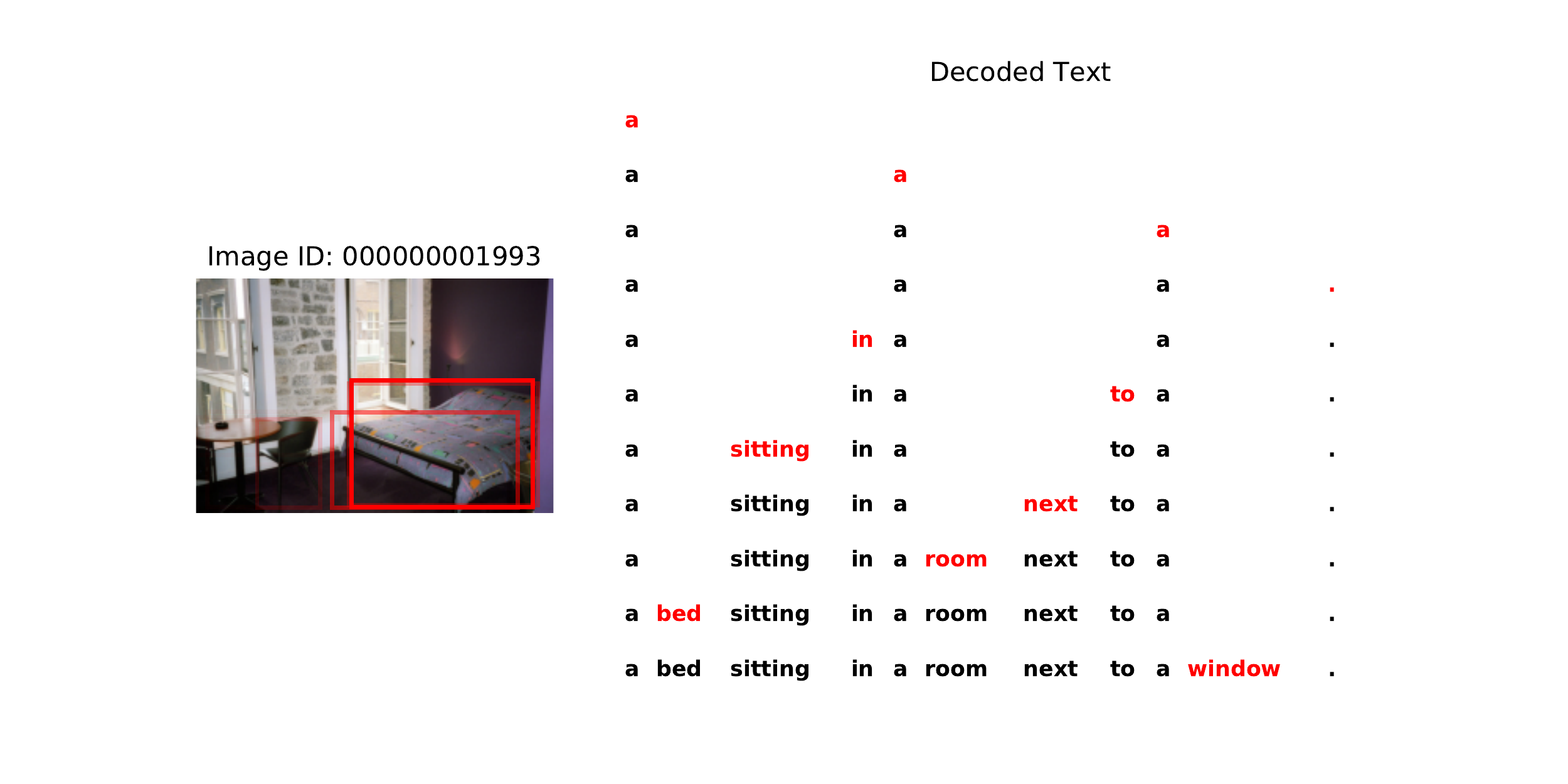}
    \caption{Generation order inferred by \textbf{Ours-Common} for an image from the COCO 2017 validation set with the image identifier $\mathbf{000000001993}$. }
    \label{fig:000000001993_gen_order_common}
\end{figure}

\begin{figure}
    \centering
    \includegraphics[width=\linewidth]{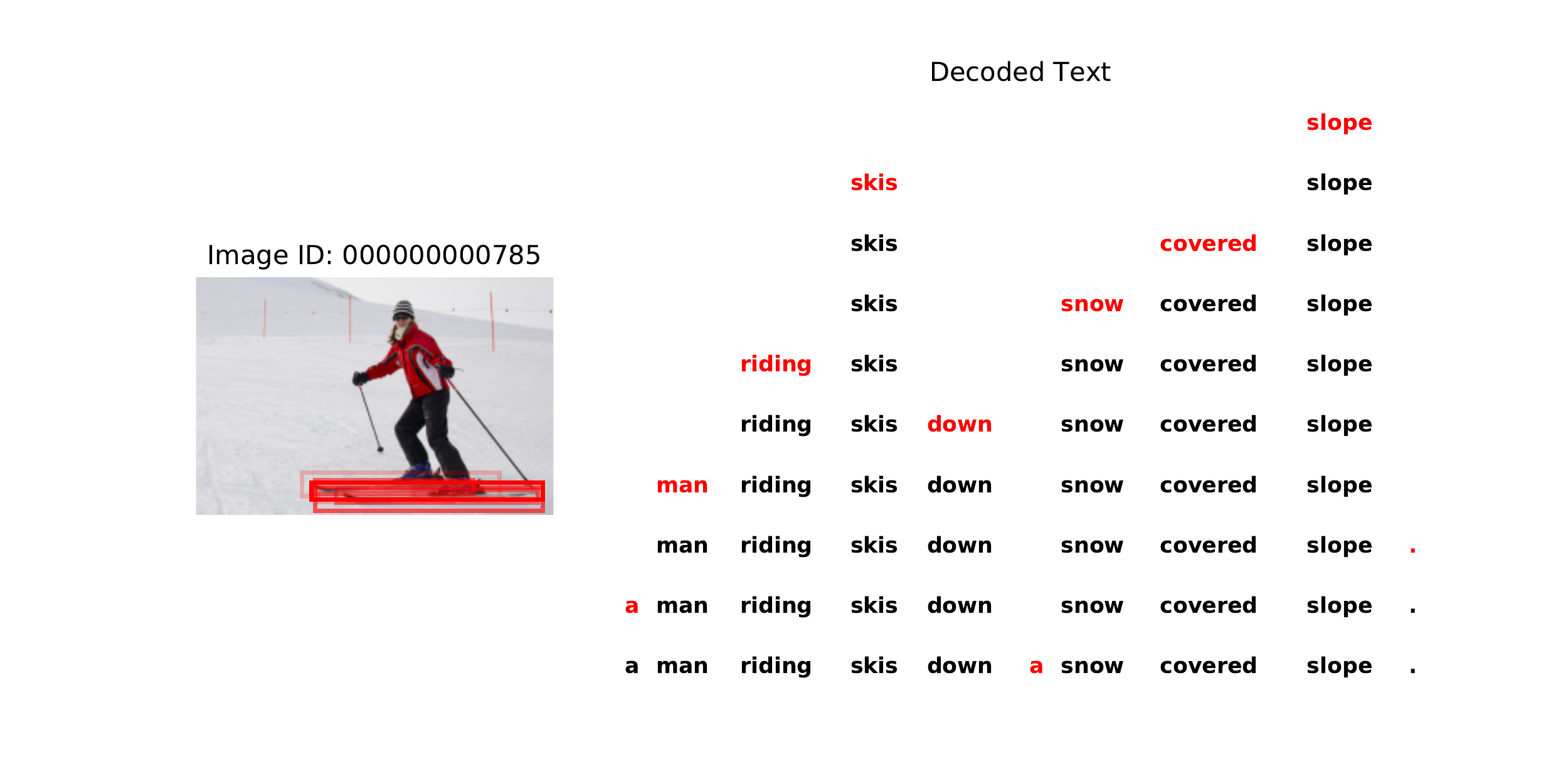}
    \caption{Generation order inferred by \textbf{Ours-Rare} for an image from the COCO 2017 validation set with the image identifier $\mathbf{000000000785}$. }
    \label{fig:000000000785_gen_order_rare}
\end{figure}

\begin{figure}
    \centering
    \includegraphics[width=\linewidth]{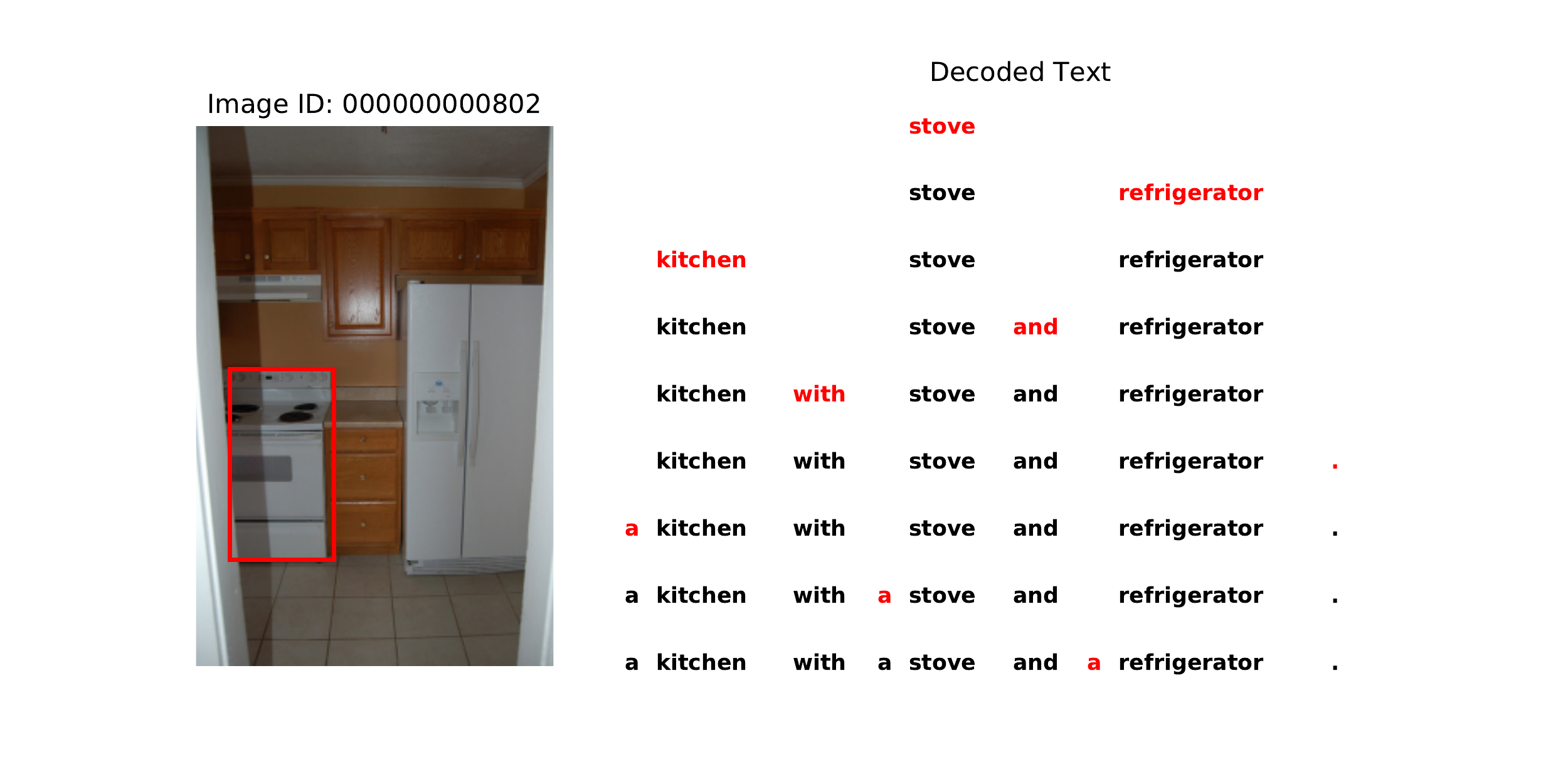}
    \caption{Generation order inferred by \textbf{Ours-Rare} for an image from the COCO 2017 validation set with the image identifier $\mathbf{000000000802}$.}
    \label{fig:000000000802_gen_order_rare}
\end{figure}

\begin{figure}
    \centering
    \includegraphics[width=\linewidth]{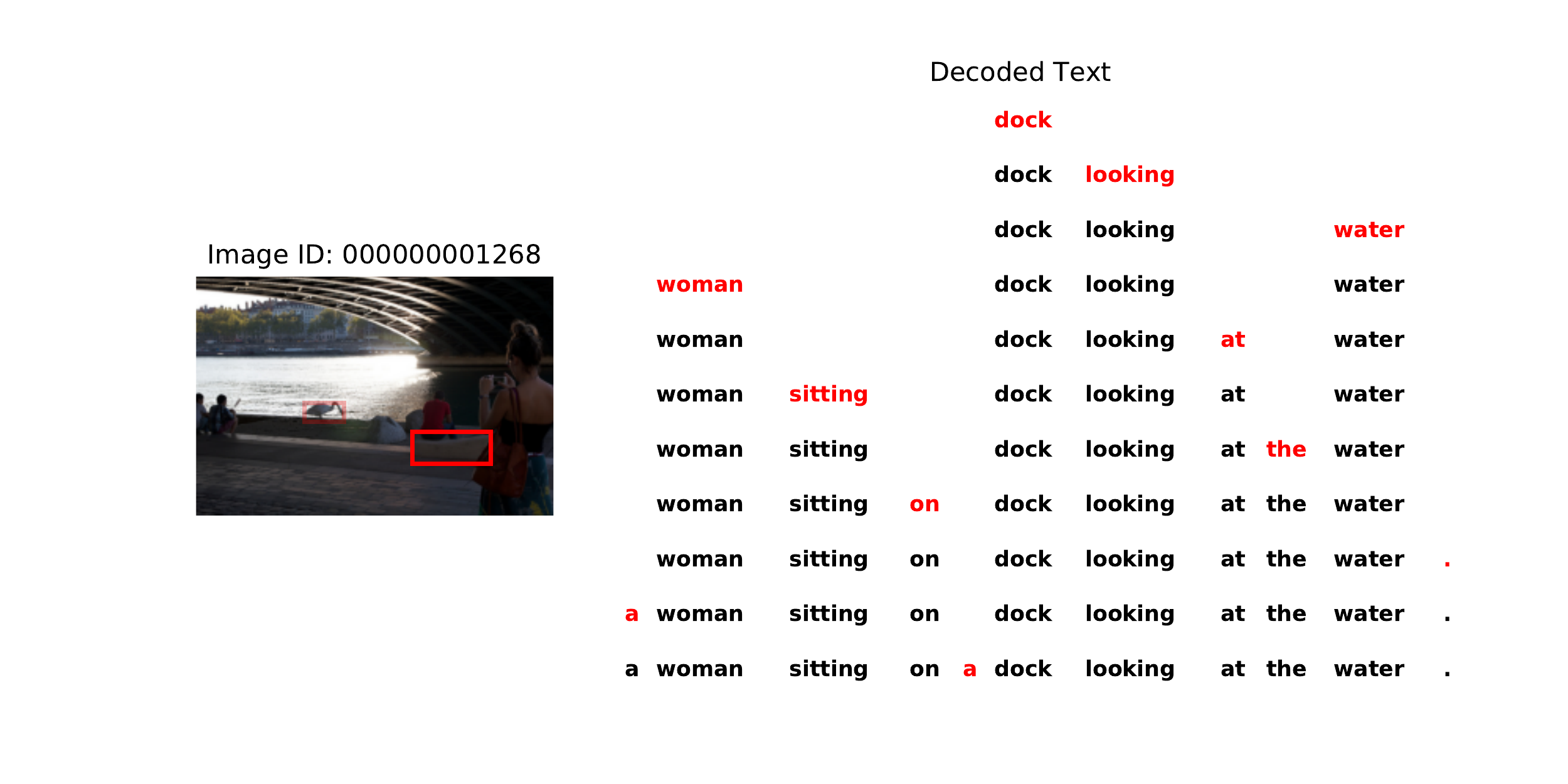}
    \caption{Generation order inferred by \textbf{Ours-Rare} for an image from the COCO 2017 validation set with the image identifier $\mathbf{000000001268}$.}
    \label{fig:000000001268_gen_order_rare}
\end{figure}

\begin{figure}
    \centering
    \includegraphics[width=\linewidth]{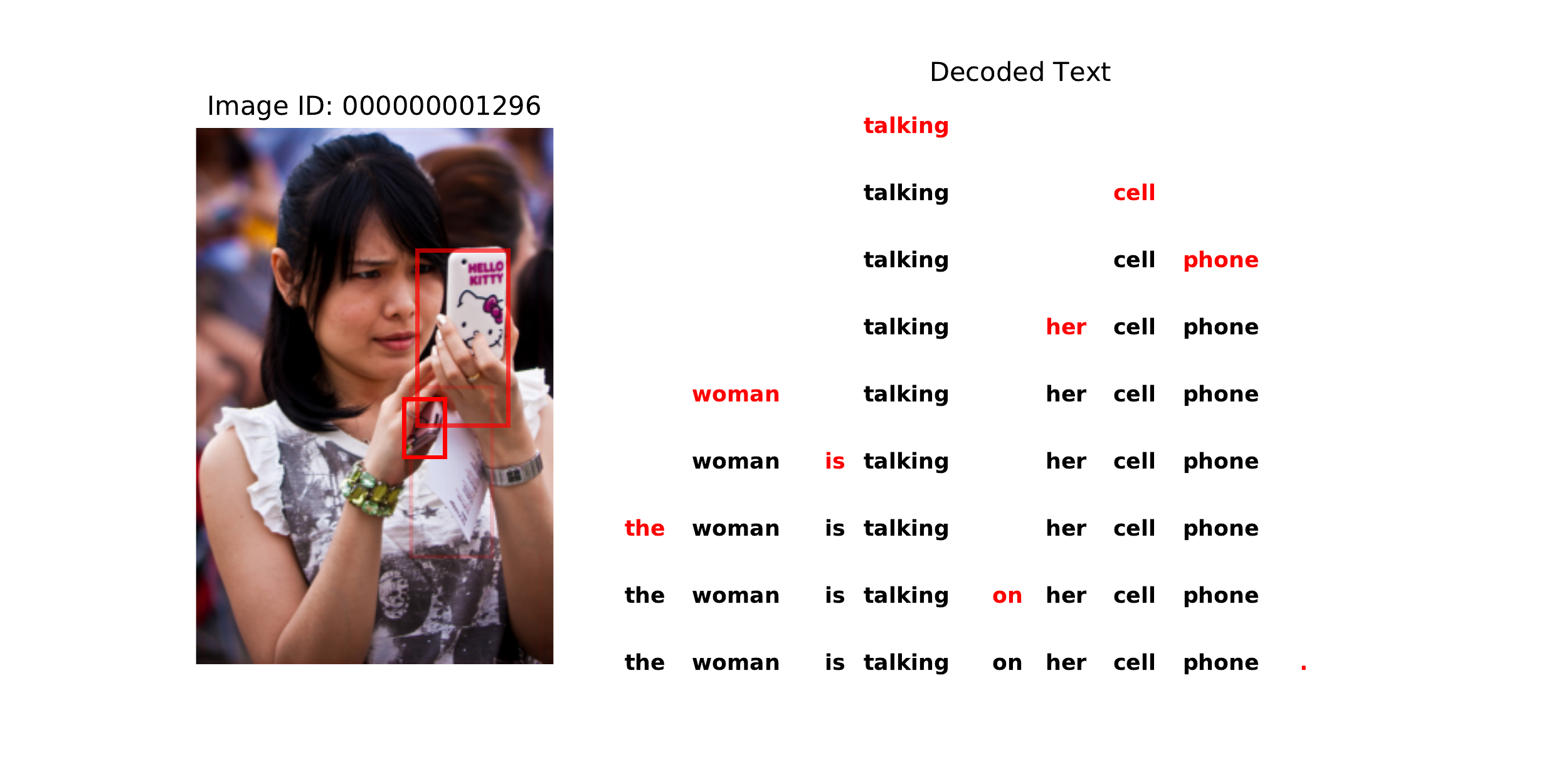}
    \caption{Generation order inferred by \textbf{Ours-Rare} for an image from the COCO 2017 validation set with the image identifier $\mathbf{000000001296}$.}
    \label{fig:000000001296_gen_order_rare}
\end{figure}

\begin{figure}
    \centering
    \includegraphics[width=\linewidth]{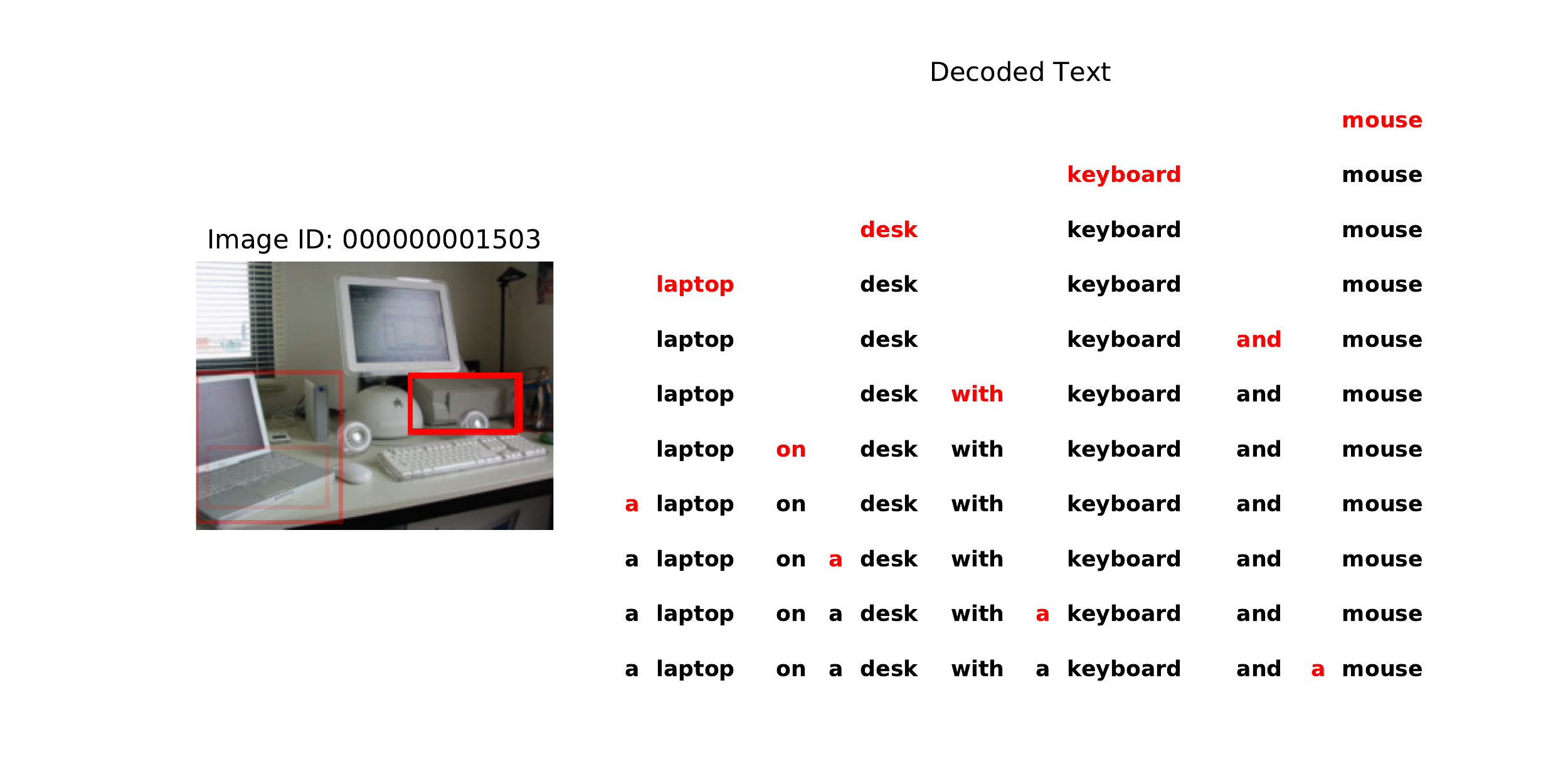}
    \caption{Generation order inferred by \textbf{Ours-Rare} for an image from the COCO 2017 validation set with the image identifier $\mathbf{000000001503}$.}
    \label{fig:000000001503_gen_order_rare}
\end{figure}

\begin{figure}
    \centering
    \includegraphics[width=\linewidth]{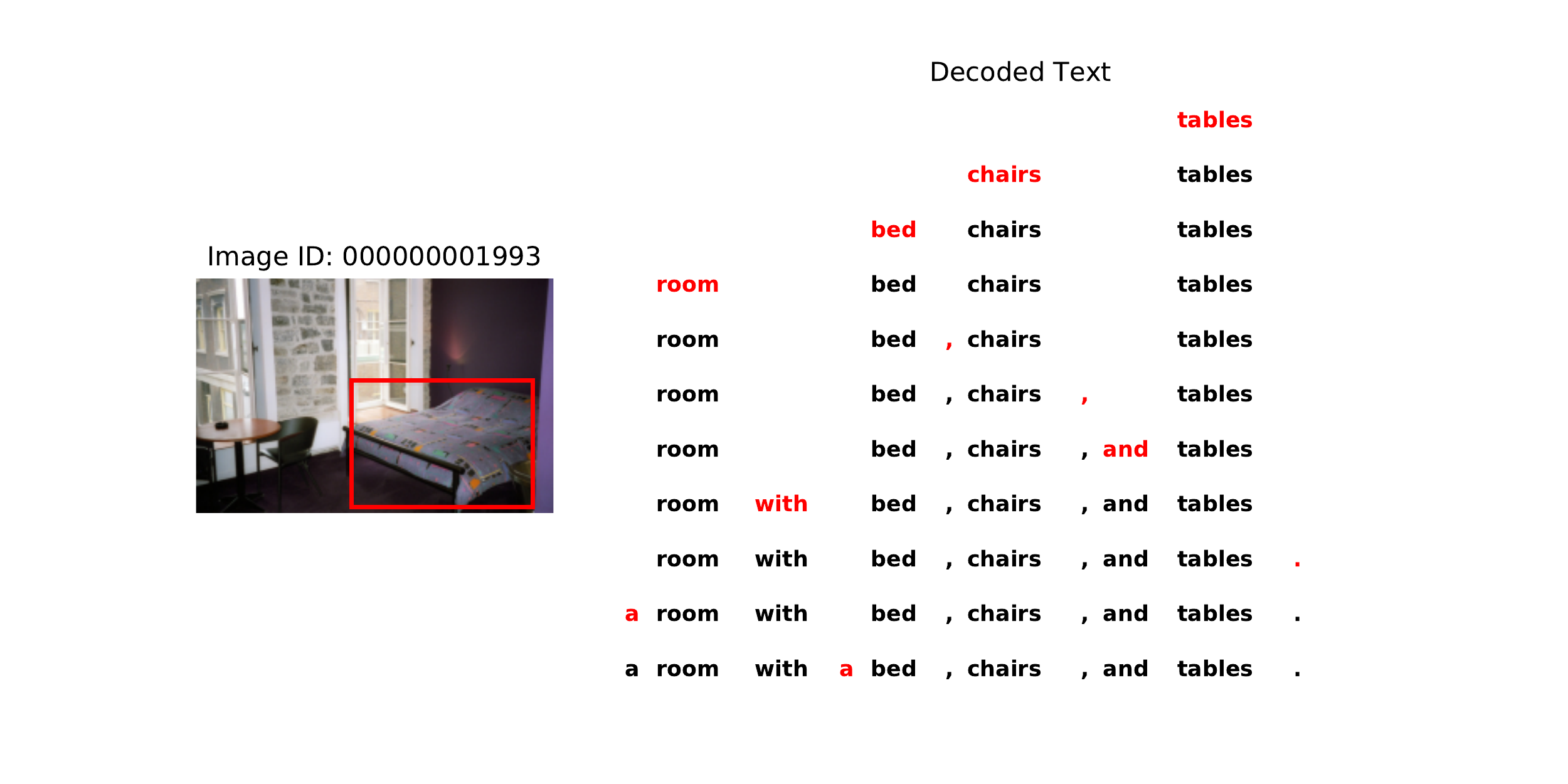}
    \caption{Generation order inferred by \textbf{Ours-Rare} for an image from the COCO 2017 validation set with the image identifier $\mathbf{000000001993}$. }
    \label{fig:000000001993_gen_order_rare}
\end{figure}

\clearpage
\subsection{Django}

We visualize the latent generation order inferred by Variational Order Inference for Django. Sequences are generated using a beam search over both the tokens and their insertion positions, using a beam size of 3. Text on which the model is conditioned is provided on the left for each example.

\begin{figure}[!htbp]
    \centering
    \includegraphics[width=\linewidth]{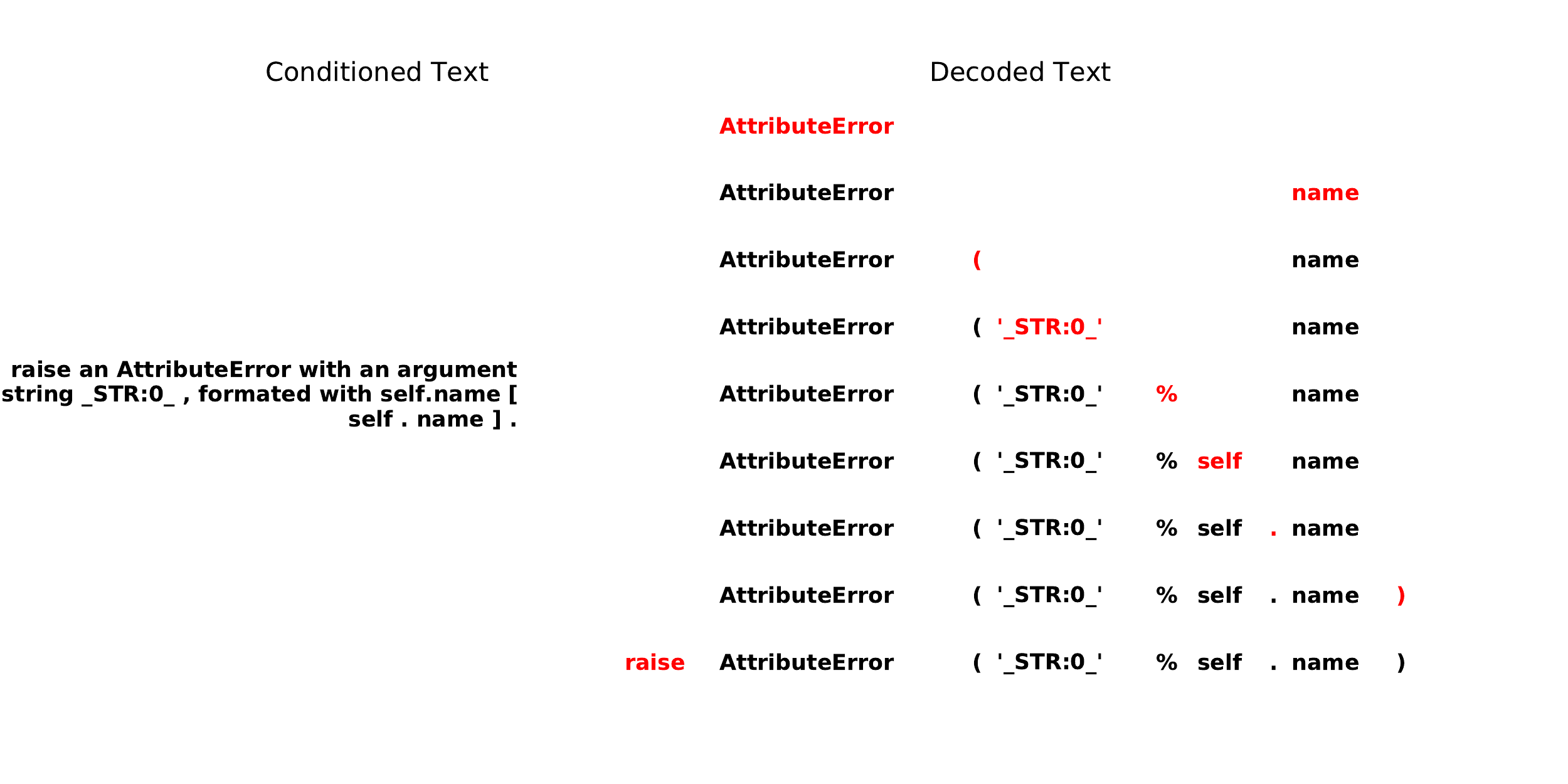}
    \caption{Generation order inferred by \textbf{Ours-VOI} for a pseudocode sample from the Django natural language to code test set with the sample id $\mathbf{154}$.}
    \label{fig:django/154_ours}
\end{figure}

\begin{figure}[!htbp]
    \centering
    \includegraphics[width=\linewidth]{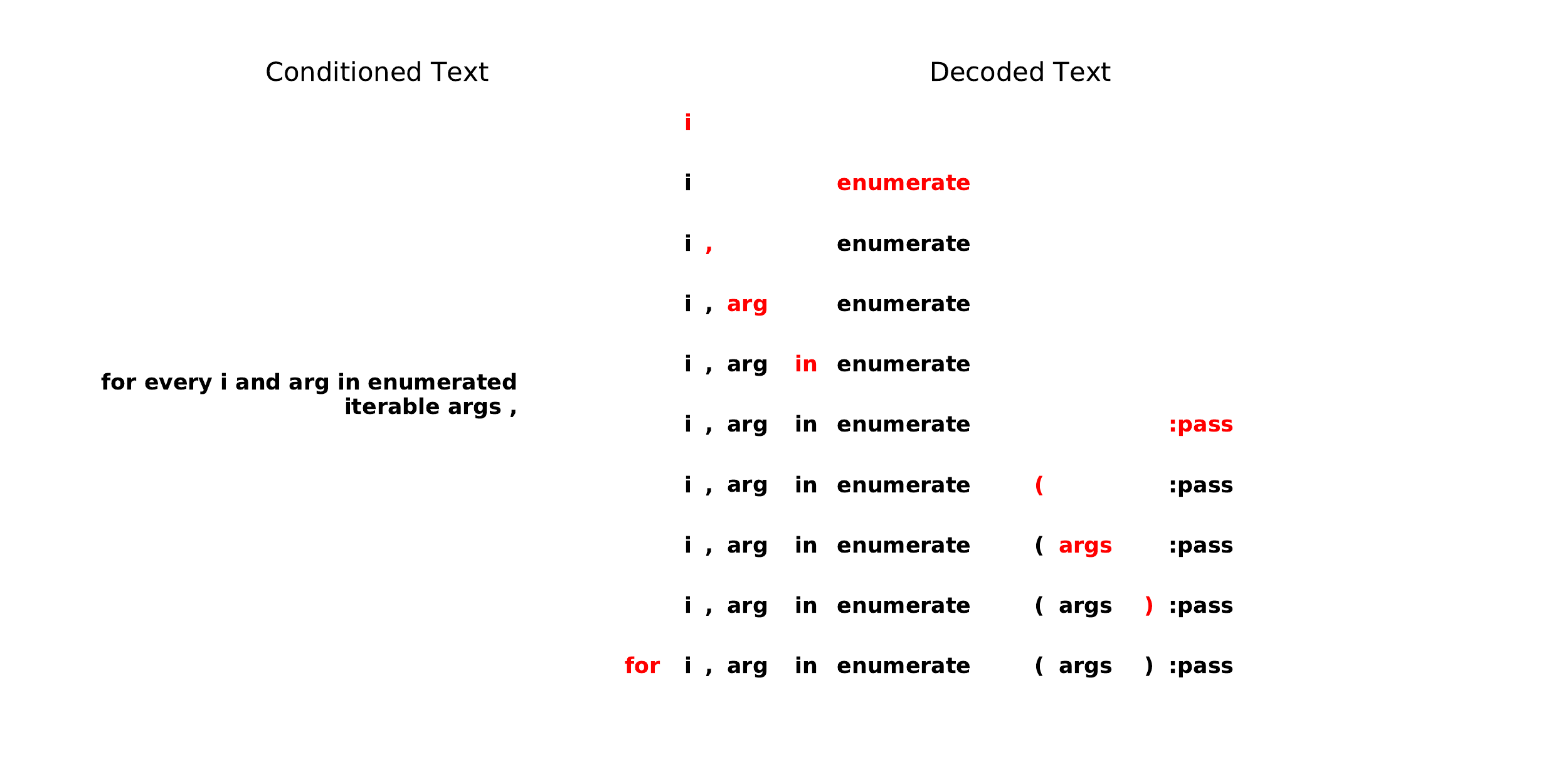}
    \caption{Generation order inferred by \textbf{Ours-VOI} for a pseudocode sample from the Django natural language to code test set with the sample id $\mathbf{431}$.}
    \label{fig:django/431_ours}
\end{figure}

\begin{figure}[!htbp]
    \centering
    \includegraphics[width=\linewidth]{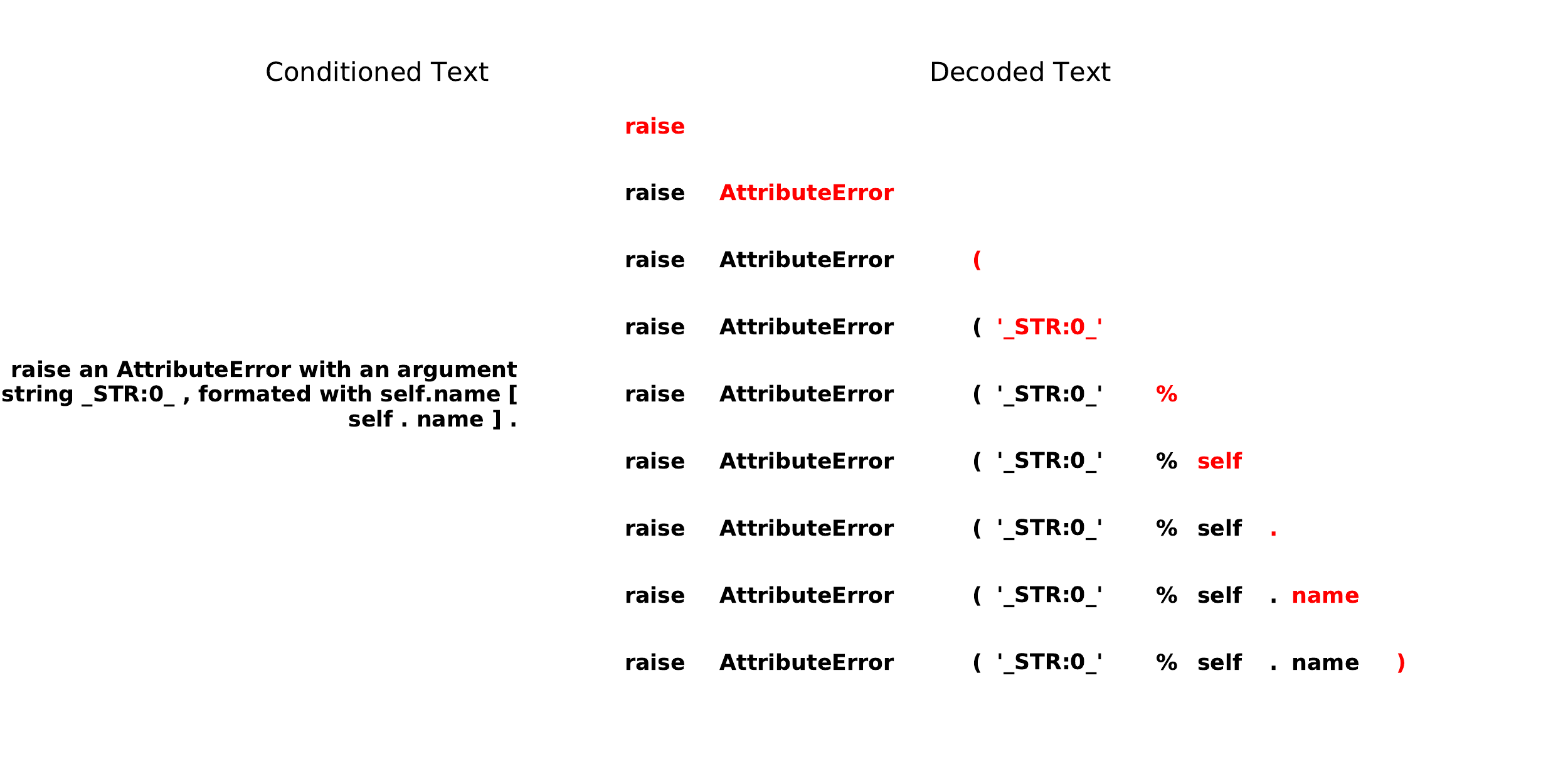}
    \caption{Generation order inferred by \textbf{Ours-L2R} for a pseudocode sample from the Django natural language to code test set with the sample id $\mathbf{154}$.}
    \label{fig:django/154_l2r}
\end{figure}

\begin{figure}[!htbp]
    \centering
    \includegraphics[width=\linewidth]{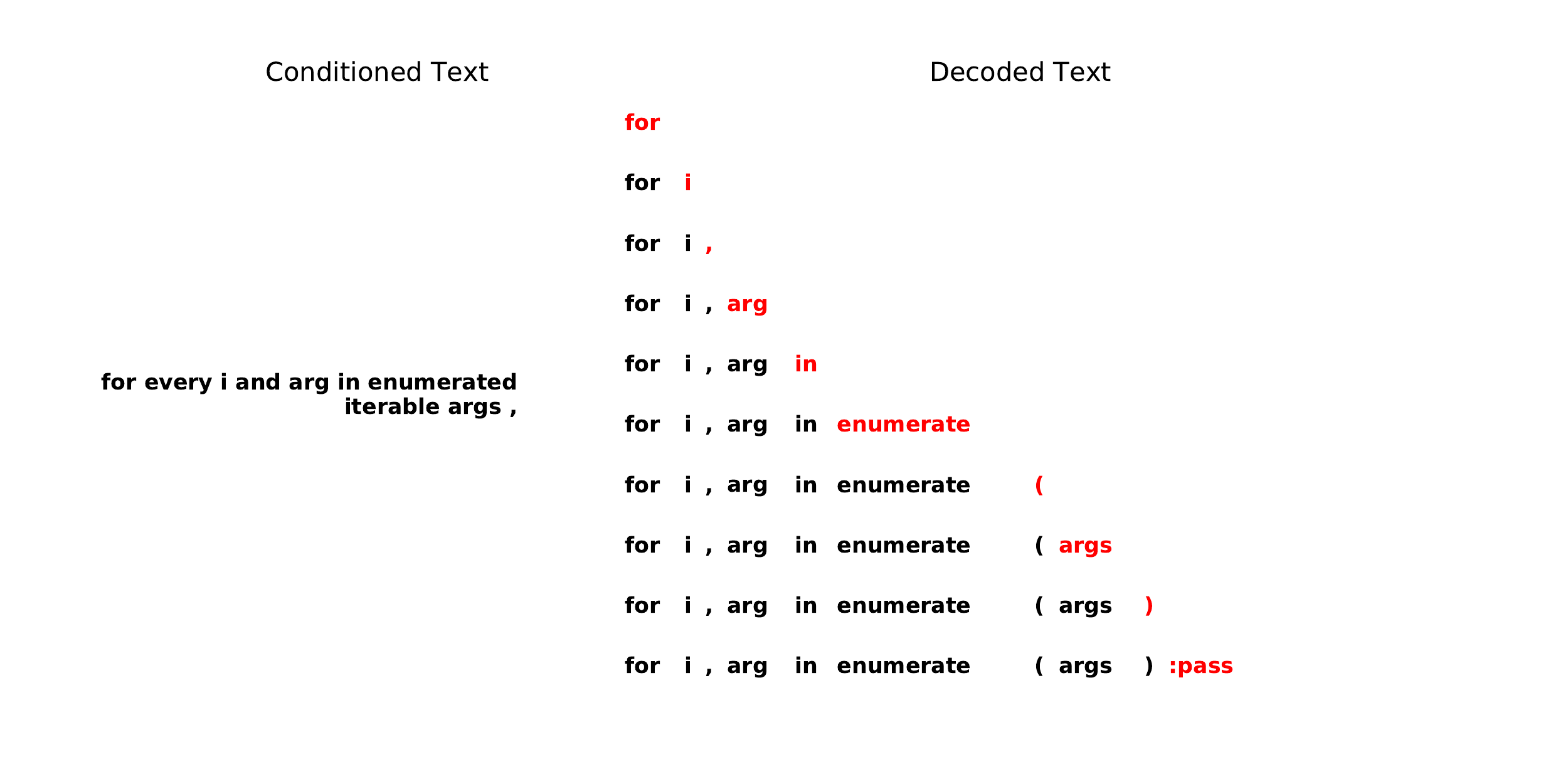}
    \caption{Generation order inferred by \textbf{Ours-L2R} for a pseudocode sample from the Django natural language to code test set with the sample id $\mathbf{431}$.}
    \label{fig:django/431_l2r}
\end{figure}

\begin{figure}[!htbp]
    \centering
    \includegraphics[width=\linewidth]{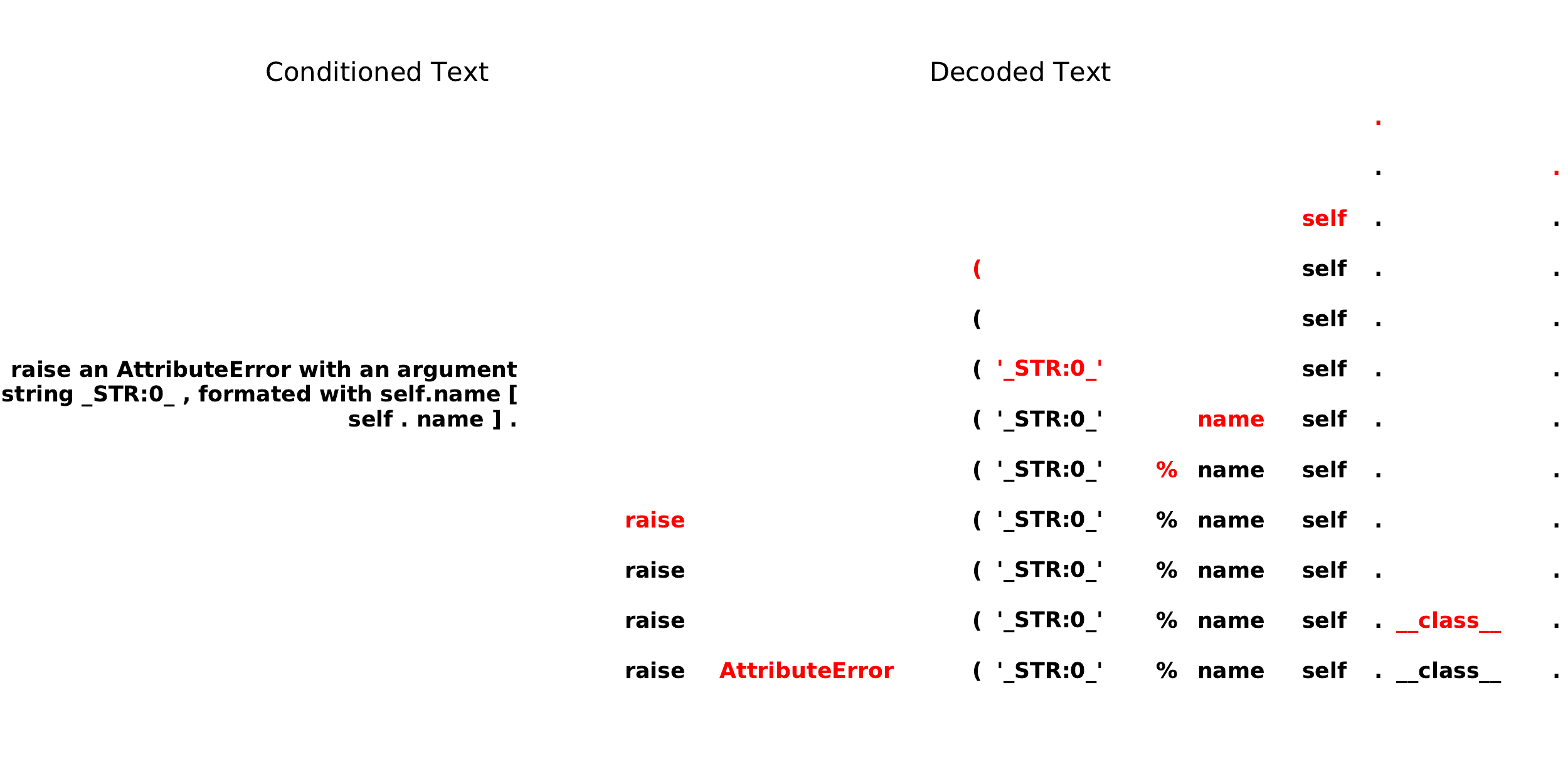}
    \caption{Generation order inferred by \textbf{Ours-Common} for a pseudocode sample from the Django natural language to code test set with the sample id $\mathbf{154}$.}
    \label{fig:django/154_common}
\end{figure}

\begin{figure}[!htbp]
    \centering
    \includegraphics[width=\linewidth]{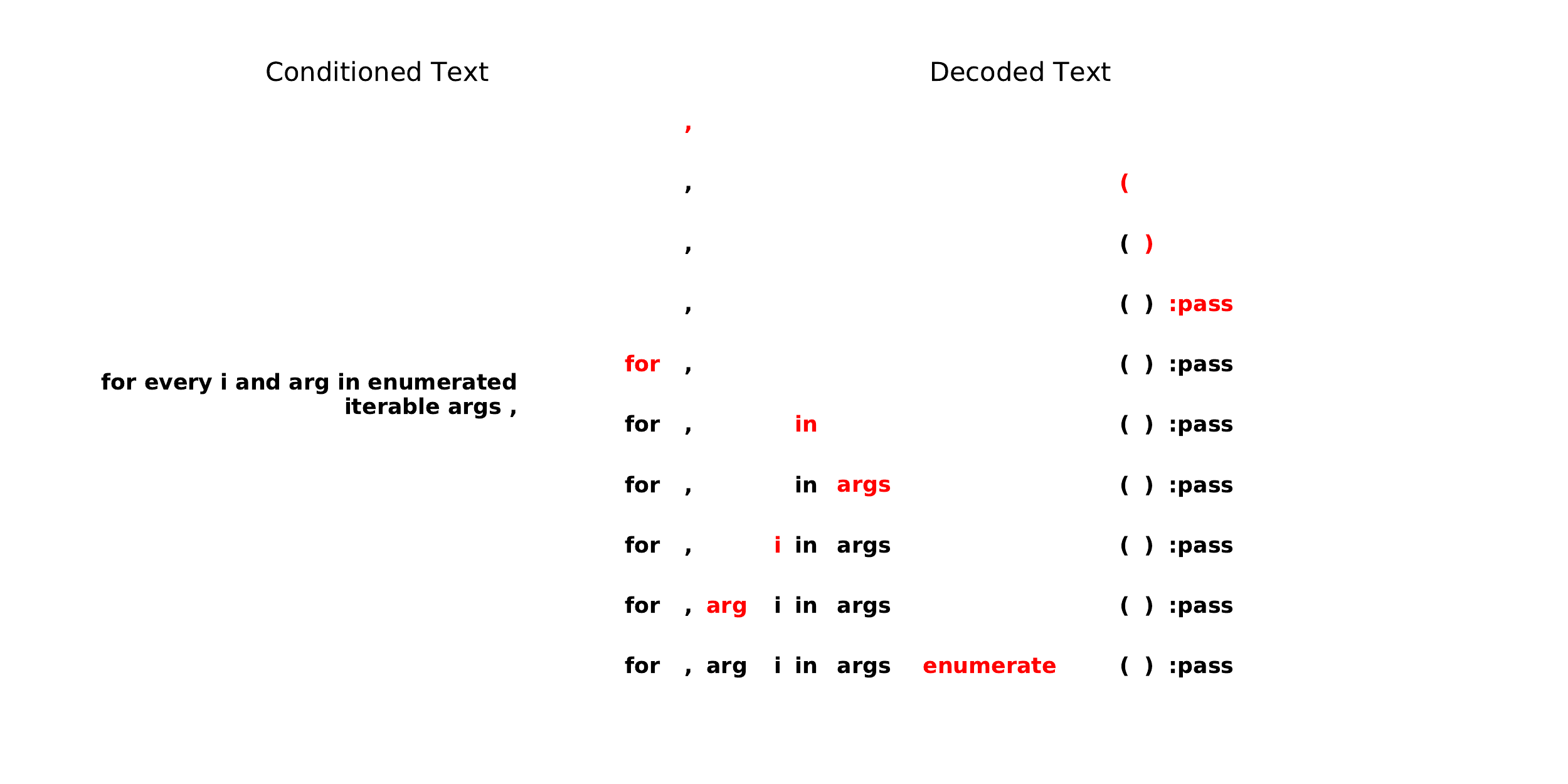}
    \caption{Generation order inferred by \textbf{Ours-Common} for a pseudocode sample from the Django natural language to code test set with the sample id $\mathbf{431}$.}
    \label{fig:django/431_common}
\end{figure}

\begin{figure}[!htbp]
    \centering
    \includegraphics[width=\linewidth]{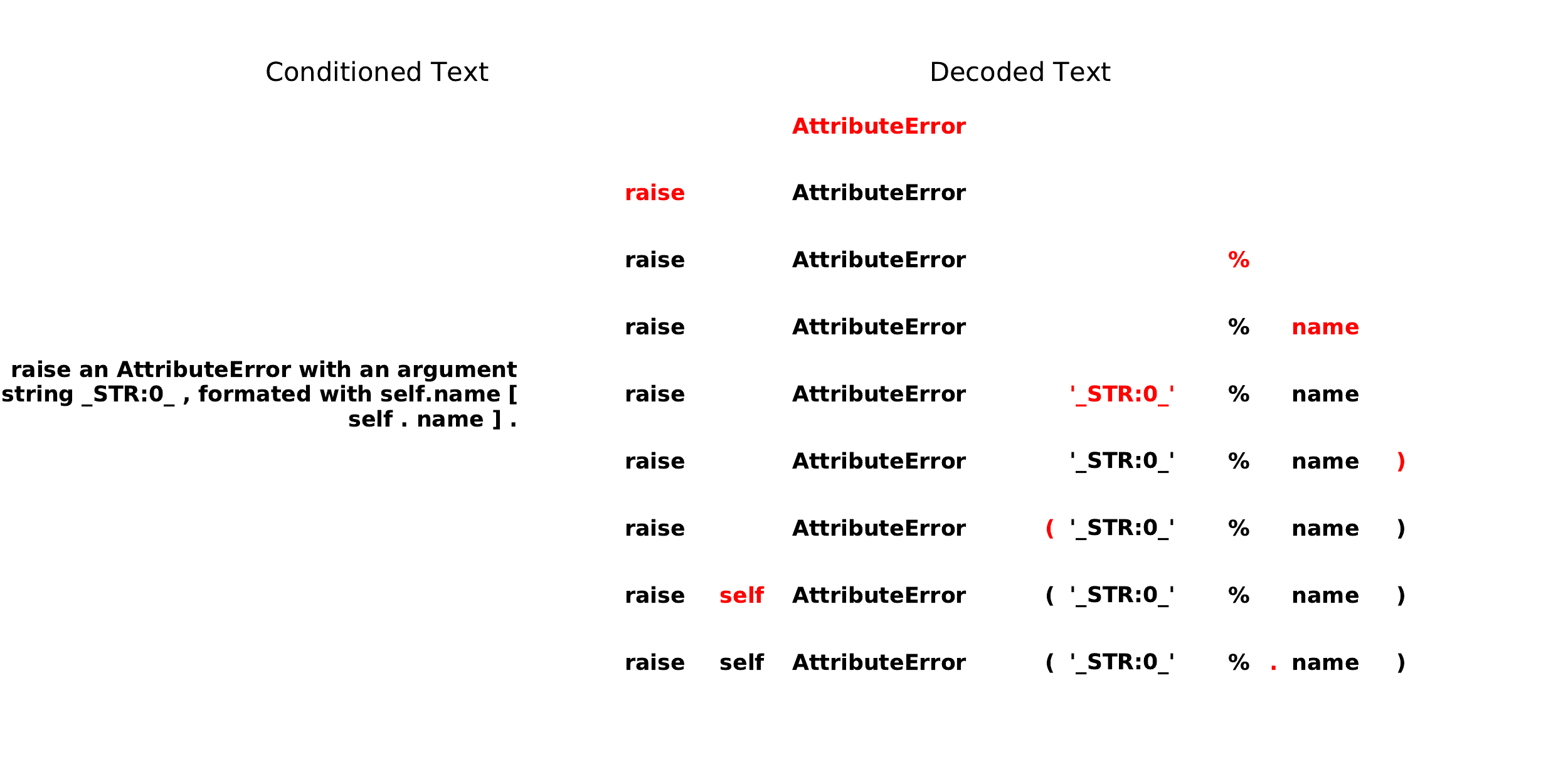}
    \caption{Generation order inferred by \textbf{Ours-Rare} for a pseudocode sample from the Django natural language to code test set with the sample id $\mathbf{154}$.}
    \label{fig:django/154_rare}
\end{figure}

\begin{figure}[!htbp]
    \centering
    \includegraphics[width=\linewidth]{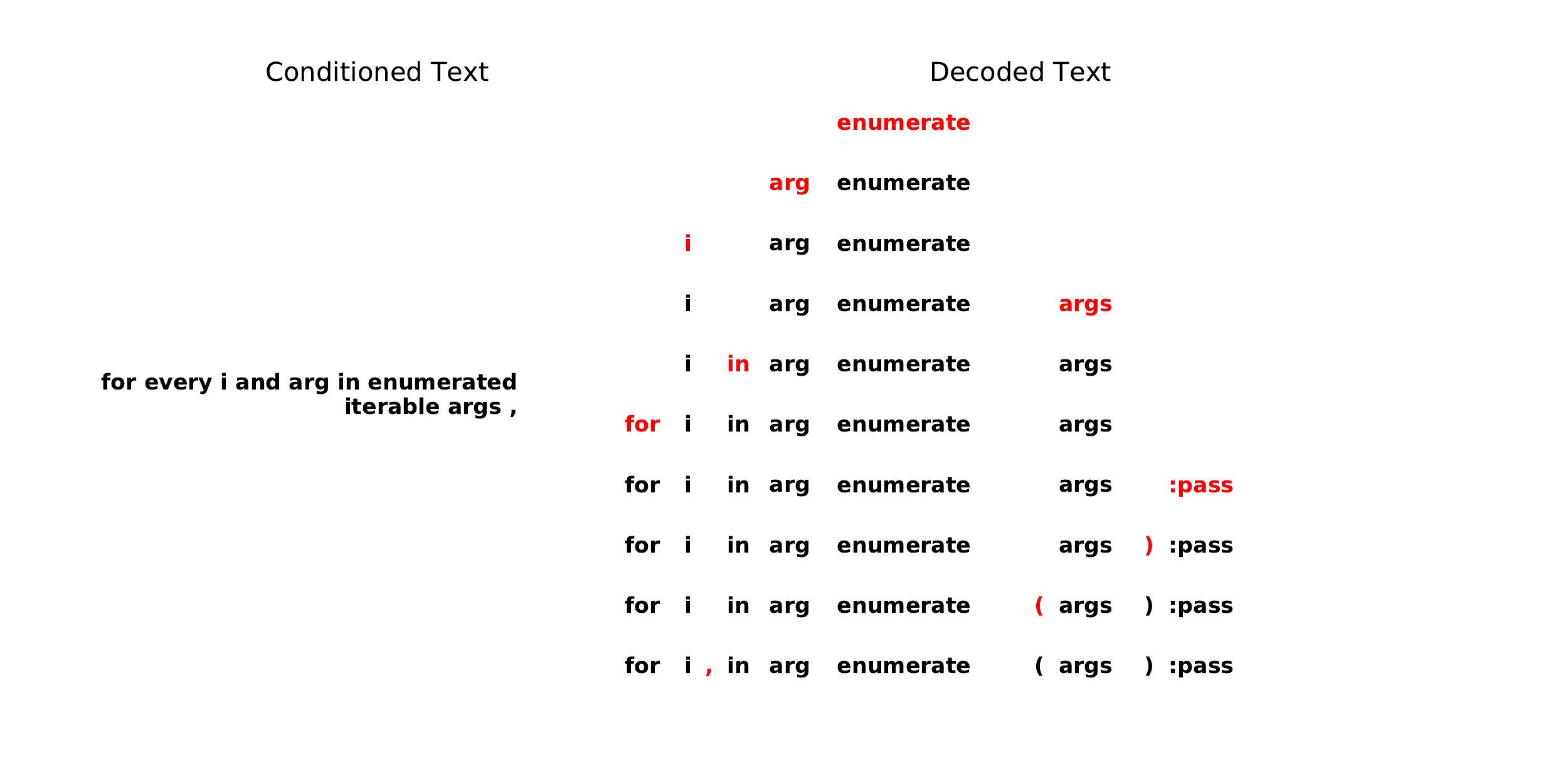}
    \caption{Generation order inferred by \textbf{Ours-Rare} for a pseudocode sample from the Django natural language to code test set with the sample id $\mathbf{431}$.}
    \label{fig:django/431_rare}
\end{figure}

\clearpage
\subsection{Gigaword}

We visualize the latent generation order inferred by Variational Order Inference for Gigaword. Sequences are generated using a beam search over both the tokens and their insertion positions, using a beam size of 1. Text on which the model is conditioned is provided on the left for each example.

\begin{figure}[!htbp]
    \centering
    \includegraphics[width=\linewidth]{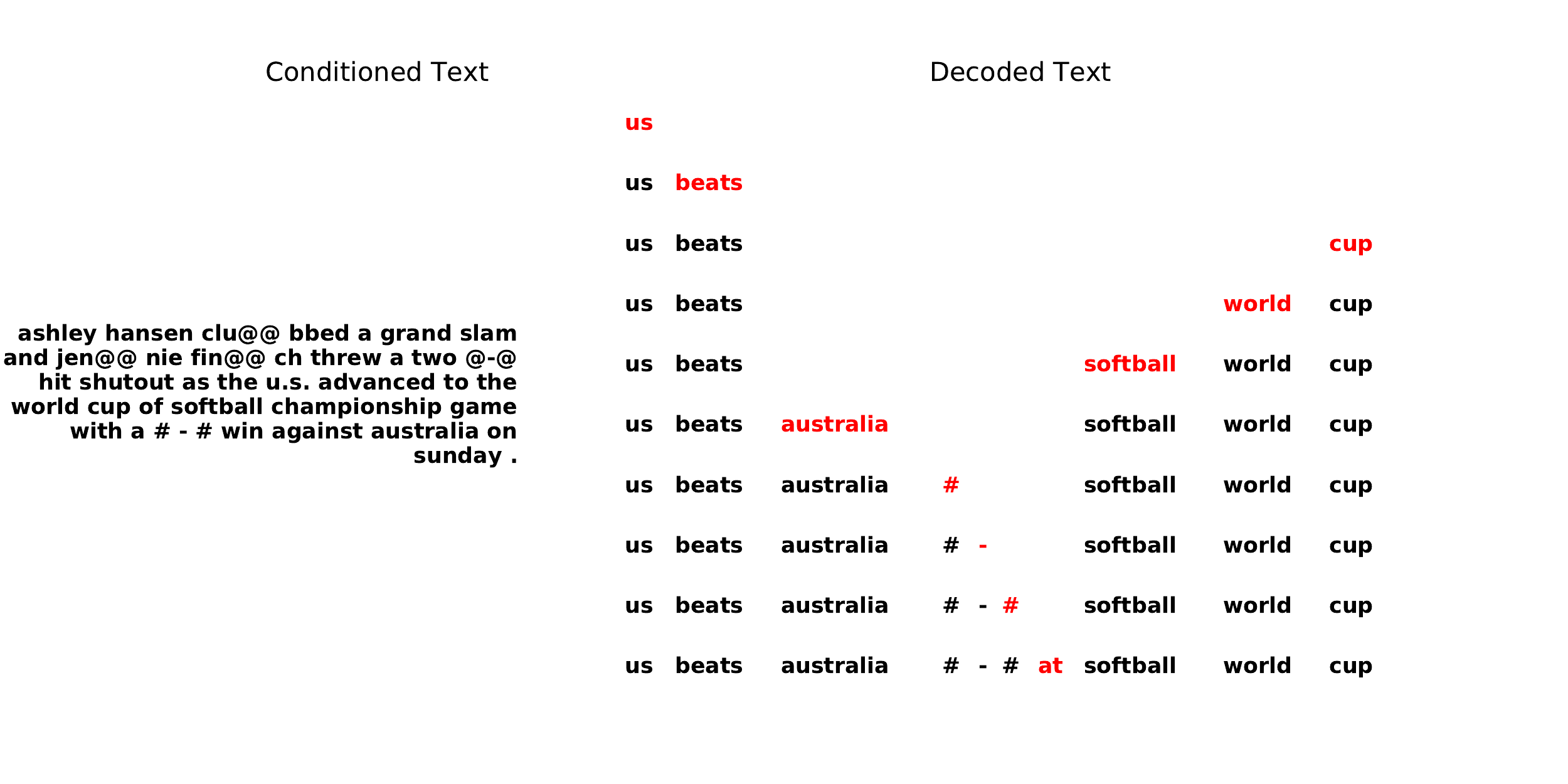}
    \caption{Generation order inferred by \textbf{Ours-VOI} for a text sample from the Gigaword text summarization test set with the sample id $\mathbf{15}$.}
    \label{fig:giga/15_ours}
\end{figure}

\begin{figure}[!htbp]
    \centering
    \includegraphics[width=\linewidth]{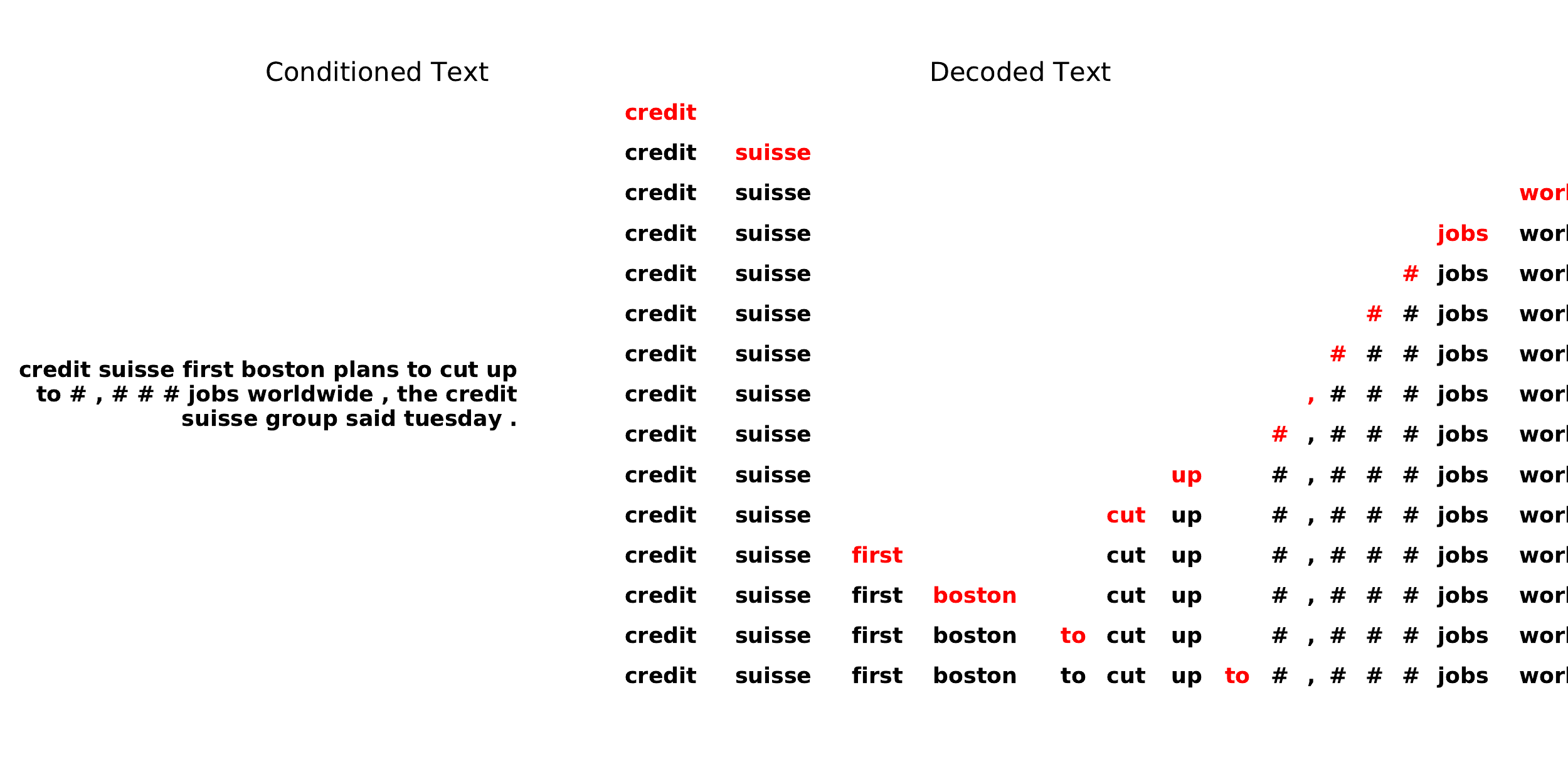}
    \caption{Generation order inferred by \textbf{Ours-VOI} for a text sample from the Gigaword text summarization test set with the sample id $\mathbf{33}$.}
    \label{fig:giga/33_ours}
\end{figure}

\begin{figure}[!htbp]
    \centering
    \includegraphics[width=\linewidth]{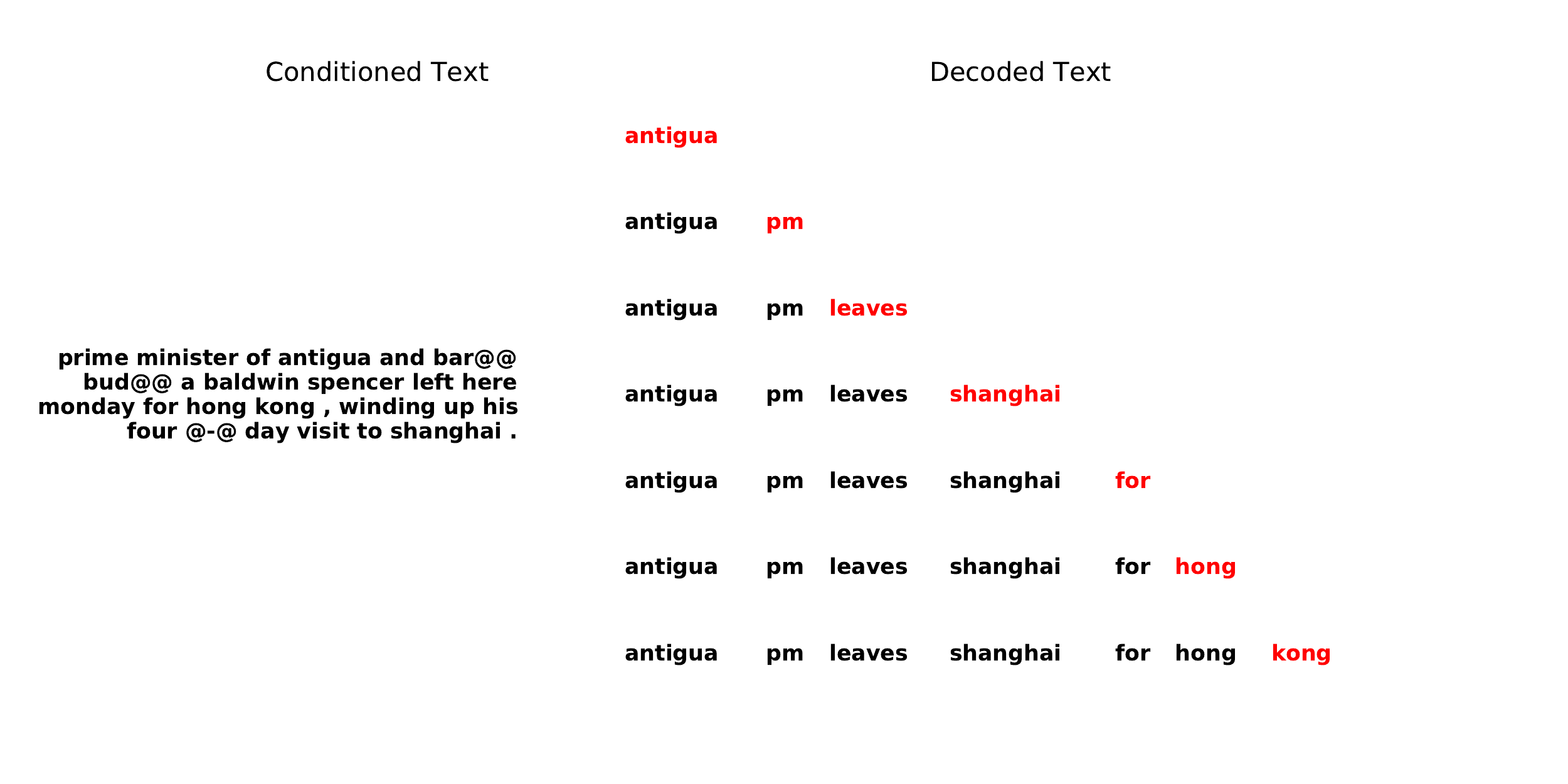}
    \caption{Generation order inferred by \textbf{Ours-L2R} for a text sample from the Gigaword text summarization test set with the sample id $\mathbf{15}$.}
    \label{fig:giga/15_l2r}
\end{figure}

\begin{figure}[!htbp]
    \centering
    \includegraphics[width=\linewidth]{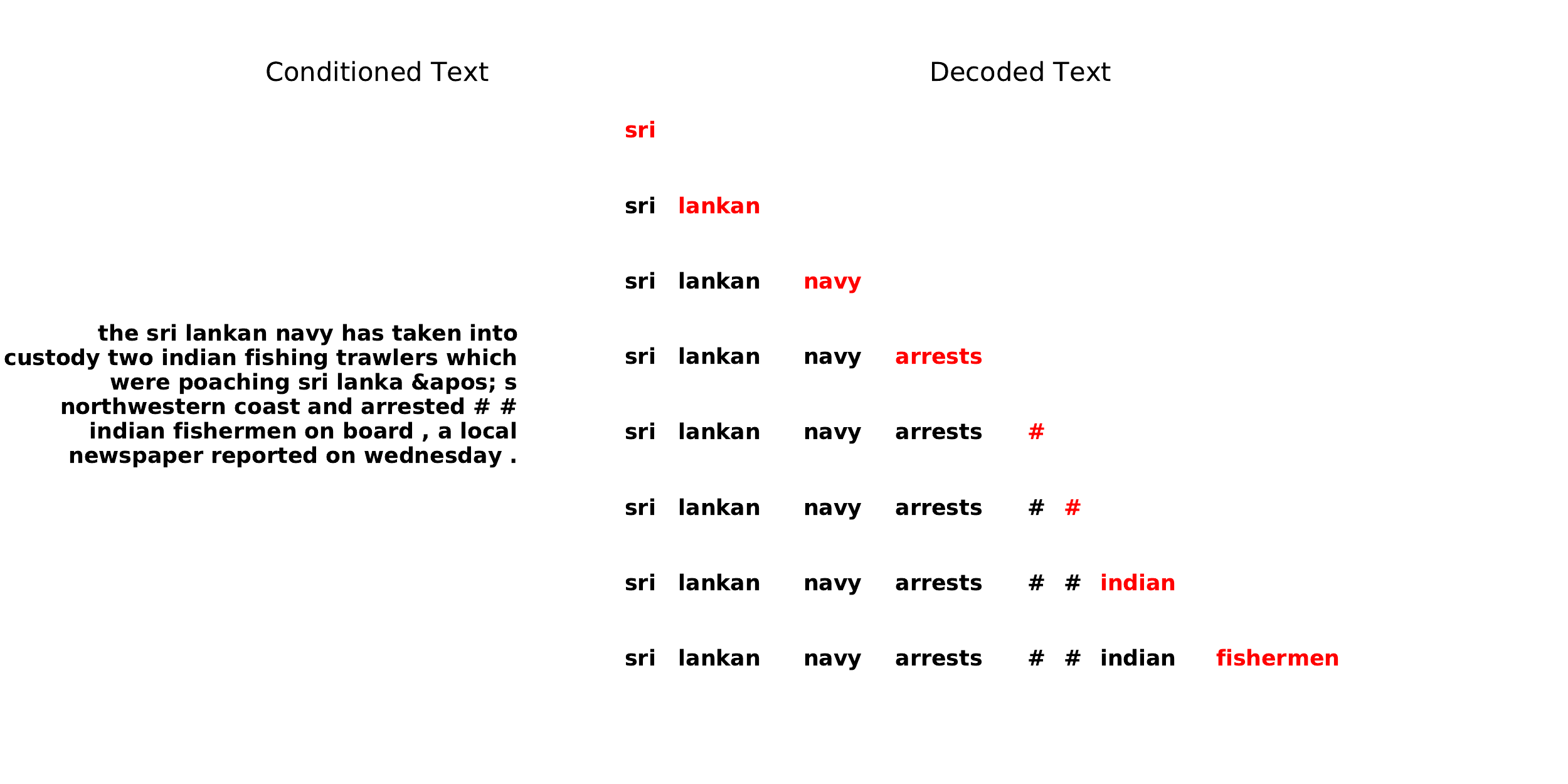}
    \caption{Generation order inferred by \textbf{Ours-L2R} for a text sample from the Gigaword text summarization test set with the sample id $\mathbf{33}$.}
    \label{fig:giga/33_l2r}
\end{figure}

\begin{figure}[!htbp]
    \centering
    \includegraphics[width=\linewidth]{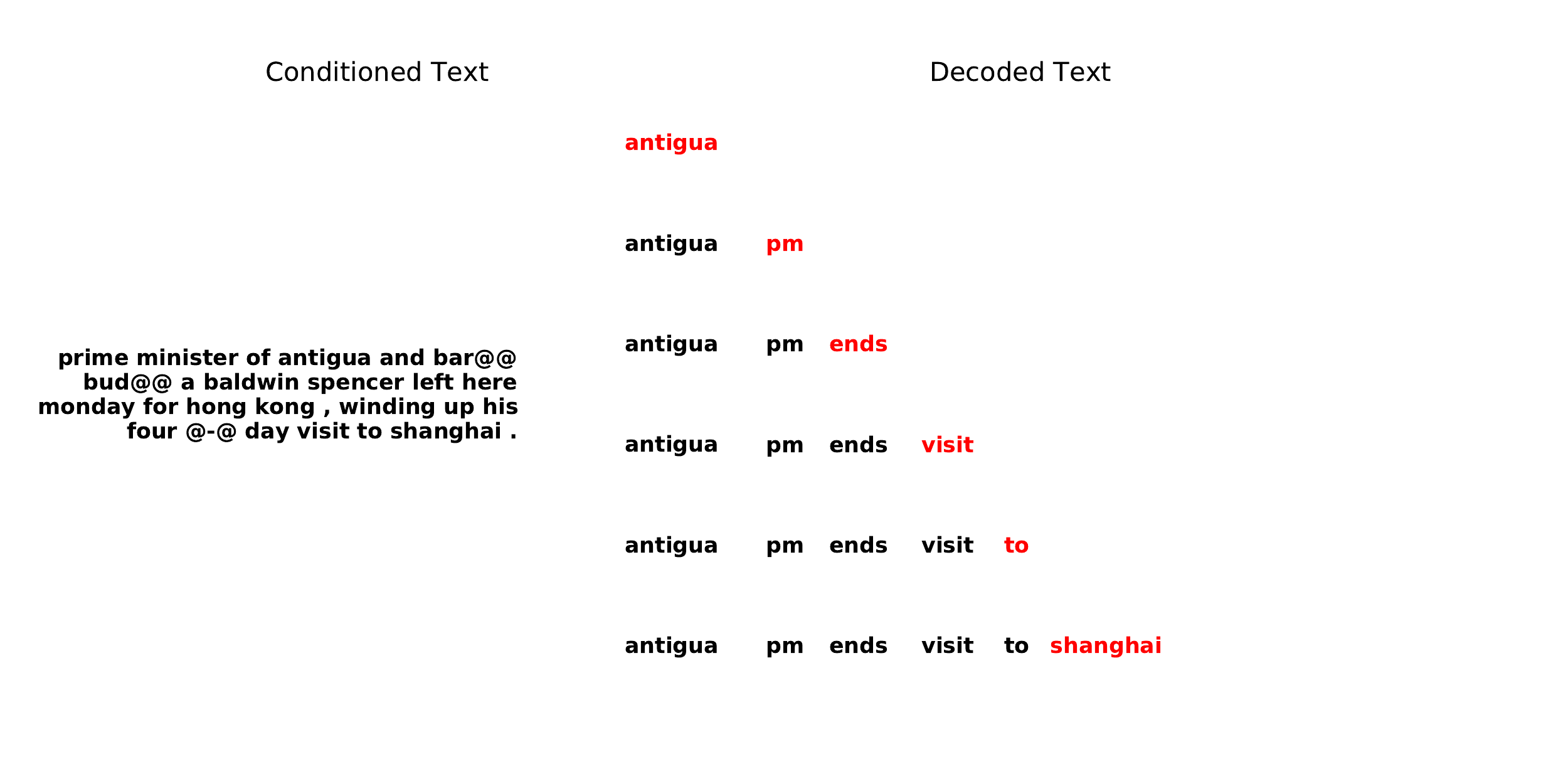}
    \caption{Generation order inferred by \textbf{Ours-Common} for a text sample from the Gigaword text summarization test set with the sample id $\mathbf{15}$.}
    \label{fig:giga/15_common}
\end{figure}

\begin{figure}[!htbp]
    \centering
    \includegraphics[width=\linewidth]{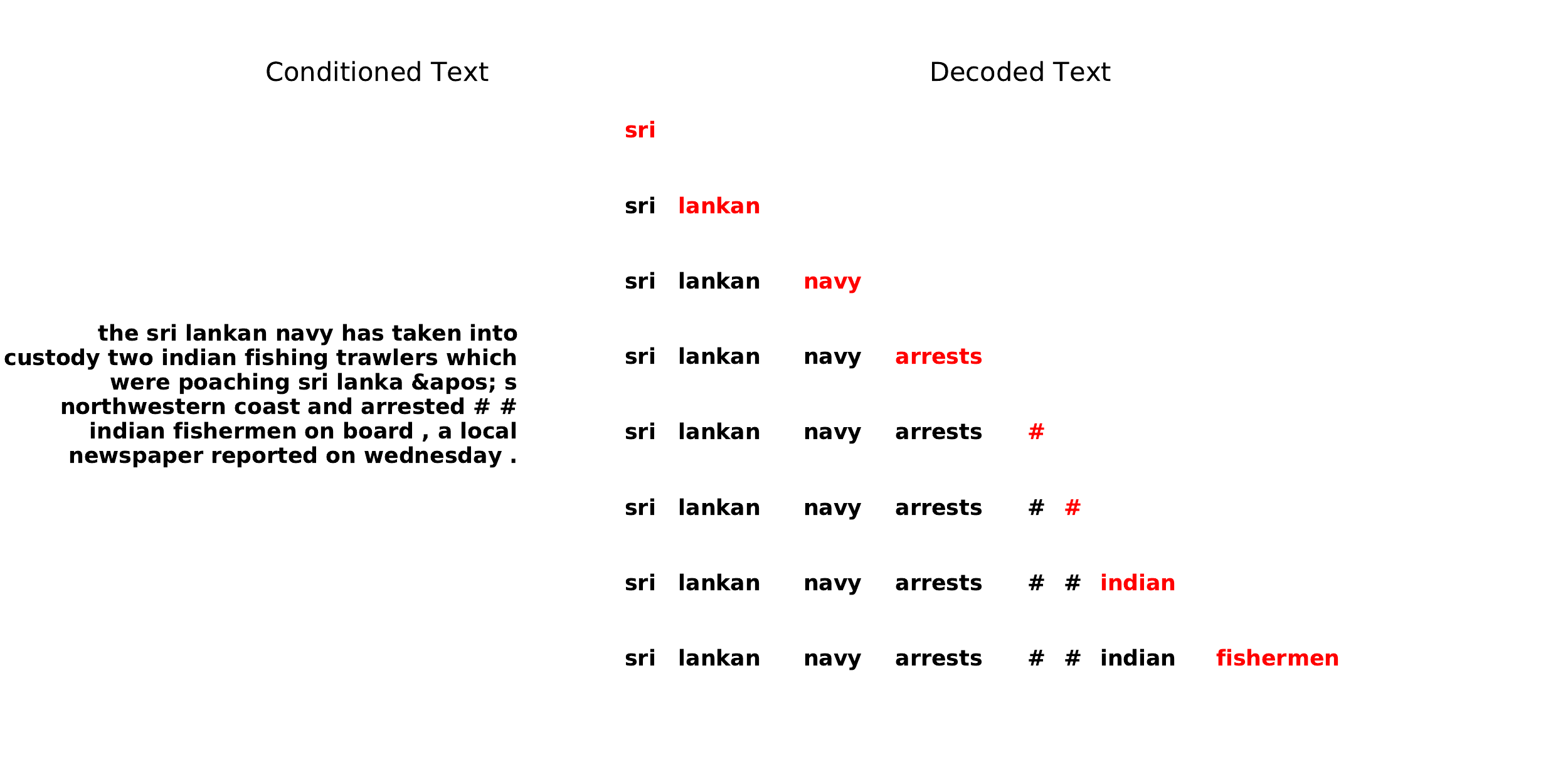}
    \caption{Generation order inferred by \textbf{Ours-Common} for a text sample from the Gigaword text summarization test set with the sample id $\mathbf{33}$.}
    \label{fig:giga/33_common}
\end{figure}

\begin{figure}[!htbp]
    \centering
    \includegraphics[width=\linewidth]{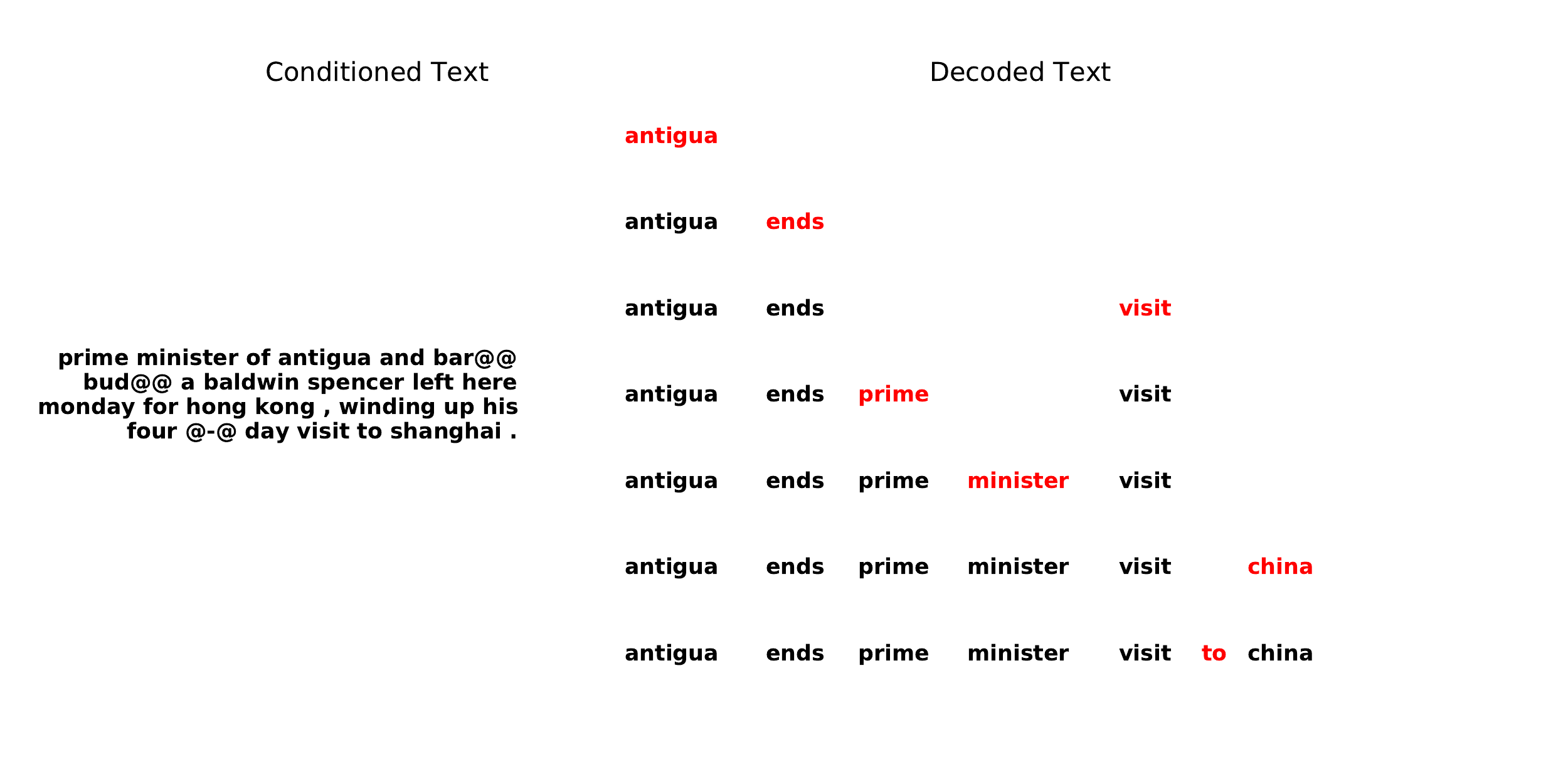}
    \caption{Generation order inferred by \textbf{Ours-Rare} for a text sample from the Gigaword text summarization test set with the sample id $\mathbf{15}$.}
    \label{fig:giga/15_rare}
\end{figure}

\begin{figure}[!htbp]
    \centering
    \includegraphics[width=\linewidth]{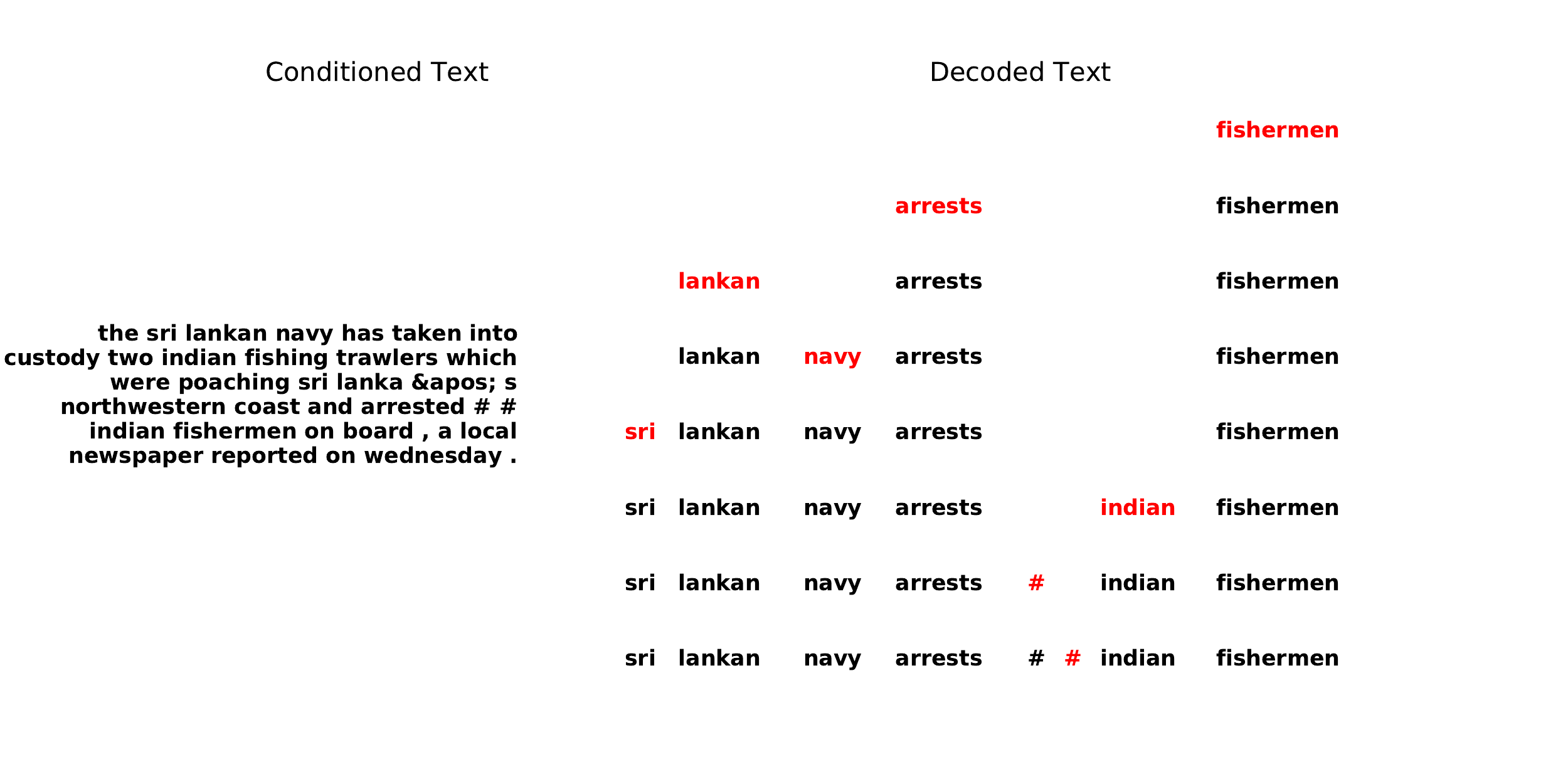}
    \caption{Generation order inferred by \textbf{Ours-Rare} for a text sample from the Gigaword text summarization test set with the sample id $\mathbf{33}$.}
    \label{fig:giga/33_rare}
\end{figure}

\clearpage
\subsection{WMT 16 Ro-En}

We visualize the generation order inferred by Variational Order Inference for WMT 16 Ro-En.  Sequences are generated using a beam search over both the tokens and their insertion positions, using a beam size of 5. Text on which the model is conditioned is provided on the left.

\begin{figure}[!htbp]
    \centering
    \includegraphics[width=\linewidth]{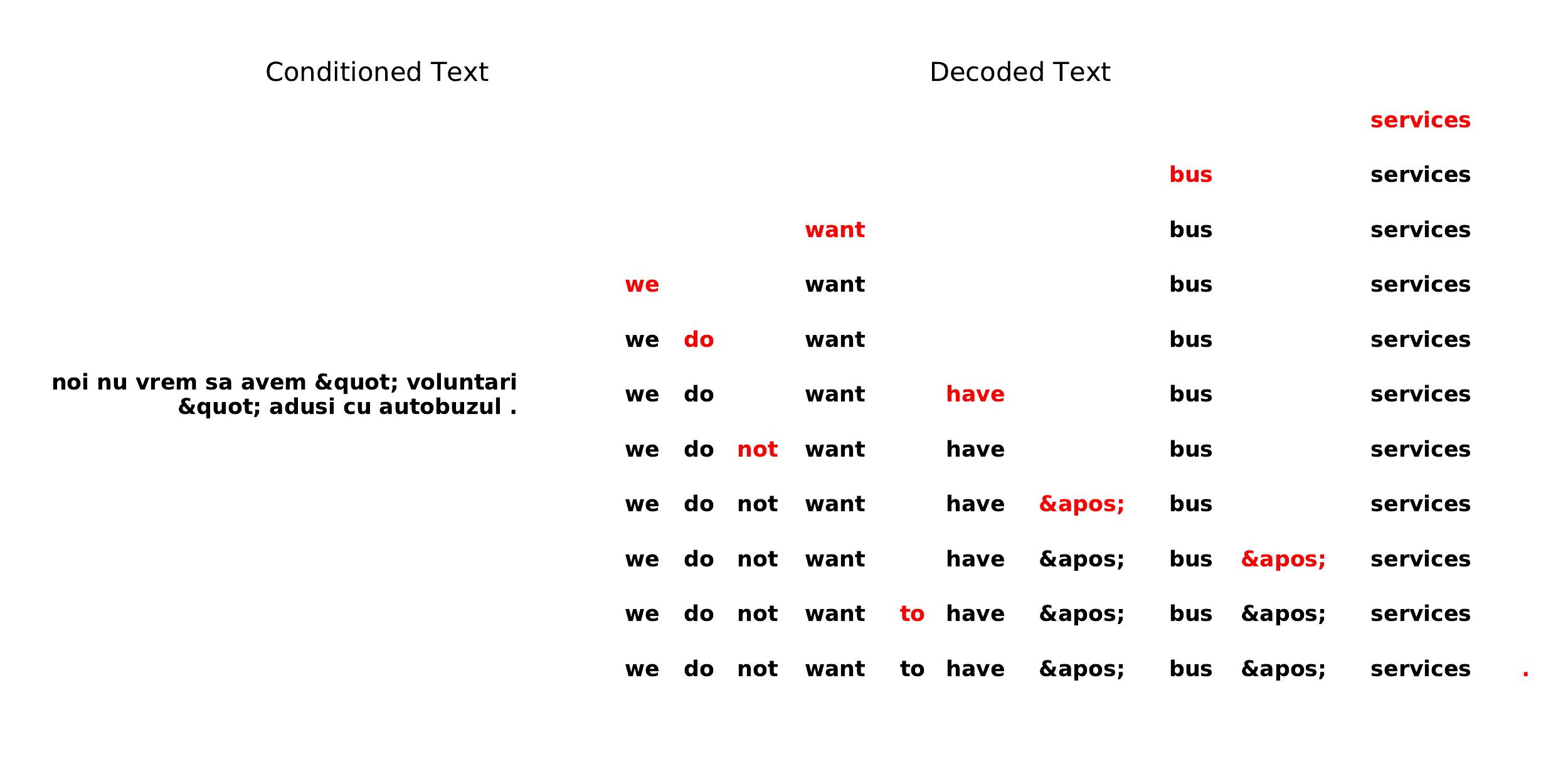}
    \caption{Generation order inferred by \textbf{Ours-VOI} for a text sample from the WMT validation set with the sample id $\mathbf{71}$.}
    \label{fig:wmt/71_ours}
\end{figure}

\begin{figure}[!htbp]
    \centering
    \includegraphics[width=\linewidth]{wmt/72_ours.pdf}
    \caption{Generation order inferred by \textbf{Ours-VOI} for a text sample from the WMT validation set with the sample id $\mathbf{72}$.}
    \label{fig:wmt/72_ours}
\end{figure}

\begin{figure}[!htbp]
    \centering
    \includegraphics[width=\linewidth]{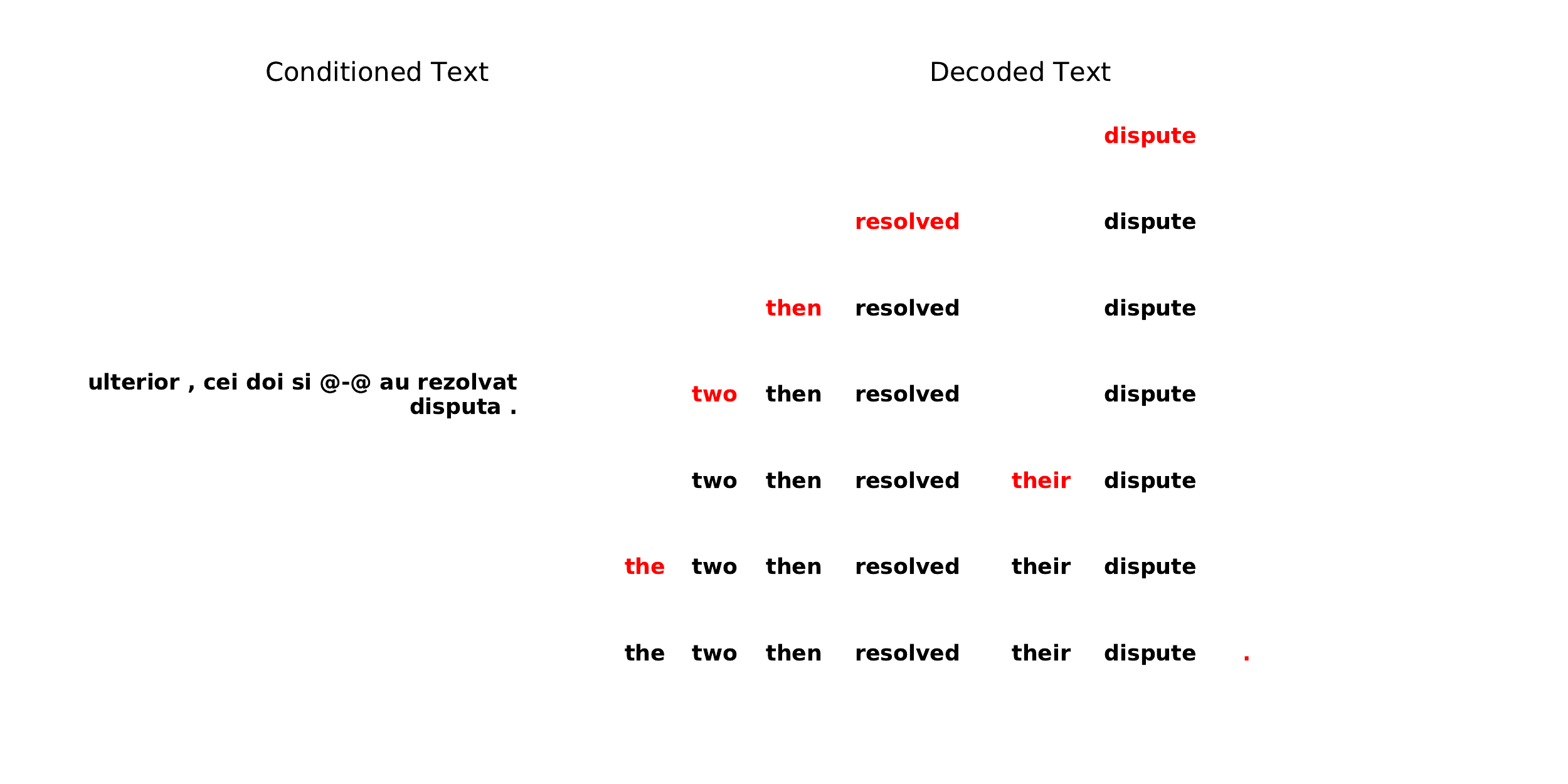}
    \caption{Generation order inferred by \textbf{Ours-VOI} for a text sample from the WMT validation set with the sample id $\mathbf{74}$.}
    \label{fig:wmt/74_ours}
\end{figure}

\begin{figure}[!htbp]
    \centering
    \includegraphics[width=\linewidth]{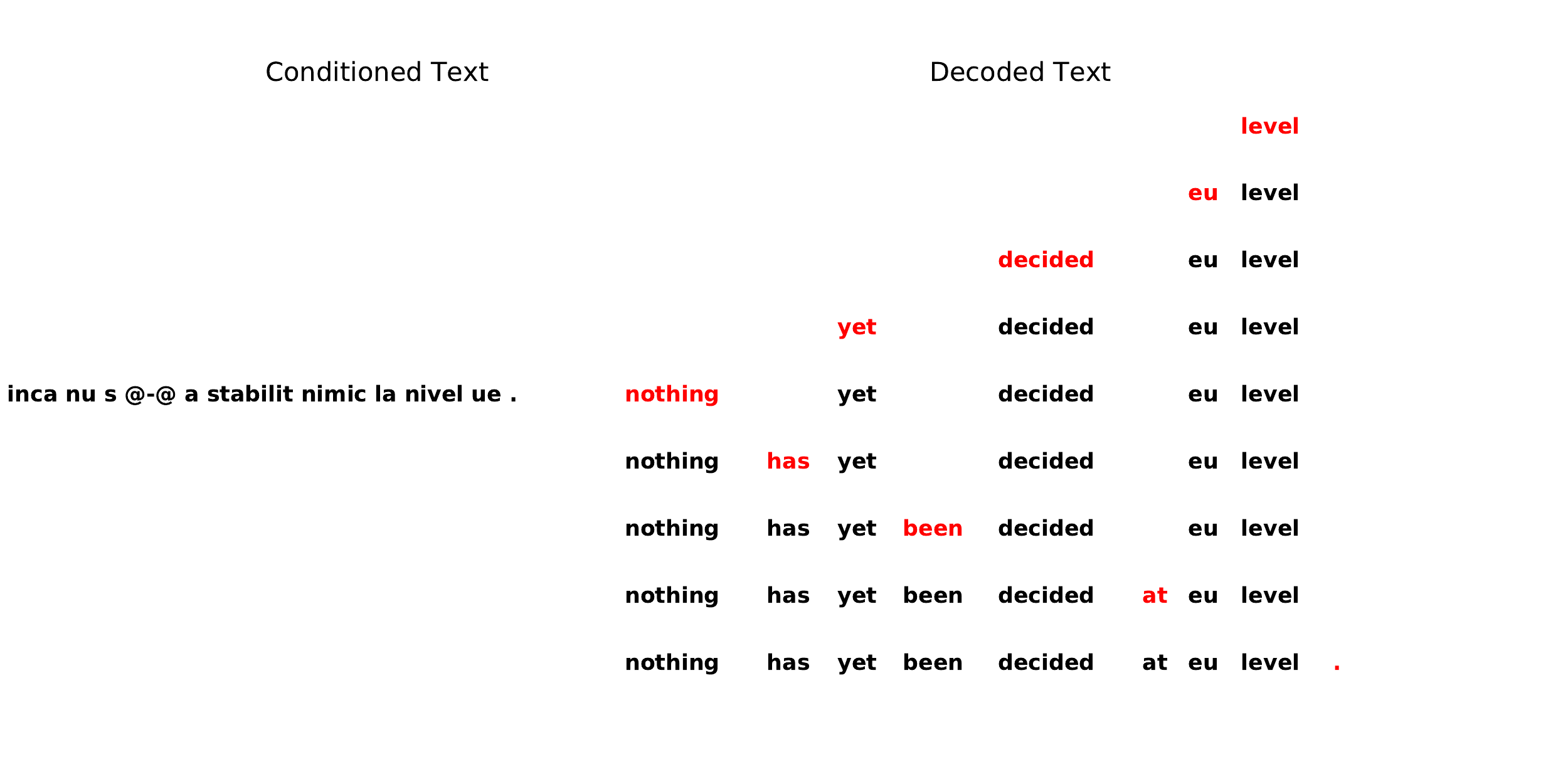}
    \caption{Generation order inferred by \textbf{Ours-VOI} for a text sample from the WMT validation set with the sample id $\mathbf{110}$.}
    \label{fig:wmt/110_ours}
\end{figure}

\begin{figure}[!htbp]
    \centering
    \includegraphics[width=\linewidth]{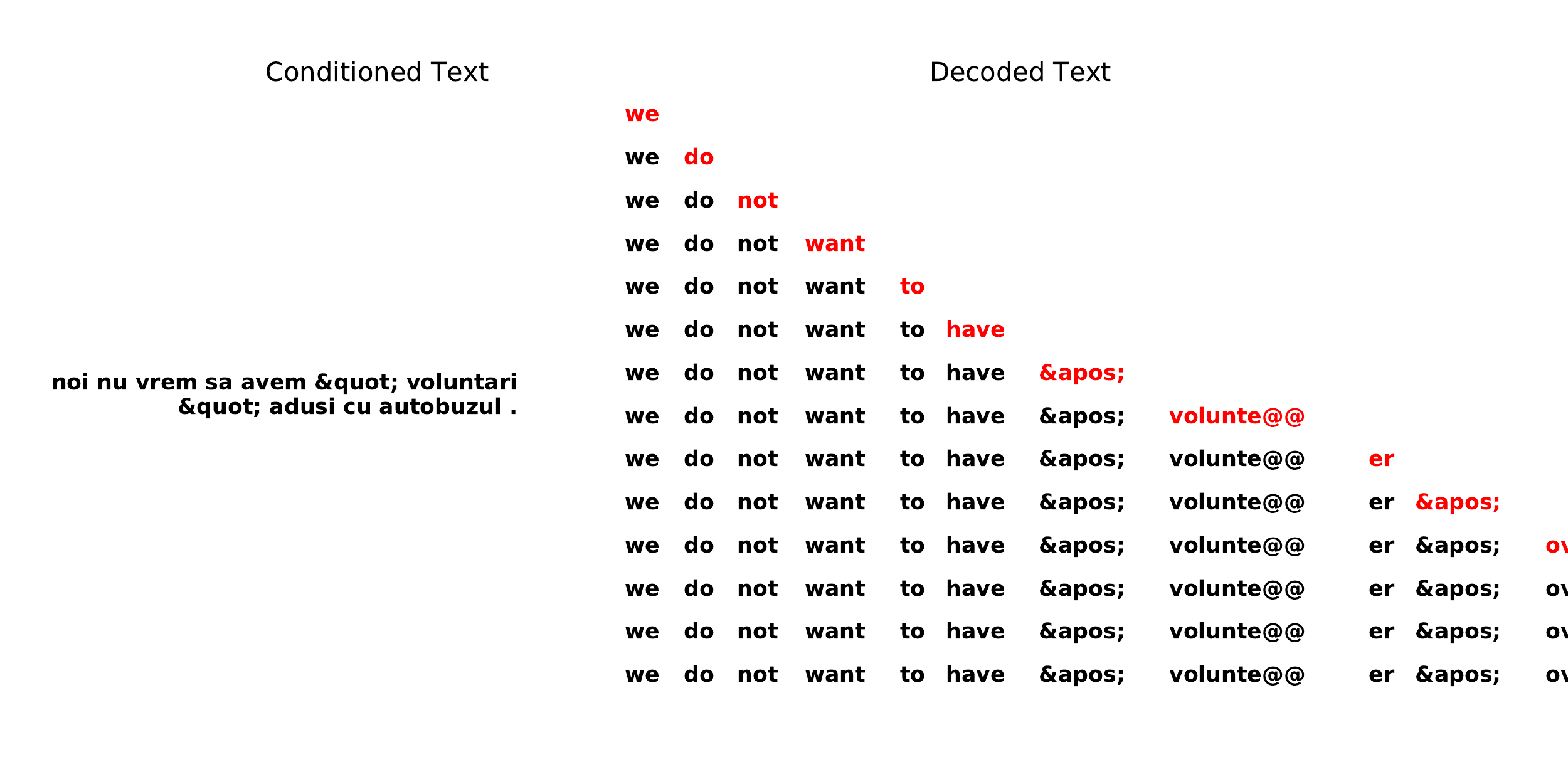}
    \caption{Generation order inferred by \textbf{Ours-L2R} for a text sample from the WMT validation set with the sample id $\mathbf{71}$.}
    \label{fig:wmt/71_l2r}
\end{figure}

\begin{figure}[!htbp]
    \centering
    \includegraphics[width=\linewidth]{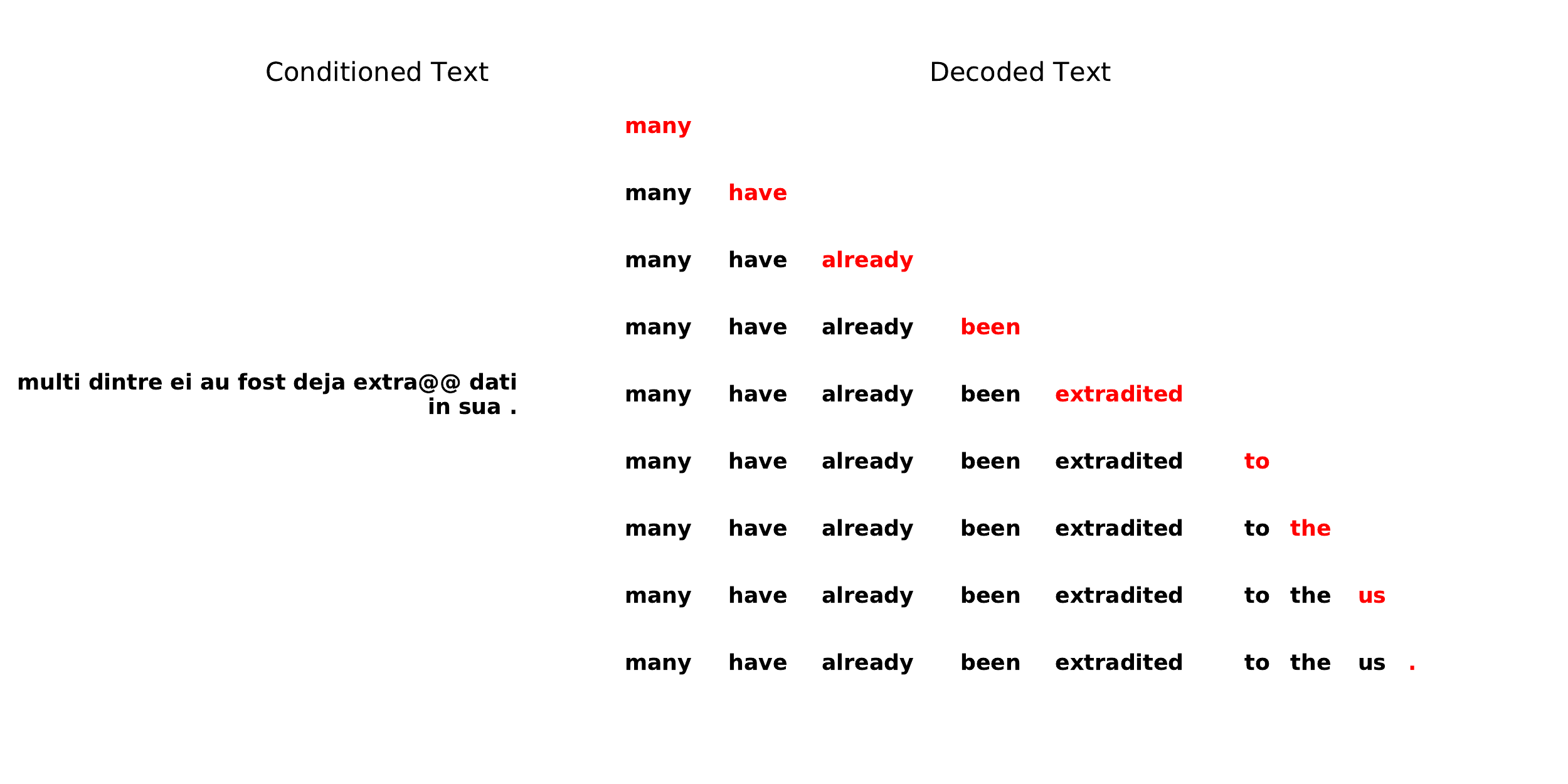}
    \caption{Generation order inferred by \textbf{Ours-L2R} for a text sample from the WMT validation set with the sample id $\mathbf{72}$.}
    \label{fig:wmt/72_l2r}
\end{figure}

\begin{figure}[!htbp]
    \centering
    \includegraphics[width=\linewidth]{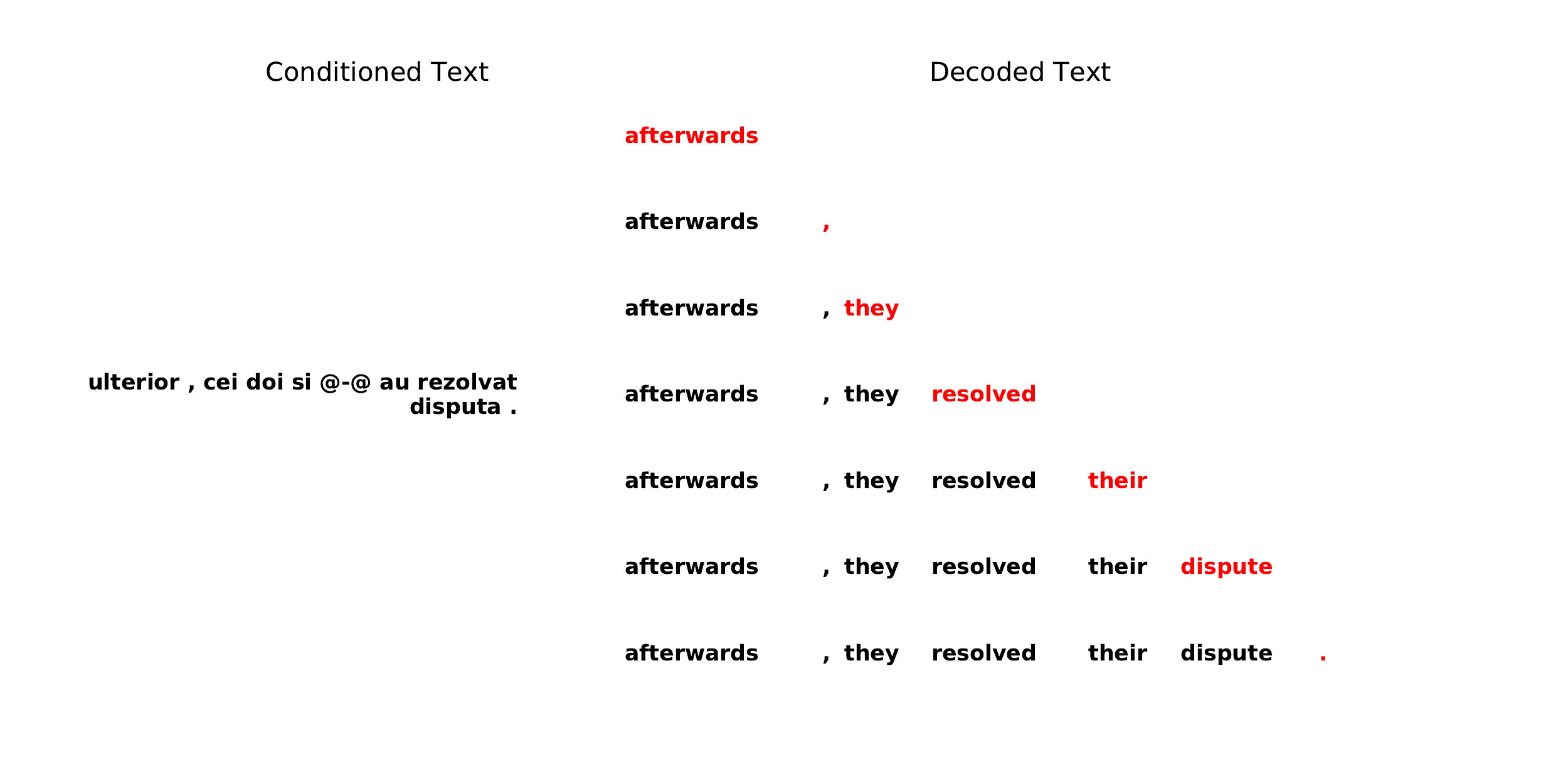}
    \caption{Generation order inferred by \textbf{Ours-L2R} for a text sample from the WMT validation set with the sample id $\mathbf{74}$.}
    \label{fig:wmt/74_l2r}
\end{figure}

\begin{figure}[!htbp]
    \centering
    \includegraphics[width=\linewidth]{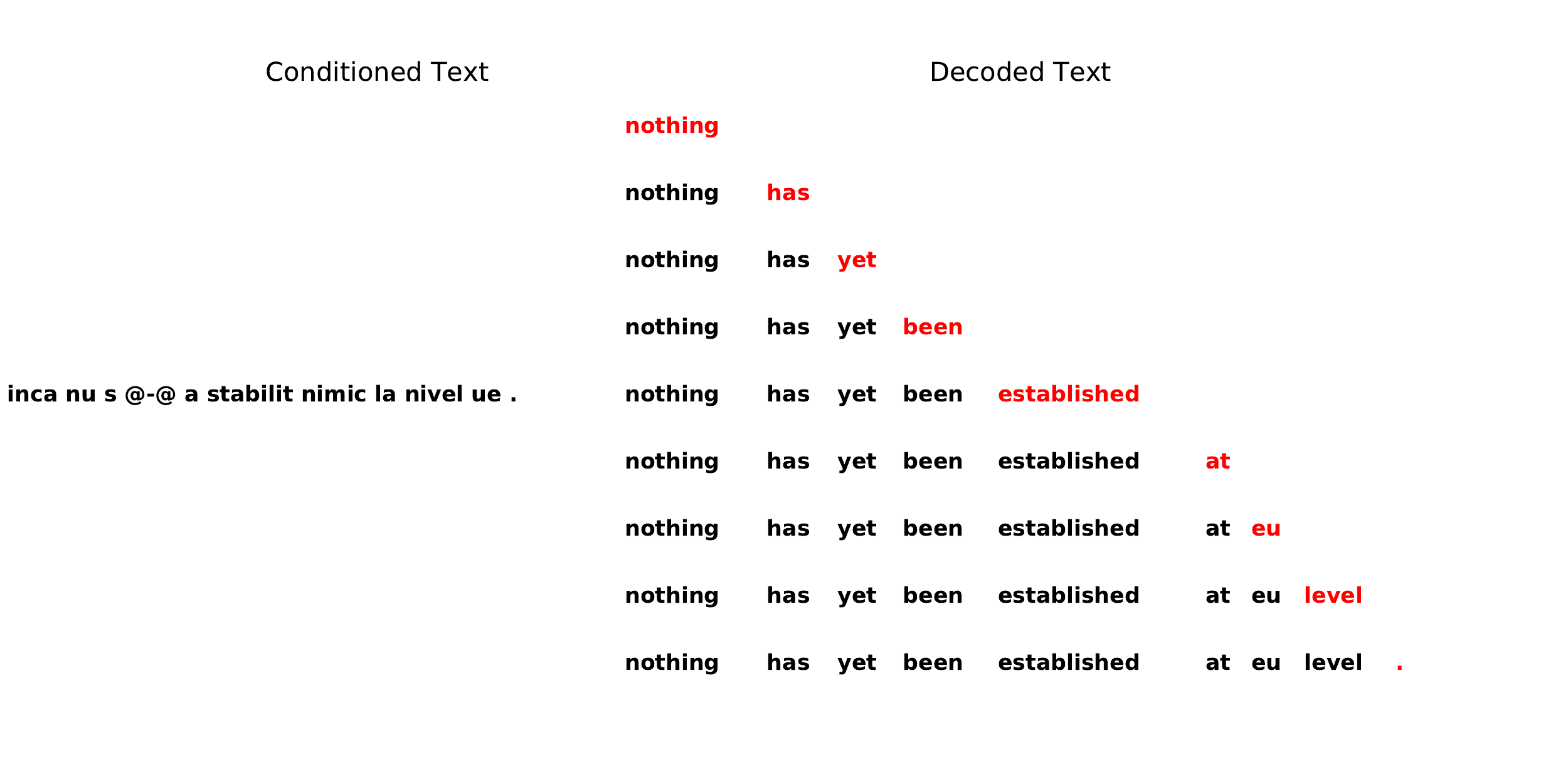}
    \caption{Generation order inferred by \textbf{Ours-L2R} for a text sample from the WMT validation set with the sample id $\mathbf{110}$.}
    \label{fig:wmt/110_l2r}
\end{figure}

\begin{figure}[!htbp]
    \centering
    \includegraphics[width=\linewidth]{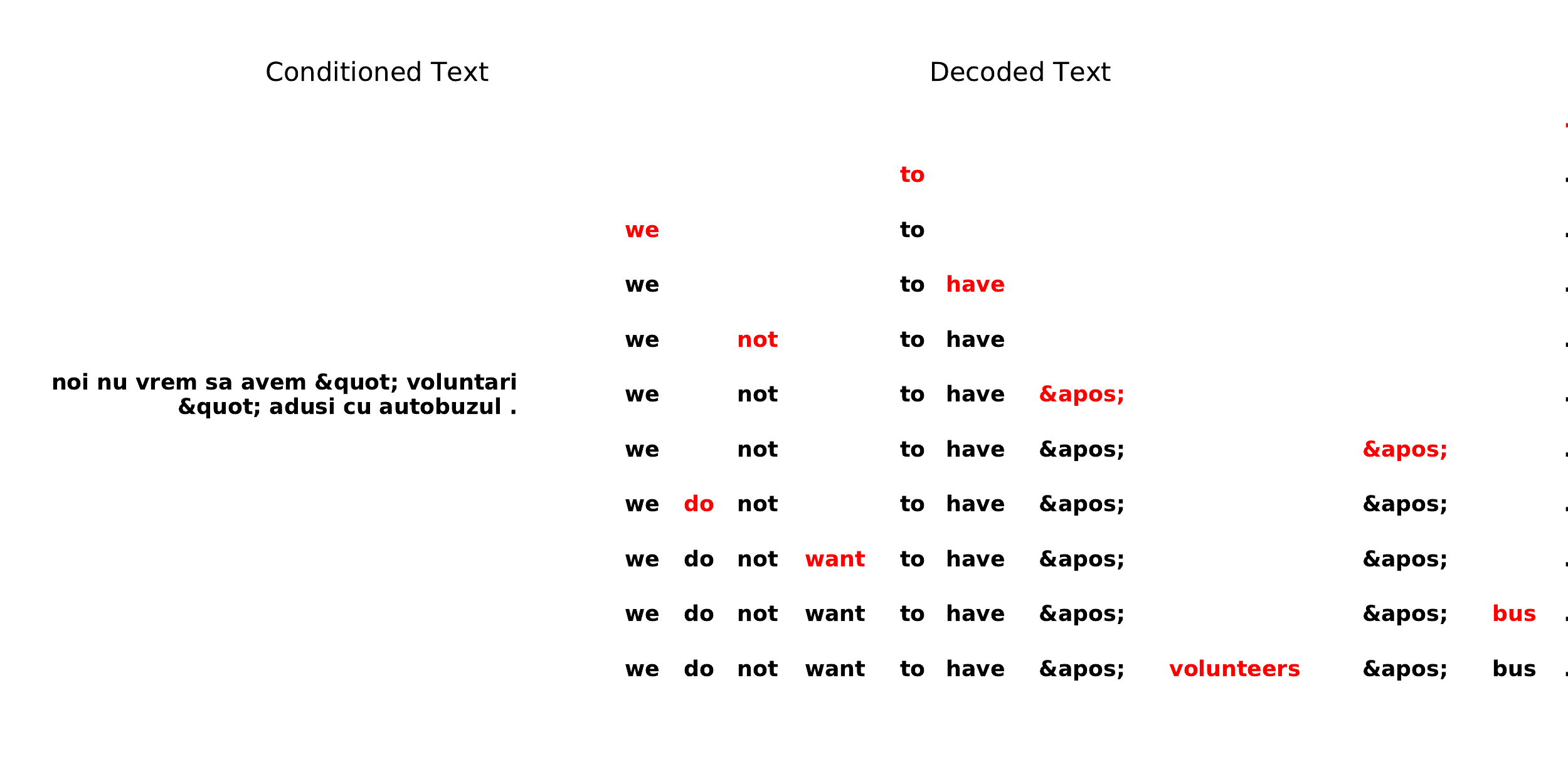}
    \caption{Generation order inferred by \textbf{Ours-Common} for a text sample from the WMT validation set with the sample id $\mathbf{71}$.}
    \label{fig:wmt/71_common}
\end{figure}

\begin{figure}[!htbp]
    \centering
    \includegraphics[width=\linewidth]{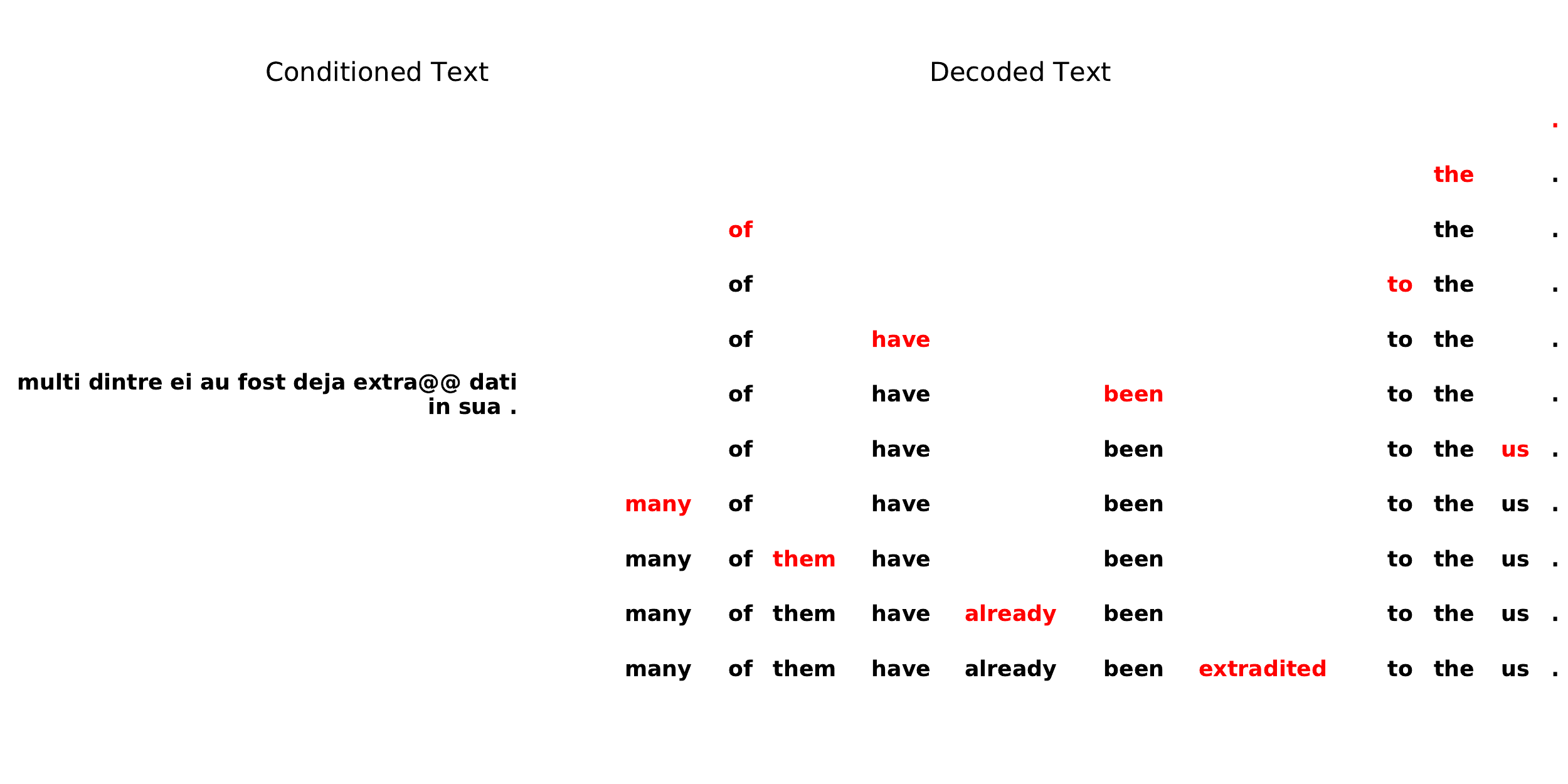}
    \caption{Generation order inferred by \textbf{Ours-Common} for a text sample from the WMT validation set with the sample id $\mathbf{72}$.}
    \label{fig:wmt/72_common}
\end{figure}

\begin{figure}[!htbp]
    \centering
    \includegraphics[width=\linewidth]{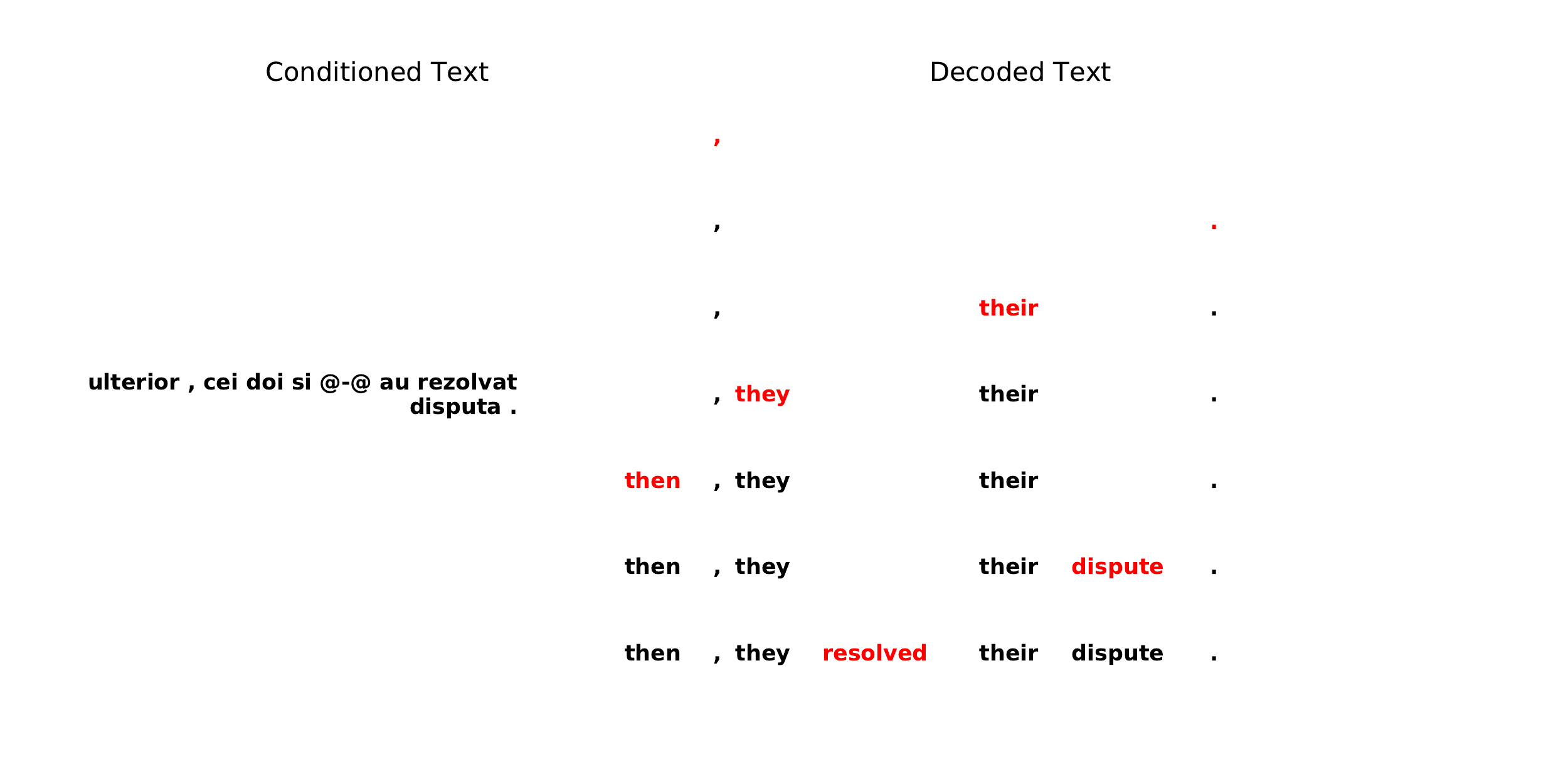}
    \caption{Generation order inferred by \textbf{Ours-Common} for a text sample from the WMT validation set with the sample id $\mathbf{74}$.}
    \label{fig:wmt/74_common}
\end{figure}

\begin{figure}[!htbp]
    \centering
    \includegraphics[width=\linewidth]{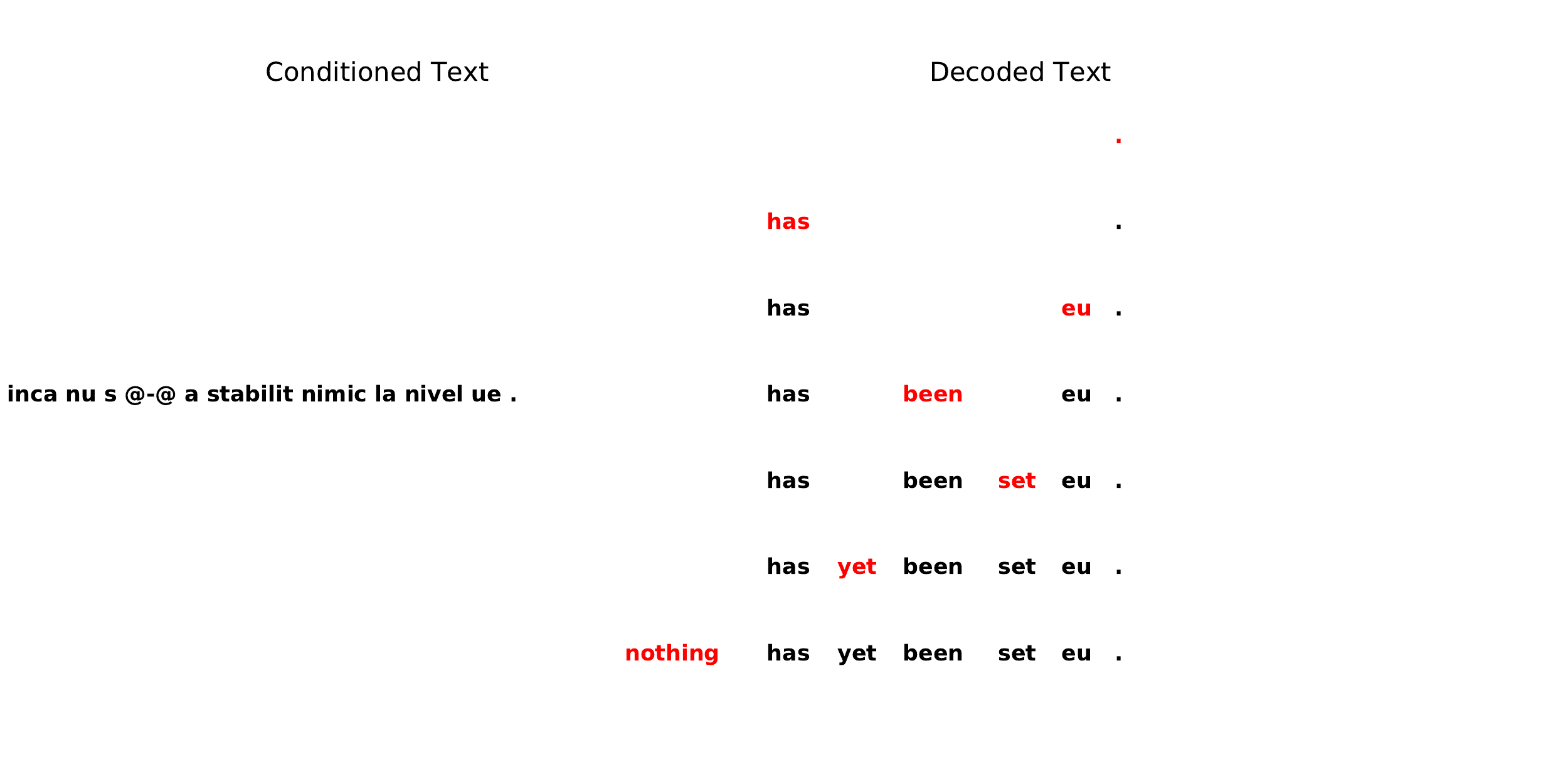}
    \caption{Generation order inferred by \textbf{Ours-Common} for a text sample from the WMT validation set with the sample id $\mathbf{110}$.}
    \label{fig:wmt/110_common}
\end{figure}

\begin{figure}[!htbp]
    \centering
    \includegraphics[width=\linewidth]{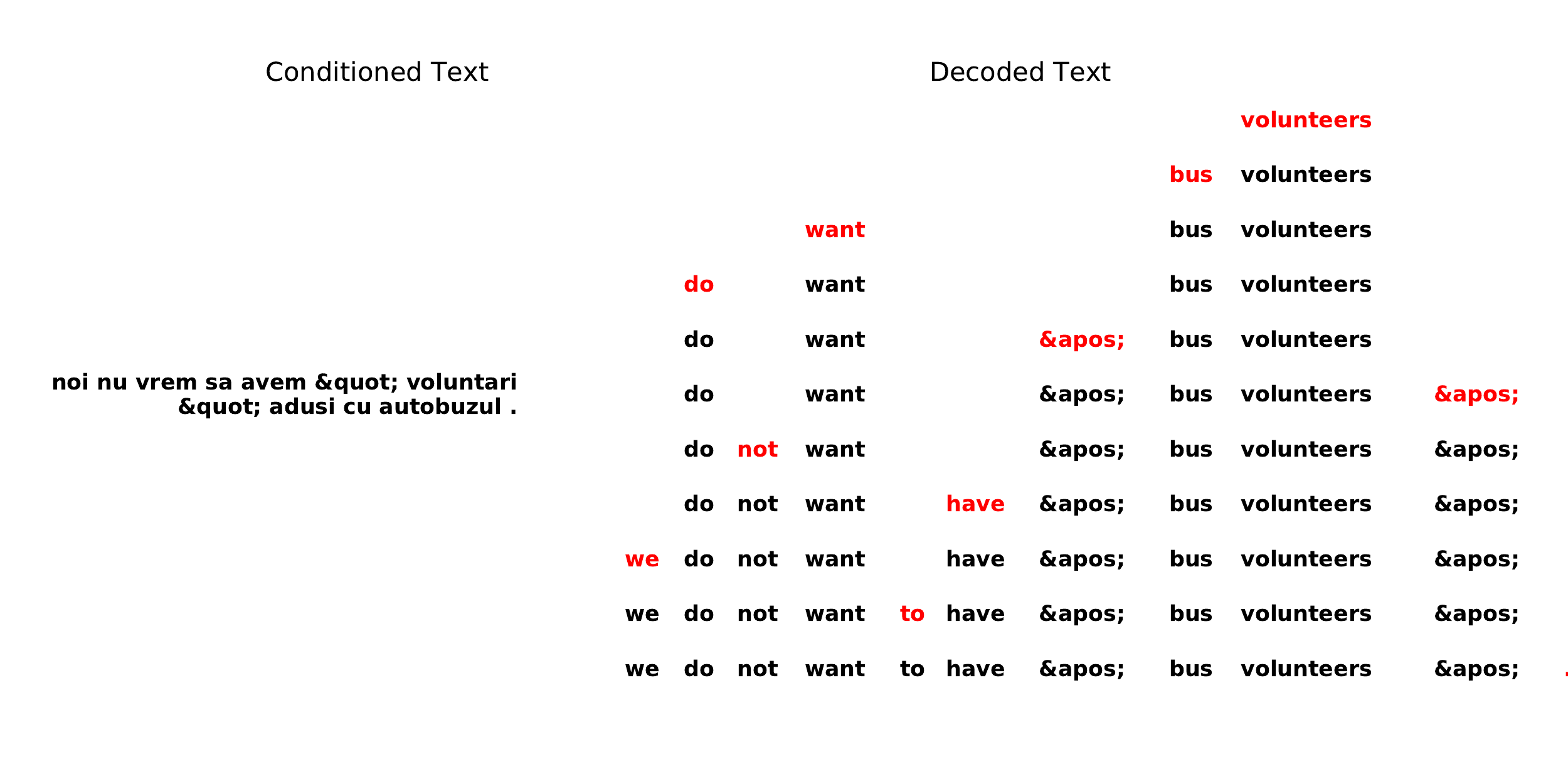}
    \caption{Generation order inferred by \textbf{Ours-Rare} for a text sample from the WMT validation set with the sample id $\mathbf{71}$.}
    \label{fig:wmt/71_rare}
\end{figure}

\begin{figure}[!htbp]
    \centering
    \includegraphics[width=\linewidth]{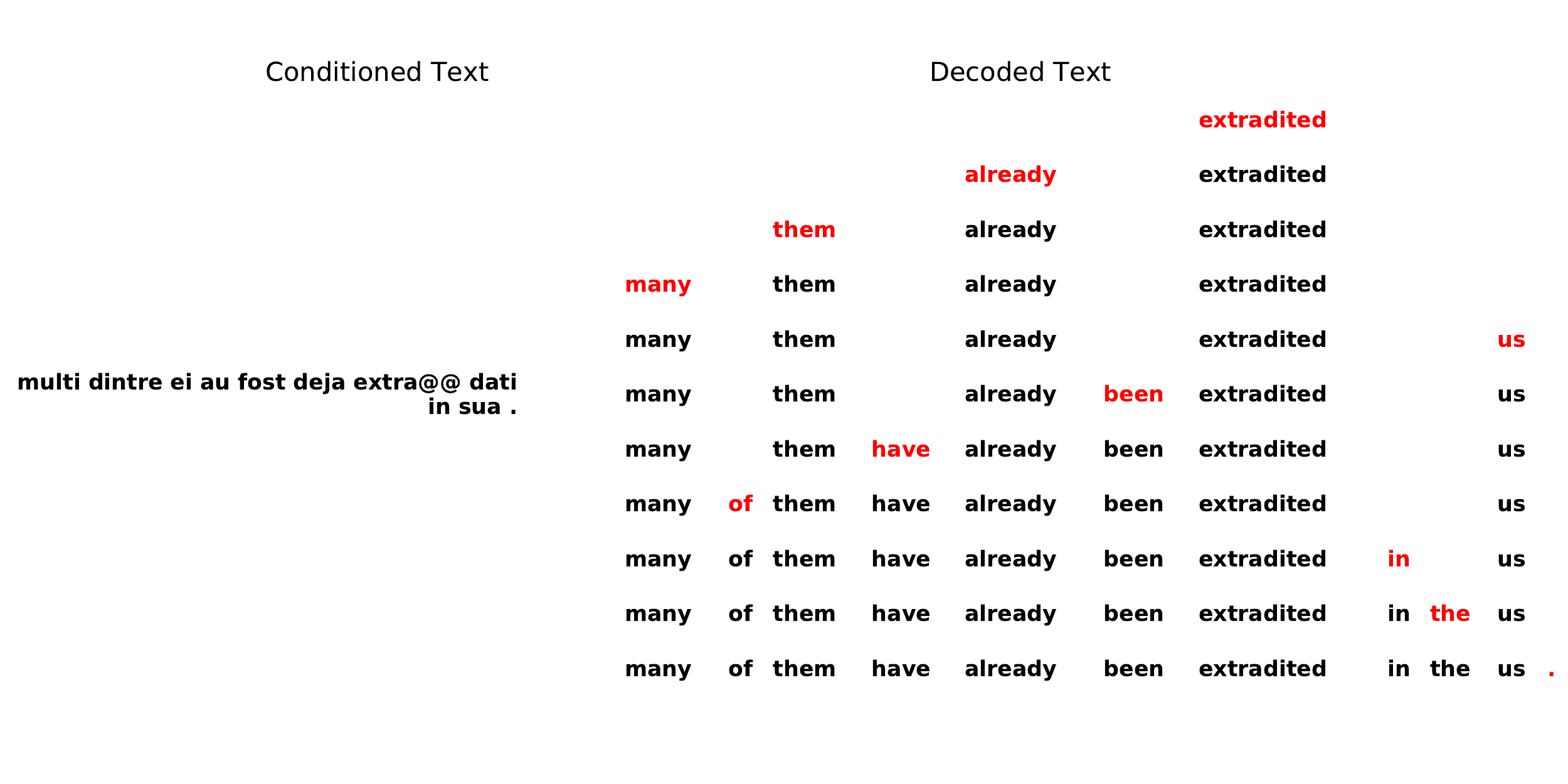}
    \caption{Generation order inferred by \textbf{Ours-Rare} for a text sample from the WMT validation set with the sample id $\mathbf{72}$.}
    \label{fig:wmt/72_rare}
\end{figure}

\begin{figure}[!htbp]
    \centering
    \includegraphics[width=\linewidth]{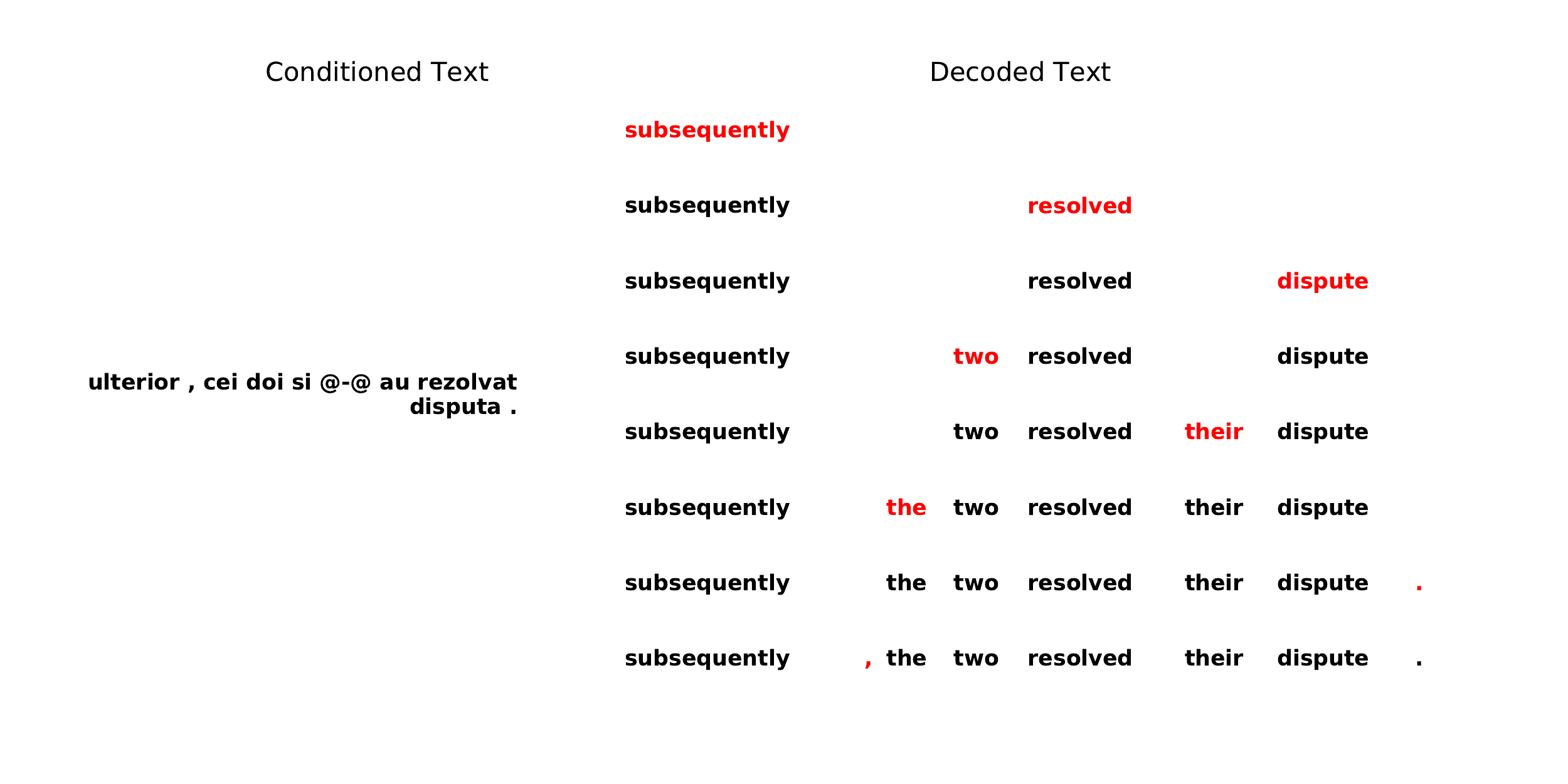}
    \caption{Generation order inferred by \textbf{Ours-Rare} for a text sample from the WMT validation set with the sample id $\mathbf{74}$.}
    \label{fig:wmt/74_rare}
\end{figure}

\begin{figure}[!htbp]
    \centering
    \includegraphics[width=\linewidth]{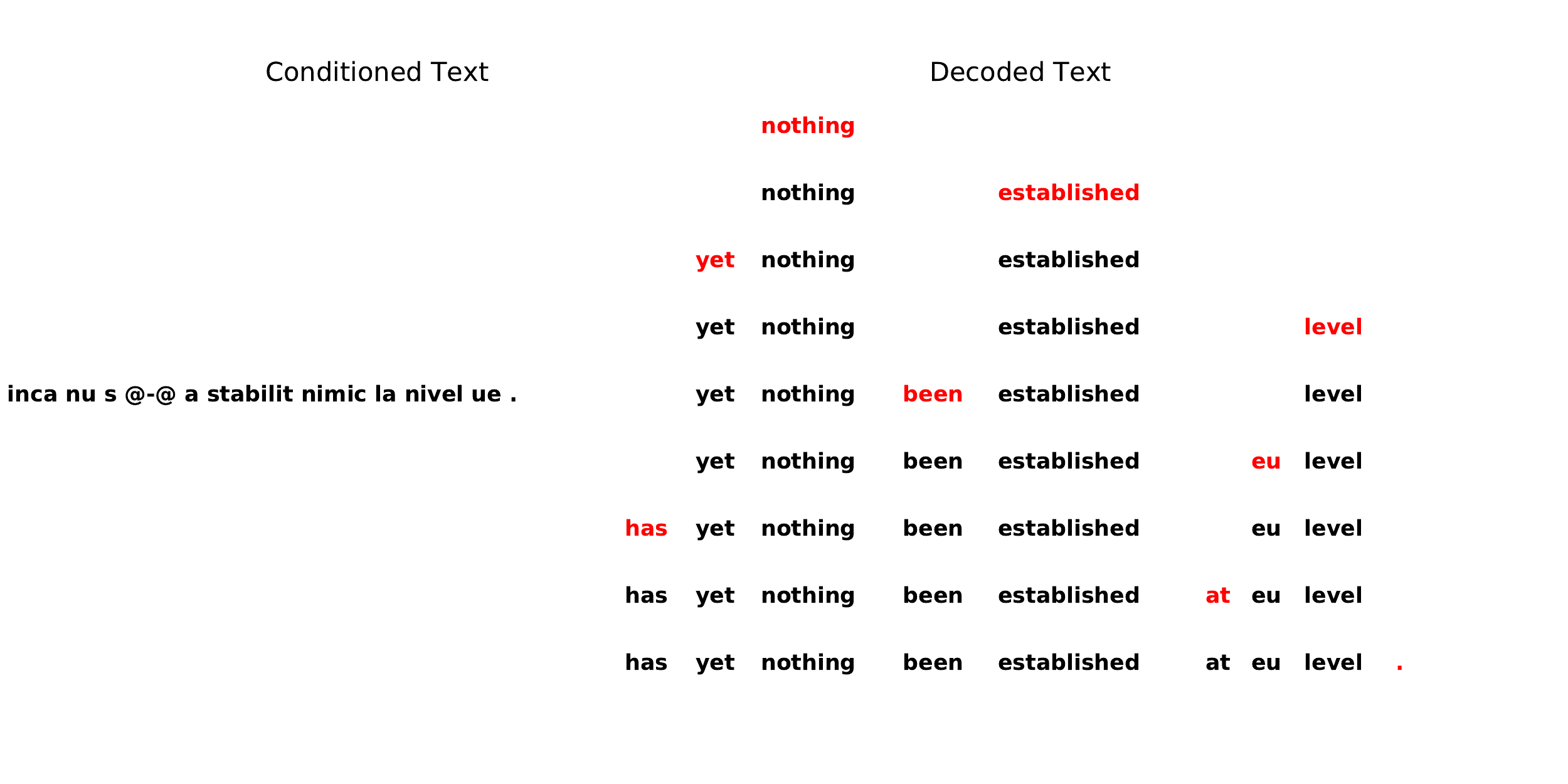}
    \caption{Generation order inferred by \textbf{Ours-Rare} for a text sample from the WMT validation set with the sample id $\mathbf{110}$.}
    \label{fig:wmt/110_rare}
\end{figure}

\label{app:visualization_app}
\end{document}